\definecolor{algoboxblue}{RGB}{220,230,250}
\newenvironment{ShadedAlgorithmBox}{\begin{tcolorbox}[colback=algoboxblue, colframe=algoboxblue, arc=-1pt]\begin{minipage}{0.975\textwidth}}{\end{minipage}\end{tcolorbox}}
\definecolor{quesboxgray}{rgb}{0.85,0.85,0.86}
\newenvironment{ShadedQuestionBox}{\begin{tcolorbox}[colback=quesboxgray, colframe=quesboxgray, arc=-1pt]\begin{minipage}{0.975\textwidth}}{\end{minipage}\end{tcolorbox}}
\theoremstyle{plain}
\newtheorem{theorem}{Theorem}[section]
\newtheorem{lemma}[theorem]{Lemma}
\newtheorem*{problem}{Problem}
\newtheorem*{question}{Question}
\theoremstyle{definition}
\newtheorem{definition}[theorem]{Definition}
\theoremstyle{remark}
\newtheorem{remark}[theorem]{Remark}
\DeclareMathOperator*{\argmin}{arg\,min}
\newcommand\naturalnumber{\mathbb{N}}
\newcommand\R{\mathbb{R}}
\newcommand\E{\mathbb{E}}
\newcommand\hpc{\mathcal{H}}
\newcommand\trueerrorrateh{\text{er}_{P}(h)}
\newcommand\trueerrorratehprime{\text{er}_{P}(h^{\prime})}
\newcommand\empiricalerrorrateh{\hat{\text{er}}_{S_{n}}(h)}
\newcommand\empiricalerrorratehprime{\hat{\text{er}}_{S_{n}}(h^{\prime})}
\newcommand\excessrisk{\mathcal{E}_{P}(h)}
\newcommand\pseudodistance{P(h\neq h^{\prime})}
\newcommand\empiricalpseudodistance{\hat{P}_{S_{n}}(h\neq h^{\prime})}
\newcommand\datasetindependent{S_{n}:=\{(x_{i}, y_{i})\}_{i=1}^{n}}
\newcommand\vcdim{\text{VC}(\mathcal{H})}
\def\ddefloop#1{\ifx\ddefloop#1\else\ddef{#1}\expandafter\ddefloop\fi}
\def\ddef#1{\expandafter\def\csname v#1\endcsname{\ensuremath{\boldsymbol{#1}}}}
\def\ddef#1{\expandafter\def\csname v#1\endcsname{\ensuremath{\boldsymbol{\csname #1\endcsname}}}}
\newlength\oversetwidth
\newlength\underwidth
\icmltitlerunning{Limits of Adaptation in Multitask Learning}
\begin{document}

\twocolumn[
  \icmltitle{When More Data Doesn't Help: Limits of Adaptation in Multitask Learning}



  \icmlsetsymbol{equal}{*}

  \begin{icmlauthorlist}
    \icmlauthor{Steve Hanneke}{purdue}
    \icmlauthor{Mingyue Xu}{purdue}
  \end{icmlauthorlist}

  \icmlaffiliation{purdue}{Department of Computer Science, Purdue University, West Lafayette, IN, USA}

  \icmlcorrespondingauthor{Mingyue Xu}{xu1864@purdue.edu}

  \icmlkeywords{Multitask Learning, Transfer Learning, Adaptivity, Statistical Learning Theory}

  \vskip 0.3in
]



\printAffiliationsAndNotice{}  

\begin{abstract}
Multitask learning and related frameworks have achieved tremendous success in modern applications. In multitask learning problem, we are given a set of heterogeneous datasets collected from related source tasks and hope to enhance the performance above what we could hope to achieve by solving each of them individually. The recent work of \citet{hanneke2022no} has showed that, without access to distributional information, no algorithm based on aggregating samples alone can guarantee optimal risk as long as the sample size per task is bounded. 

In this paper, we focus on understanding the statistical limits of multitask learning. We go beyond the no-free-lunch theorem in \citet{hanneke2022no} by establishing a stronger impossibility result of adaptation that holds for arbitrarily large sample size per task. This improvement conveys an important message that the hardness of multitask learning cannot be overcame by having abundant data per task. We also discuss the notion of optimal adaptivity that may be of future interests.
\end{abstract}

\section{Introduction}
  \label{sec:introduction}
Multitask learning and related concepts such as multi-source domain adaptation \citep{ben2006analysis,ben2010theory}, transfer learning \citep{kuzborskij2013stability,cai2021transfer}, and meta learning/learning-to-learn \citep{maurer2016benefit,finn2017model,collins2021exploiting}, to name a few, have become increasingly popular in modern applications of science and engineering, especially when data availability is an issue. Possibly, data generated by multiple related sources could be aggregated towards solving multiple learning tasks more efficiently than we could hope to solve each of them on their own. Such examples in reality include drug discovery where predicting a molecule’s efficacy and safety often benefits from jointly learning across many related assays and targets, robotics multi-skill learning where training perception and control jointly for several manipulation or navigation skills can accelerate learning new tasks, etc.

In this paper, we study the problem of multitask learning from a statistical perspective, where independent datasets $Z_{t}\sim P_{t}, t=1,2,\ldots$ are collected from multiple sources to be used for improving performance over learning individually. As a common belief, a single aggregation of datasets could benefit all tasks simultaneously. However, leveraging those noisy datasets might hurt the performance on target (any single such task). This motivates the notion of ``adaptivity'', asking for algorithms that can automatically identify good datasets, without possessing additional distributional information. The recent work of \citet{hanneke2022no} studied the hardness of multitask learning in terms of adaptivity. They first provided minimax upper and lower bounds for multitask learning rates accounting for the number of tasks, sample size per task, as well as certain notions of discrepancy between distributions. Their main result then revealed that adaptivity is impossible in general, namely, no learning procedure can achieve minimax rates as long as the sample size per task is constant and the learner has no additional information apart from the given multi-source data (see Subsection~\ref{subsec:background} for a detailed discussion). 

Our work is built on top of theirs. Specifically, we address an open question remained in their paper, that is, whether having more samples per task could perhaps yield some quantifiable advantages from having multiple sources (e.g., by comparing and ranking them). To make it clear, their negative results followed from a probabilistic construction of multitasks where sample size per task needs to be bounded in terms of the model parameters. Therefore, whether adaptivity is possible when having a larger number of samples per task remains open. Our work answers negatively to this conjecture by showing a stronger impossibility result for adaptation that holds for arbitrarily large sample size per task. Notably, this improvement is substantial as it conveys an important message that the failure of adaptivity in multitask learning cannot be overcame by collecting abundant data per source task. This is because, having enough randomness in multitask distributions and sufficiently many tasks makes it hard to identify the optimal aggregation of datasets, no matter how many samples can be used for comparison. Our theory reveals the following practical implication: for scientists who hope to design (uniformly) better multitask learning algorithms, they need to somehow leverage additional information rather than merely rely on the collected data. For instance, knowing which datasets are less contaminated or which sources are more relevant to the target environment would be helpful. Our proof is technical, involving an application of the Fano’s method together with a tight upper bound on the KL divergence between mixture distributions. Furthermore, our results raise a fundamental problem of optimal adaptivity, i.e., what is the optimal adaptive rates (optimal rates achieved by algorithms with only access to multisource datasets) for multitask learning and what adaptive algorithms can achieve such optimal adaptive rates. We leave it as an interesting open problem for future works.

\subsection{Related Works}
  \label{subsec:related-works}
Theory of multitask learning, dating back to \citet{baxter2000model}, has been extensively explored in the past. Below, we provide a brief overview of existing theoretical works on multitask learning and related areas. \citet{crammer2008learning,ben2008notion,ben2006analysis,ben2010theory} built theoretical works on multitask learning based on distributional measures such as $d_{\mathcal{A}}$-divergence and the $\mathcal{Y}$-discrepancy. \citet{mansour2009multiple} investigated the optimal data aggregation in multitask learning using a notion of non-metric discrepancy called the R\'enyi divergence. Ideally speaking, aggregation of multitask datasets could possibly benefit all tasks simultaneously. This has motivated a line of works \citep{maurer2013sparse,pontil2013excess,maurer2016benefit} aiming to provide guarantees on average risk across tasks. For the adversarial robustness of multitask learning, the works of \citet{qiao2018outliers,qiao2018learning} have derived positive and negative results under various adversarial corruption in datasets. For multitask PAC learning with adversarial corruptions, \citet{konstantinov2020sample} showed that there is a learning algorithm that overcomes the effect of corruptions and approaches the optimal test error as long as less than half of the sources are manipulated. Another bulk of theoretical works focused on the topic of multitask meta/representation learning where tasks are related by a shared latent representation/subspace \citep{muandet2013domain,balcan2015efficient,tripuraneni2020theory,dufew,tripuraneni2021provable,duchi2022subspace,collins2022maml,bairaktari2023multitask,aliakbarpour2024metalearning,yuan2025efficient}. Such settings are often formalized with particular structural assumptions on the shared substructures (e.g., a low-dimensional linear mapping), which are not the focus of the present work. Apart from what has already been discussed, there are other relevant literature includes online multitask learning \citep{balcan2019provable,denevi2019learning}, federated/distributed learning \citep{yin2018byzantine,zhu2023byzantine}, and distance-based multitask learning \citep{duan2023adaptive,gu2025robust}.

\subsection{Organization}
  \label{subsec:organization}
The remaining of the paper is organized as follows. In Section~\ref{sec:preliminaries}, we formally describe the problem setup for multitask learning and introduce necessary definitions. In Section~\ref{sec:main-results}, we review the results of the previous work and outline our contributions relatively. In Section~\ref{sec:toy-example}, we study the simple setting of agnostic multitask learning and show that adaptivity is impossible even when having unlimited data per task. In Section~\ref{sec:general-new-setting}, we present our main results. Our main contribution is an extended impossibility result for adaptation under a general label noise assumption called the Bernstein class condition. We also include a detailed technical overview of the proof. In Section~\ref{sec:conclusion-and-future-directions}, we conclude the paper and propose some interesting future research directions. All proofs are deferred to the appendix.

\section{Preliminaries}
  \label{sec:preliminaries}
We consider a binary classification setting. Let $\mathcal{X}$ be an instance space and $\mathcal{Y}=\{0,1\}$. Let $\hpc\subseteq\mathcal{Y}^{\mathcal{X}}$ be a concept class with bounded VC dimension ($\vcdim=d_{\hpc}<\infty$), which we consider fixed in all subsequent discussions. Let $P$ be a data distribution on $\mathcal{X}\times\mathcal{Y}$, let $S_{n}=\{(x_{i}, y_{i})\}_{i=1}^{n}\in(\mathcal{X}\times\mathcal{Y})^{n}$ be a dataset. For any concept $h\in\hpc$, define its error rate as $\trueerrorrateh=P\{(x,y): h(x)\neq y\}$ and its empirical error rate (on $S_{n}$) as $\empiricalerrorrateh=\frac{1}{n}\sum_{(x_{i},y_{i})\in S_{n}}\mathbbm{1}\{h(x_{i})\neq y_{i}\}$. For any pair $h,h^{\prime}\in\hpc$, we define the excess risk of $h$ over $h^{\prime}$ as $\mathcal{E}_{P}(h,h^{\prime})=\trueerrorrateh-\trueerrorratehprime$. Moreover, the excess risk of $h$ (over the best in class) is defined as $\excessrisk=\trueerrorrateh-\inf_{h^{\prime}\in\hpc}\trueerrorratehprime$. Define the empirical excess risk of $h$ over $h^{\prime}$ (on $S_{n}$) as $\hat{\mathcal{E}}_{S_{n}}(h,h^{\prime})=\empiricalerrorrateh-\empiricalerrorratehprime$. Finally, the $L_{1}(P_{\mathcal{X}})$ pseudo-distance between $h$ and $h^{\prime}$ is defined as $\pseudodistance=P_{\mathcal{X}}\{x\in\mathcal{X}:h(x)\neq h^{\prime}(x)\}$, and also the empirical pseudo-distance between $h$ and $h^{\prime}$ (on $S_{n}$) is defined as $\empiricalpseudodistance=\frac{1}{n}\sum_{(x_{i},y_{i})\in S_{n}}\mathbbm{1}\{h(x_{i})\neq h^{\prime}(x_{i})\}$. For notational simplicity, we denote an \emph{Empirical Risk Minimization (ERM) learner} on $S_{n}$ by its output $\hat{h}_{S_{n}}=\argmin_{h\in\hpc}\empiricalerrorrateh$.

The Bernstein class condition \citep{bartlett2004local}, is a well-known relaxation of the Tsybakov noise condition \citep{mammen1999smooth,tsybakov2004optimal}, about which there is a substantial literature \citep[e.g.][]{bartlett2006convexity,bartlett2006empirical,massart2006risk,koltchinskii2006local,hanneke2011rates,van2015fast,hanneke2022no}. It captures a continuum from easy to hard classification, namely, the best achievable excess risk with sample size $n$ can be shown to be of order $n^{-1/(2-\beta)}$, controlled by $\beta\in[0,1]$, which interpolates between $n^{-1}$ and $n^{-1/2}$. It is formally defined as follow.

\begin{definition}  [\textbf{Bernstein class condition}]
  \label{def:Bernstein-class-condition}
Let $\beta\in[0,1]$ and $C_{\beta}\geq2$. We say that a distribution $P$ satisfies a \emph{Bernstein class condition} with parameters $(C_{\beta},\beta)$ if
\begin{equation*}
    P(h\neq h^{*}_{P}) \leq C_{\beta}\cdot\left(\excessrisk\right)^{\beta} , \;\; \forall h\in\hpc ,
\end{equation*}
where $h^{*}_{P}$ satisfies $h^{*}_{P}\in\argmin_{h\in\hpc}\trueerrorrateh$.
\end{definition}

Note that the above Bernstein class condition always holds when $\beta=0$ since $P(h\neq h^{*}_{P})\leq 1$. Moreover, $P(h\neq h^{*}_{P})\geq\excessrisk$ always holds. It is worth mentioning that, when the Bayes-optimal classifier is assumed in class, the Tsybakov noise condition implies the Bernstein class condition. Also, the classic Massart’s noise condition implies a linear Bernstein class condition, i.e., $\beta=1$. Our next definition extends the intuition that how well one can do in a multitask learning problem depend on how closely two distributions relate to each other. We introduce the notion of \emph{transfer exponent}, which is commonly adopted in the literature of transfer learning and multitask learning \citep{hanneke2019value,hanneke2022no,hanneke2023limits,hanneke2024adaptive,kalan2025transfer}.

\begin{definition}  [\textbf{Transfer exponent}]
  \label{def:transfer-exponent}
Let $\rho>0$ and $C_{\rho}\geq2$. We say that a distribution $P$ has a \emph{transfer exponent} $\rho$ to a distribution $Q$ with respect to $\hpc$ if
\begin{equation*}
    \mathcal{E}_{Q}(h) \leq C_{\rho}\cdot\left(\excessrisk\right)^{1/\rho} , \;\; \forall h\in\hpc .
\end{equation*}
\end{definition}

Note that for any pair of distributions $(P,Q)$, $P$ always has a transfer exponent $\rho=\infty$ to $Q$. Indeed, it has been shown that the transfer exponent tightly captures the minimax rates of transfer learning \citep{hanneke2019value} as well as the minimax rates of multitask learning \citep{hanneke2022no}. Concretely, a transfer exponent $\rho$ reveals a size $n_{P}^{1/\rho}$ data contributes into the transfer learning from $P$ to $Q$. Accordingly, $\rho\rightarrow\infty$ implies the growing difficulty of transfer, and $\rho<1$ implies that source data could be more useful than target data for learning, which is called super-transfer. 

With those aforementioned definitions, we are now able to formalize the problem of multitask learning. We consider a setting where there are $N$ source environments associated with source distributions $\{P_{t}\}_{t=1}^{N}$ and a target environment associated with a target distribution $\mathcal{D}$. For each $1\leq t\leq N$, we have a dataset $Z_{t}$ i.i.d.\ from $P_{t}$ with $|Z_{t}|=n_{t}$. For the target $\mathcal{D}$, we have an i.i.d.\ dataset $Z_{\mathcal{D}}$ with $|Z_{\mathcal{D}}|=n_{\mathcal{D}}$. The goal is to aggregate multiple source datasets $\{Z_{t}\}_{t=1}^{N}$ towards improving the learning performance on target $\mathcal{D}$. 

Moreover, we make the following assumptions:
\setlist{nolistsep}
\begin{itemize}
    \item[$(1)$] All data distributions induce the same optimal classifier, i.e., $\exists h^{*}\in\hpc$ such that $h^{*}\in\argmin_{h\in\hpc}\excessrisk$, $\forall P\in\{P_{1},\ldots,P_{N},\mathcal{D}\}$.
    \item[$(2)$] For each $1\leq t\leq N$, source $P_{t}$ has a transfer exponent $\rho_{t}$ to the target distribution $\mathcal{D}$ with respect to $\hpc$, with a constant $C_{\rho}\geq2$. Note that $\mathcal{D}$ always has a transfer exponent $\rho_{\mathcal{D}}=1$ to itself.
    \item[$(3)$] All sources $P_{t}, \forall 1\leq t\leq N$ and target $\mathcal{D}$ satisfy a Bernstein class condition with parameters $(C_{\beta},\beta)$.
\end{itemize}
We would like to emphasize that, the stronger the assumptions are, the stronger a negative result (lower bound) is. Taking the assumption (1) as an instance, our no-free-lunch theorem implies that adaptation is impossible even if all multitask distributions share the same optimal classifier. Hence, adaptation is certainly impossible when $h^{*}$ differs across tasks. On the other hand, as discussed earlier, with additional structural assumptions, (1) is usually relaxed to an assumption that multiple models share a common feature which is called representation \citep{ando2005framework,argyriou2008convex,kumar2012learning,dufew,tripuraneni2021provable}. Since the main focus of this work is to develop a fundamental guarantee on statistical learning, we make a more general assumption as adpoted in \citet{hanneke2022no} that all tasks share the same optimal classifier, not just a common representation. This scenario is also of independent interest in other areas, e.g., invariant risk minimization \citep{arjovsky2019invariant}. For notational simplicity, we can write $P_{N+1}=\mathcal{D}$, $n_{N+1}=n_{\mathcal{D}}$, $Z_{N+1}=Z_{\mathcal{D}}$ and $\rho_{N+1}=\rho_{\mathcal{D}}=1$. Throughout the paper, we will be interested in the order statistics of transfer exponents defined in (2), i.e., $\rho_{(1)}\leq\rho_{(2)}\leq\ldots\leq\rho_{(N+1)}$. Let $P_{(t)}, Z_{(t)}, n_{(t)}$ denote the corresponding distribution, dataset and sample size indexed by $(t)$ in order. We define the \emph{averaged transfer exponent} as $\bar{\rho}_{t}=\sum_{s\in[t]}n_{(s)}\rho_{(s)}/\sum_{s\in[t]}n_{(s)}$ for any $1\leq t\leq (N+1)$. Let $\mathcal{M}=\mathcal{M}(C_{\beta},\beta,C_{\rho},\{\rho_{t}\}_{t\in[N+1]},\{Z_{t}\}_{t\in[N+1]})$ represent the class of multitasks that satisfies (1) -- (3). Proven by \citet{hanneke2022no}, the minimax rates of multitask learning are
\begin{equation*}
    \inf_{\hat{h}}\sup_{\Pi\in\mathcal{M}}\E_{\Pi}\Big[\mathcal{E}_{\mathcal{D}}(\hat{h})\Big] \asymp \min_{t\in[N+1]}\Big(\sum_{s\in[t]}n_{(s)}\Big)^{-1/(2-\beta)\bar{\rho}_{t}} ,
\end{equation*}
where $\hat{h}$ is any multitask learner possessing the knowledge of $\mathcal{M}$ and data $Z\sim\Pi$.

\section{Main Results}
  \label{sec:main-results}
Consider a simple scenario where a concept class contains only two hypotheses $\hpc=\{h_{0}, h_{1}\}$ mapping from $\mathcal{X}=\{x_{0},x_{1}\}$ to $\mathcal{Y}=\{0,1\}$. Assume that $h_{0}(x_{0})=h_{1}(x_{0})$, $h_{0}(x_{1})=0$ and $h_{1}(x_{1})=1$. Also, assume that the shared target concept $h^{*}\in\hpc$ satisfies $h^{*}(x_{1})=y^{*}\in\{0,1\}$ and denote the remaining concept by $h\in\hpc\setminus\{h^{*}\}$, i.e., $\hpc=\{h^{*},h\}$. The following lemma interpret the Bernstein class condition (Definition~\ref{def:Bernstein-class-condition}) under this simple setting.

\begin{lemma}
  \label{lem:bcc-and-te-requirements}
Let $P$ be any distribution supported on $\mathcal{X}\times\mathcal{Y}$ such that $h^{*}=\argmin_{h\in\hpc}\trueerrorrateh$. Then, $P$ satisfies the Bernstein class condition (Definition~\ref{def:Bernstein-class-condition}) if and only if
\begin{equation*}
    P(Y=y^{*}|X=x_{1}) \geq \frac{1}{2}\left(1 + C_{\beta}^{-1/\beta}\left[P_{X}(x_{1})\right]^{1/\beta-1}\right) .
\end{equation*}
\end{lemma}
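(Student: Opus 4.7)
The plan is to reduce the Bernstein class condition to a single nontrivial inequality by exploiting the fact that $\hpc = \{h^*, h\}$ contains only two hypotheses, and then solve that inequality for $P(Y = y^* \mid X = x_1)$. Since the condition holds trivially at $h = h^*$ (both sides vanish), I only need to analyze the single alternative hypothesis $h \in \hpc \setminus \{h^*\}$.

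First I would compute the two quantities appearing in Definition~\ref{def:Bernstein-class-condition} explicitly in this two-point, two-hypothesis model. Because $h$ and $h^*$ agree on $x_0$ and differ only at $x_1$, the pseudo-distance is simply $P(h \neq h^*) = P_{\mathcal{X}}(x_1)$. For the excess risk, the $x_0$-contribution cancels, and on $x_1$ we have $h^*(x_1) = y^*$ while $h(x_1) = 1 - y^*$, so
\begin{equation*}
\excessrisk \;=\; P_{\mathcal{X}}(x_1)\bigl(2\,P(Y = y^* \mid X = x_1) - 1\bigr).
\end{equation*}
(In particular, the hypothesis $h^*$ being a risk minimizer already forces $P(Y = y^* \mid X = x_1) \geq 1/2$, so the base of the $\beta$-power is nonnegative and the manipulation below is well-defined.)

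Next I would substitute these two expressions into the Bernstein inequality $P(h \neq h^*) \leq C_\beta (\excessrisk)^\beta$, giving
\begin{equation*}
P_{\mathcal{X}}(x_1) \;\leq\; C_\beta\, \bigl[P_{\mathcal{X}}(x_1)\bigr]^\beta \bigl(2\,P(Y = y^* \mid X = x_1) - 1\bigr)^\beta.
\end{equation*}
Dividing through by $C_\beta [P_{\mathcal{X}}(x_1)]^\beta$, raising to the power $1/\beta$, and rearranging yields the stated bound
\begin{equation*}
P(Y = y^* \mid X = x_1) \;\geq\; \tfrac{1}{2}\Bigl(1 + C_\beta^{-1/\beta}\bigl[P_{\mathcal{X}}(x_1)\bigr]^{1/\beta - 1}\Bigr).
\end{equation*}
Each step is an equivalence (both sides are nonnegative under the risk-minimizer assumption), so this simultaneously proves the ``only if'' and ``if'' directions.

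There is no real obstacle here; the lemma is purely computational once one notes that $\hpc$ has only two elements, so the universal quantifier in the Bernstein condition collapses to a single inequality. The only subtlety to flag in the write-up is the edge case $P_{\mathcal{X}}(x_1) = 0$, where both sides of the Bernstein inequality are zero and the claimed bound degenerates; and the fact that for $\beta < 1$ one must interpret the exponent $1/\beta - 1$ carefully when $P_{\mathcal{X}}(x_1) = 0$, which can be handled by the convention $0^{1/\beta - 1} = 0$ for $\beta \in (0,1)$.
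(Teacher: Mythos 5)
Your proposal is correct and follows essentially the same route as the paper's proof: reduce the universal quantifier to the single hypothesis $h \neq h^*$, compute $P(h \neq h^*) = P_X(x_1)$ and $\mathcal{E}_P(h) = P_X(x_1)\bigl(2P(Y=y^*\mid X=x_1) - 1\bigr)$, substitute into the Bernstein inequality, and solve algebraically. Your write-up is a bit more careful than the paper's in noting why the manipulations are equivalences (nonnegativity of both sides, well-definedness via $h^*$ being a risk minimizer) and in flagging the $P_X(x_1)=0$ edge case, but the underlying argument is the same.
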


\subsection{Background}
  \label{subsec:background}
Before we proceed to the statement of our main results, we elucidate the motivation of this work. To this end, we start by reviewing the recent work of \citet{hanneke2022no} which provides a no-free-lunch type theorem for multitask learning, namely, besides certain regimes of adaptivity (e.g., $\beta=1$), no adaptive algorithm can achieve the minimax rates without possessing distributional information. To formalize, an adaptive algorithm is a reasonable procedure that only has access to a multisample $Z\sim\Pi$ for some unknown $\Pi\in\mathcal{M}$, but no access to any prior information on (the parameters) of $\mathcal{M}$, e.g., the correspondence between $\{Z_{t}\}_{t\in[N+1]}$ and $\{\rho_{t}\}_{t\in[N+1]}$. To prove their negative result, they utilized the following multitask construction. Let $\sigma\in\{\pm1\}$. Assume that the optimal classifier $h^{*}$ predicts 1 on $x_{0}$ and $\sigma$ on $x_{1}$. (For binary classification, we can consider equivalently $\mathcal{Y}=\{\pm1\}$ instead of $\mathcal{Y}=\{0,1\}$.) Now, it is clear that learning $h^{*}$ is equivalent to learning the true label $\sigma$. They considered the following three types of distributions:
\begin{itemize} 
    \item \textbf{Target:} $\mathcal{D}_{\sigma} = \mathcal{D}_{X}\times\mathcal{D}^{\sigma}_{Y|X}$ where $\mathcal{D}_{X}(x_{1})=(1/2)\epsilon_{0}^{\beta}$, $\mathcal{D}_{X}(x_{0})=1-(1/2)\epsilon_{0}^{\beta}$, $\mathcal{D}^{\sigma}_{Y|X}(Y=1|X=x_{1})=1/2+\sigma c_{0}\epsilon_{0}^{1-\beta}$ and $\mathcal{D}^{\sigma}_{Y|X}(Y=1|X=x_{0})=1$ with some constant $c_{0}>0$. 
    \item \textbf{Benign Source:} $P_{\sigma}=P_{X}\times P^{\sigma}_{Y|X}$ where $P_{X}(x_{1})=1$ and $P^{\sigma}_{Y|X}(Y=1|X=x_{1})=1/2+\sigma/2$.
    \item \textbf{Noisy Source:} $Q_{\sigma}=Q_{X}\times Q^{\sigma}_{Y|X}$ where $Q_{X}(x_{1})=c_{1}\epsilon^{\beta}$, $Q_{X}(x_{0})=1-c_{1}\epsilon^{\beta}$, $Q^{\sigma}_{Y|X}(Y=1|X=x_{1})=1/2+\sigma\epsilon^{1-\beta}$ and $Q^{\sigma}_{Y|X}(Y=1|X=x_{0})=1$ with some constant $c_{1}>0$.
\end{itemize}
They assumed the following: each source dataset is of size $n$, i.e., $|Z_{t}|=n$ for every $t\in[N]$. There are $N_{P}$ benign sources ($Z_{t}\sim P_{\sigma}$) and $N_{Q}$ noisy sources ($Z_{t}\sim Q_{\sigma}$). Moreover, we set $\epsilon_{0}=1\land n_{\mathcal{D}}^{-1/(2-\beta)}$ and $\epsilon=(n\cdot N_{P})^{-1/(2-\beta)}$. By Lemma~\ref{lem:bcc-and-te-requirements}, it is not hard to check that $\mathcal{D}_{\sigma}$, $P_{\sigma}$ and $Q_{\sigma}$ satisfy the Bernstein class condition with parameters $\beta$ and $C_{\beta}=\max\{(1/2)c_{0}^{-\beta},2\}$. The no-free-lunch theorem proven by \citet{hanneke2022no} can be described as follow: for any adaptive learner $\hat{h}$, there exists a multitask model $\Pi\in\mathcal{M}$ such that $\E_{\Pi}[\mathcal{E}_{\mathcal{D}}(\hat{h})]=\Omega(n_{\mathcal{D}}^{-1/(2-\beta)})$. However, there exists a semi-adaptive learner $\tilde{h}$ such that $\sup_{\Pi\in\mathcal{M}}\E_{\Pi}[\mathcal{E}_{\mathcal{D}}(\tilde{h})]=O((n\cdot N_{P})^{-1/(2-\beta)})$.\footnote{It is a learner that has access to the ranking information of the transfer exponents.}

We underline that their negative result was proven under two constraints: $n<2/\beta-1$ and $N=\Omega(\exp(n))$. Indeed, for their construction, the following requirements are necessary:
\begin{equation*}
    N_{Q} \geq 3N_{P} \;\; \text{and} \;\; N_{Q}^{\frac{2-(n+1)\beta}{2-\beta}} \geq 2^{15n}\cdot N_{P}^{2} .
\end{equation*}
Note that the above implicitly requires $n<2/\beta-1$. It is actually a strict constraint since for a large $\beta\in(0,1)$, this negative result is essentially vacuous. Some intuition on where these requirements arise can be derived from their proof technique. From the benign sources $P_{\sigma}$, one gets perfectly only samples $(x_{1},\sigma)$. However, from those noisy sources $Q_{\sigma}$, one may obtain samples of homogeneous vectors but with identical flipped labels $(x_{1},-\sigma)$, making it hard to guess which sources are the ground-truth when $N_{Q}\gg N_{P}$. Specifically, they need the following:
\begin{itemize}
    \item[1.] $N_{Q}\gg N_{P}$, so that pooling is suboptimal.
    \item[2.] $n<2/\beta-1$, so that the learner cannot tell whether a single source is benign or noisy (by comparing homogeneous vectors), based on the dataset drawn from its underlying distribution.
    \item[3.] $N_{Q}=\Omega(\exp(n))$ and thus $N=\Omega(\exp(n))$, so that even having access to all (source and target) datasets, the learner cannot tell whether the correct label is $\sigma$ or $-\sigma$ by comparing the numbers of homogeneous vectors $(x_{1},\sigma)$ and $(x_{1},-\sigma)$.
\end{itemize}

\subsection{Our New Setting}
  \label{subsec:a-new-setting}
One of the main results of this work is a stronger impossibility result for adaptation in multitask learning. Specifically, while still requiring $N=\Omega(\exp(n))$, we prove that a negative result holds with an arbitrarily large sample size $n$. In the following, let us briefly explain how we get rid of the constraint $n<2/\beta-1$. Let $\beta\in(0,1)$. Assume that the optimal function $h^{*}\in\hpc$ predicts 1 on $x_{0}$ and $y^{*}\in\{0,1\}$ on $x_{1}$. Let $\epsilon, \epsilon_{0}\in(0,1)$ such that $\epsilon_{0}<\epsilon$ (which will be specified later). Our new multitask setting is constructed based on the following two types of distributions:
\begin{itemize} 
    \item \textbf{Fair Source:} $P=P_{X}\times P_{Y|X}$ where $P_{X}(x_{1})=C_{\beta}\epsilon^{\beta}$, $P_{X}(x_{0})=1-C_{\beta}\epsilon^{\beta}$, $P_{Y|X}(Y=y^{*}|X=x_{1})=1/2+(1/2)C_{\beta}^{-1}\epsilon^{1-\beta}$ and $P_{Y|X}(Y=1|X=x_{0})=1$.
    \item \textbf{Noisy Source:} $Q=Q_{X}\times Q_{Y|X}$ where $Q_{X}(x_{1})=C_{\beta}\epsilon_{0}^{\beta}$, $Q_{X}(x_{0})=1-C_{\beta}\epsilon_{0}^{\beta}$, $Q_{Y|X}(Y=y^{*}|X=x_{1})=1/2+(1/2)C_{\beta}^{-1}\epsilon_{0}^{1-\beta}$, and $Q_{Y|X}(Y=1|X=x_{0})=1$.
\end{itemize}
$Q$ is called a noisy distribution (noisier than $P$) because $\epsilon_{0}<\epsilon$. First, it is not hard to check the Bernstein class condition for these distributions via Lemma~\ref{lem:bcc-and-te-requirements}. For distribution $P$, we have $2P(Y=y^{*}|X=x_{1})-1 = C_{\beta}^{-1}\epsilon^{1-\beta} \geq C_{\beta}^{-1/\beta}[P_{X}(x_{1})]^{1/\beta-1}$. Similarly for $Q$, we have $2Q(Y=y^{*}|X=x_{1})-1 = C_{\beta}^{-1}\epsilon_{0}^{1-\beta} \geq C_{\beta}^{-1/\beta}[Q_{X}(x_{1})]^{1/\beta-1}$. Furthermore, we calculate $\mathcal{E}_{P}(h)=(2P(Y=y^{*}|X=x_{1})-1)P_{X}(x_{1})=\epsilon$ and $\mathcal{E}_{Q}(h)=(2Q(Y=y^{*}|X=x_{1})-1)Q_{X}(x_{1})=\epsilon_{0}$.

Assume that each source distribution in $\{P_{(t)}\}_{t\in[N]}$ is either $P$ or $Q$, and the target distribution is simply $\mathcal{D}=P$. Then, it is clear that $P$ has a transfer exponent $\rho_{P}=1$ to $\mathcal{D}$ and $Q$ has a transfer exponent $\rho_{Q}>1$. Suppose that we have a size-$n$ dataset from each source, but do not have any data from the target. Note that making this assumption does not lose any generality since $\mathcal{D}=P$. Let $1\leq t^{*}\leq N$ and assume that we have $t^{*}$ sources $P$ and $(N-t^{*})$ sources $Q$. In other words, by ranking the transfer exponents, we have $P_{(t)}=P$ for every $1\leq t\leq t^{*}$ and $P_{(t)}=Q$ for every $t^{*}<t\leq N$. Finally, we set $\epsilon=(n\sqrt{N})^{-1/(2-\beta)}$, $\epsilon_{0}=(nN)^{-1/(2-\beta)}$ and $t^{*}=\sqrt{n^{n\beta/(2-\beta)}N}$.

The intuition behind is that, here, both two types of source distributions are very noisy, unlike the case in Subsection~\ref{subsec:background} where the benign distribution is almost perfect. Therefore, given an arbitrarily large sample size $n$, there is a sufficiently large $N=\Omega(n^{n\beta/(1-\beta)})$ such that one cannot tell whether a source is fair or noisy. Indeed, $t^{*}$ is assumed to be the optimal (minimax) cut-off index which yields the minimax rates $O((nt^{*})^{-1/(2-\beta)})$. However, given sufficiently many tasks (a sufficiently large $N$), we show that no adaptive algorithm can achieve a rate better than $\Omega(\epsilon)=\Omega((n\sqrt{N})^{-1/(2-\beta)})$. Since $(nt^{*})^{-1/(2-\beta)}=o((n\sqrt{N})^{-1/(2-\beta)})$, this yields a stronger no-free-lunch theorem.

We emphasize that, though sharing a similar high-level idea to the previous work, our proof strategy is different to theirs. Besides having a different hard-case construction, our proof technique is to directly bound the KL-divergence between the likelihood functions (of mixture distributions) when data are generated according to opposite labels. In contrast, the lower bound proof in \citet{hanneke2022no} relies on the fact that likelihood-ratio test is optimal. Note that bounding likelihood-ratio is conceptually equivalent to bounding the KL-divergence in general.\footnote{Bounding above the KL-divergence is equivalent to bounding below a likelihood-ratio statistic.} However, as discussed earlier, their hard-case construction is relatively strong, containing certain amount of benign tasks. And it is indeed this strong setup allows them to lower bound the likelihood-ratio by a decomposition into the contributions of homogeneous vs non-homogeneous examples (see their Proposition 2). This decomposition significantly reduces the technical difficulty. Basically, it reduces the problem from bounding a KL-divergence between mixtures to bounding a KL-divergence between joint distributions with pairwise independent marginals. In other words, their strong construction facilitates the analysis, but their lower bound pays the price of having a significant constraint of $n<2/\beta-1$.

Our proof consists of an application of the Fano's method together with a tight upper bound on the KL divergence between mixture of distributions, providing an information-theoretic basis for hypothesis testing. It is worth mentioning that, bounding the KL divergence between mixtures is in general very challenging \citep{hershey2007approximating}. There are no known systematic guarantees better than leveraging Jensen's inequality. Finally, \citet{hanneke2022no} has showed that pooling is nearly (minimax) optimal whenever $\beta=1$, but is in general suboptimal for $\beta\in(0,1)$. It is then natural to ask whether pooling is always an optimal adaptive algorithm, that is, optimal among all adaptive algorithms. In this paper, we reject this conjecture by providing counter-examples. It could be an interesting future direction to understand what is the optimal adaptive algorithm for multitask learning.

\section{The Agnostic Case}
  \label{sec:toy-example}
Before proceeding to our main results, we take an additional step of studying the classical agnostic setting, i.e. $\beta=0$ in this section. Note that the agnostic case puts no assumption on the label noise and is thus more general (weaker) than the Bernstein class condition. We show that, adaptation is impossible in the agnostic case even if we allow unlimited data per task. While this lower bound does not guarantee whether we can remove the constraint $n<2/\beta-1$ since $\beta=0$, it helps to develop some intuitions on what makes our construction in Subsection~\ref{subsec:a-new-setting} different from the one in Subsection~\ref{subsec:background}, and sheds light on proving such a lower bound under the stronger Bernstein class condition.

We consider the following two types of distributions: for any $\delta\in(0,1)$ and $n\in\naturalnumber$, let $\epsilon=\epsilon(\delta,n)=\sqrt{(1/n)\log{(1/\delta)}}$ and
\begin{itemize} 
    \item $P=P_{X}\times P_{Y|X}$ where $P_{X}(x_{1})=1$ and $P_{Y|X}(Y=y^{*}|X=x_{1})=1/2+\epsilon$.
    \item $Q=Q_{X}\times Q_{Y|X}$ where $Q_{X}(x_{1})=1$ and $Q_{Y|X}(Y=y^{*}|X=x_{1})=1/2$.
\end{itemize}
Assume that we have an i.i.d.\ dataset of size $n$ from each of the $N$ source tasks, where only one of them is associated with distribution $P$ and the remaining $(N-1)$ of them are associated with distribution $Q$. Assume that the target distribution $\mathcal{D}=P$ and there is no target data. Clearly, the optimal classifier $h^{*}\in\hpc$ predicts $y^{*}\in\{0,1\}$ on $x_{1}$. The following theorem implies that given additional information on distributions and the corresponding datasets, there is an algorithm that achieves optimal excess risk.

\begin{theorem}
  \label{thm:erm-over-optimal-datasets-toy-example}
Let $Z_{P}\sim P$ with $|Z_{P}|=n$ be the dataset from the source associated with distribution $P$. Let $\hat{h}_{Z_{P}}=\mathrm{ERM}(Z_{P})$ denote the output of ERM over $Z_{P}$. We have $\mathbb{P}(\hat{h}_{Z_{P}} \neq h^{*})\leq\delta$.
\end{theorem}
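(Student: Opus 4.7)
The plan is to reduce the statement to a one-dimensional concentration inequality. Since $P_{X}(x_{1})=1$, every sample $(x_{i},y_{i})\in Z_{P}$ satisfies $x_{i}=x_{1}$, and so $\mathcal{H}$ effectively collapses to the two labels $\{h^{*}(x_{1}),h(x_{1})\} = \{y^{*},1-y^{*}\}$. In particular, ERM on $Z_{P}$ is nothing more than a majority vote over the $n$ labels: writing $N_{y^{*}}=|\{i:y_{i}=y^{*}\}|$, we have $\hat{\text{er}}_{Z_{P}}(h^{*})=(n-N_{y^{*}})/n$ and $\hat{\text{er}}_{Z_{P}}(h)=N_{y^{*}}/n$. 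Hence $\hat{h}_{Z_{P}}\neq h^{*}$ forces $N_{y^{*}}\leq n/2$ (even under the worst-case tie-breaking rule).

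Next I would quantify the probability of this event. Under $P$, the random variable $N_{y^{*}}$ is $\mathrm{Binomial}(n,\,1/2+\epsilon)$, so its mean is $n(1/2+\epsilon)$ and the event $\{N_{y^{*}}\leq n/2\}$ is a deviation of at least $n\epsilon$ below the mean. A direct application of Hoeffding's inequality yields
\begin{equation*}
    \mathbb{P}\!\left(N_{y^{*}}\leq n/2\right) \;\leq\; \exp(-2n\epsilon^{2}).
\end{equation*}

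Finally, I would plug in $\epsilon=\sqrt{(1/n)\log(1/\delta)}$, which gives $2n\epsilon^{2}=2\log(1/\delta)$ and therefore $\exp(-2n\epsilon^{2})=\delta^{2}\leq\delta$ (since $\delta\in(0,1)$). Combining with the first step gives $\mathbb{P}(\hat{h}_{Z_{P}}\neq h^{*})\leq\delta$, as claimed.

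There is no real technical obstacle here: the two-point, one-covariate structure of $P$ trivializes ERM to majority voting, and Hoeffding's inequality delivers the bound on the nose. The point of the statement is not the difficulty of the argument but rather the baseline it provides, namely, that \emph{if} one knew which of the $N$ source datasets was the one drawn from $P$, then $n\gtrsim\log(1/\delta)$ target-equivalent samples would already suffice to recover $h^{*}$ with confidence $1-\delta$. The contrast with the adaptive setting in Subsection~\ref{subsec:a-new-setting} is then exactly the impossibility of identifying this single informative source among the $N-1$ pure-noise ones.
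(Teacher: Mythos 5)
Your proof is correct and follows essentially the same route as the paper: reduce ERM on $Z_{P}$ to a majority vote on the $n$ labels, then bound the tail of the resulting binomial. The only cosmetic difference is that you invoke Hoeffding's inequality (yielding $\delta^{2}\leq\delta$) where the paper uses the multiplicative Chernoff bound (yielding $e^{-n\epsilon^{2}}=\delta$); both are standard one-sided binomial tail bounds and either suffices.
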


Since $\mathcal{E}_{\mathcal{D}}(h)=2\epsilon$, it implies that $\E[\mathcal{E}_{\mathcal{D}}(\hat{h}_{Z_{P}})]\leq2\delta\epsilon$. Next, we have the following complementary lower bound. It basically states that, without having knowledge of which source task/dataset is good (which dataset is from $P$), no adaptive algorithm can guarantee a non-trivial risk.

\begin{theorem}
  \label{thm:adaptivity-is-imposible-toy-example}
Let $Z_{t}$ be an i.i.d.\ dataset with $|Z_{t}|=n$ from each source $t\in[N]$. Let $\mathcal{A}$ be any learning algorithm that only has access to $Z=\bigcup_{t\in[N]}Z_{t}$, but has no knowledge of the multitasks. Let $\hat{h}_{Z}=\mathcal{A}(Z)$. If $N\geq \delta^{-8}$, we have $\mathbb{P}(\hat{h}_{Z} \neq h^{*}) \geq (2-\sqrt{2})/4$.
\end{theorem}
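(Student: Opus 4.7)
The plan is to apply Le Cam's two-point method with the two hypotheses $y^* = 0$ versus $y^* = 1$, controlling the total variation between the induced multisource distributions via a chi-squared bound against the all-noise product measure $q^{\otimes N}$, where $q := Q^n$.

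For each $\sigma \in \{0,1\}$ and position $t \in [N]$, let $\mu_{\sigma,t}$ denote the joint law of $Z = (Z_1,\ldots,Z_N)$ under the multitask with $y^* = \sigma$ and the single fair source placed at position $t$. Because $\mathcal{A}$ is ignorant of $t$, the worst-case error over $t$ is at least the average, so it suffices to work with the position-uniform mixtures $\pi_\sigma := \frac{1}{N}\sum_{t=1}^N \mu_{\sigma,t}$. Le Cam then yields, for any algorithm $\hat h_Z$ taking values in $\hpc = \{h_0, h_1\}$,
\begin{equation*}
\max_{\sigma \in \{0,1\}} \mathbb{P}_{\pi_\sigma}\bigl(\hat h_Z \neq h^*_\sigma\bigr) \;\geq\; \tfrac{1}{2}\bigl(1 - \mathrm{TV}(\pi_0, \pi_1)\bigr),
\end{equation*}
so the entire task reduces to showing $\mathrm{TV}(\pi_0, \pi_1) \leq \sqrt{2}/2$.

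The key step is a chi-squared calculation. Since $P_X = Q_X = \delta_{x_1}$, each $Z_t$ is effectively a vector in $\{0,1\}^n$ and $q$ is uniform on that space. Writing $p_\sigma := P_\sigma^n$ and $L_\sigma := p_\sigma/q$, one has $d\pi_\sigma/dq^{\otimes N}(Z) = \frac{1}{N}\sum_t L_\sigma(Z_t)$. Under $q^{\otimes N}$ the $Z_t$ are iid with $\mathbb{E}_q[L_\sigma] = 1$, so expanding the square and taking expectations collapses all $N(N-1)$ off-diagonal cross-terms to $1$ and leaves only the $N$ diagonal second moments $\mathbb{E}_q[L_\sigma^2]$. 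This produces the averaging identity
\begin{equation*}
\chi^2\bigl(\pi_\sigma \,\|\, q^{\otimes N}\bigr) \;=\; \frac{\chi^2(p_\sigma \,\|\, q)}{N} \;=\; \frac{(1+4\epsilon^2)^n - 1}{N},
\end{equation*}
using $\mathbb{E}_q[L_\sigma^2] = 2^n\bigl((\tfrac{1}{2}+\epsilon)^2+(\tfrac{1}{2}-\epsilon)^2\bigr)^n = (1+4\epsilon^2)^n$. Combining with the standard bound $\mathrm{TV} \leq \tfrac{1}{2}\sqrt{\chi^2}$, the triangle inequality, and the symmetry $\chi^2(\pi_0\|q^{\otimes N}) = \chi^2(\pi_1\|q^{\otimes N})$ gives $\mathrm{TV}(\pi_0,\pi_1) \leq \sqrt{\chi^2(\pi_0\|q^{\otimes N})}$.

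To finish, substitute $\epsilon^2 = \log(1/\delta)/n$ and use $(1+x)^n \leq e^{nx}$ to obtain $(1+4\epsilon^2)^n \leq \delta^{-4}$, so $\chi^2 \leq (\delta^{-4}-1)/N$. The hypothesis $N \geq \delta^{-8}$ then yields $\chi^2 \leq \delta^{4} - \delta^{8} \leq 1/4$, the latter being the maximum of $\delta^4 - \delta^8$ on $(0,1)$ (attained at $\delta = 2^{-1/4}$). Consequently $\mathrm{TV}(\pi_0,\pi_1) \leq 1/2 \leq \sqrt{2}/2$, and Le Cam delivers the claimed $(2-\sqrt{2})/4$. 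The one subtle step is the mixture-chi-squared identity: because $\pi_0$ and $\pi_1$ are themselves mixtures over $N$ slots, bounding $\mathrm{KL}(\pi_0\|\pi_1)$ by Jensen would lose the crucial $1/N$ averaging gain; routing through the common reference $q^{\otimes N}$ sidesteps this because the uniformity of $q$ makes the cross-terms vanish exactly, turning the mixture-chi-squared into $1/N$ times the single-source chi-squared. This is the clean analog of the more delicate mixture-divergence bound the authors invoke for their general impossibility result.
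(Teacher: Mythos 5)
Your proof is correct, and it takes a genuinely different route from the paper's. The paper applies Fano's inequality and bounds the KL divergence between the two mixtures $P_\sigma$ and $P_{1-\sigma}$ directly: after rewriting the KL as $\E[\log((V_1+W_1)/(V_2+W_2))]$ (fair-source term $W$, noisy-source sum $V$), it uses the distributional symmetry $(V_1,W_2)\stackrel{d}{=}(V_2,W_2)$ plus $\log x\le x-1$ to reduce the task to bounding $\E[W_1]\cdot\E[1/V_1]$, the latter via AM--HM, yielding $\mathrm{KL}\le 1$ and hence the $(2-\sqrt2)/4$ constant through Pinsker. You instead run Le Cam and route both mixtures through the common all-noisy reference $q^{\otimes N}$, exploiting the exact factorization $\chi^2(\pi_\sigma\|q^{\otimes N})=\chi^2(p_\sigma\|q)/N$ that falls out because under $q^{\otimes N}$ the per-coordinate likelihood ratios $L_\sigma(Z_t)$ are independent with mean one, so the $N(N-1)$ off-diagonal cross terms contribute exactly $1$ each and cancel against the $-1$ in the chi-squared. (The key enabler is the product structure of the reference, not uniformity per se, though uniformity is what makes $Q^n$ the natural reference here.) Your approach is arguably cleaner and more modular, avoids the ad-hoc algebraic manipulation of the mixture likelihood, and even yields a tighter numerical constant: you get $\mathrm{TV}\le 1/2$, hence error $\ge 1/4$, versus the paper's $(2-\sqrt2)/4\approx 0.146$ obtained from $\mathrm{KL}\le 1$ via Pinsker. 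The trade-off is that the paper's KL-decomposition technique is the one that generalizes to the harder two-point-mass construction in Theorem~\ref{thm:adaptivity-is-impossible}, where the marginal $X$-distributions of $P$ and $Q$ differ and a single clean reference measure is no longer available.
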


We point out some issues of this example. First, per aforementioned, $n<2/\beta-1$ is indeed vacuous when $\beta=0$. Therefore, studying the agnostic case does not prove or disprove that a negative result can hold without such a sample size constraint. Moreover, Theorem~\ref{thm:adaptivity-is-imposible-toy-example} implies that any adaptive algorithm $\mathcal{A}$, without possessing the knowledge of multitask distributions, has $\E[\mathcal{E}_{\mathcal{D}}(\mathcal{A}(Z))]=\Omega(\epsilon)$. However, note that $\rho_{P}=1$ and $\rho_{Q}=\infty$ (to the target $\mathcal{D}=P$). It is not hard to calculate that the minimax rate is also of order $\epsilon$. Therefore, this example does not rigorously exclude the possibility of adaptivity. Here, what might confuse the reader is that the algorithm in Theorem~\ref{thm:erm-over-optimal-datasets-toy-example} behaves better than the minimax rates. This could happen since the minimax rates account for the worst case optimal rates, namely, the best achievable uniform rates over all multitask distributions that admit a set of transfer exponents (see \citet{hanneke2022no} for details). 

Despite all the above, the results for the agnostic case reveal some interesting aspects on the hardness of adaptation. First, certain margin conditions that quantify the amount of label noise (e.g., Bernstein class condition with $\beta\in(0,1)$) are crucial when understanding the hardness of adaptation. Additionally, note that Theorem~\ref{thm:adaptivity-is-imposible-toy-example} only needs $N\geq\delta^{-8}$. By contrast, $N=\Omega(\exp(n))$ is necessary to the construction of lower bound in both \citet{hanneke2022no} and our Section~\ref{sec:general-new-setting}. This raises an interesting question that what is the minimal assumption on the number of tasks $N$ to allow adaptivity (see Section~\ref{sec:conclusion-and-future-directions} for a further discussion). Last but importantly, this simple example demonstrates the suboptimality of pooling (pool all datasets together and run a global ERM) for adaptation in multitask learning, as we will show below.

Specifically, we demonstrate that a fast learning rate that is beyond pooling's capacity can be achieved by some better adaptive learner (Algorithm~\ref{algo:adaptive-algo}). Consider the same multitask distributions but now with $\epsilon=\Omega(1)$. Intuitively, this makes $P$ quite different from $Q$ so that identifying good datasets could be possible and pooling will be catastrophic. The next theorem provides a lower bound for the pooling rates.

\begin{theorem}
  \label{thm:pooling-lower-bound}
Let $Z=\bigcup_{t\in[N]}Z_{t}$ and $\hat{h}_{\text{pool}}=\mathrm{ERM}(Z)$ denote the output of pooling. Then, if $N=\Omega(n)$, we have $\E[\mathcal{E}_{\mathcal{D}}(\hat{h}_{\text{pool}})]=\Omega(1)$.
\end{theorem}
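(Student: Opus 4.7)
The plan is to exploit the fact that in this toy agnostic setting every source distribution places all its mass on $x_{1}$, so the $nN$ pooled samples share an identical feature and the only randomness lies in the labels. Pooling ERM therefore reduces to a majority vote: $\hat{h}_{\text{pool}}=h^{*}$ if and only if the total count $S$ of pooled labels equal to $y^{*}$ exceeds $nN/2$ (up to tie-breaking). It thus suffices to show that $\mathbb{P}(S<nN/2)$ is bounded below by an absolute positive constant.

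First I would decompose $S=S_{P}+S_{Q}$, where $S_{P}\sim\mathrm{Bin}(n,1/2+\epsilon)$ is the contribution of the single $P$-source and $S_{Q}\sim\mathrm{Bin}(n(N-1),1/2)$ collects the symmetric noise from the $N-1$ copies of $Q$. A direct computation gives $\E[S]=nN/2+n\epsilon$ and $\mathrm{Var}(S)\geq n(N-1)/4$, so the ``signal'' toward $h^{*}$ is only $n\epsilon$ while the fluctuation is of order $\sqrt{nN}/2$. Under the hypothesis $N=\Omega(n)$ and $\epsilon=\Omega(1)$, the signal-to-noise ratio $n\epsilon/\sqrt{nN/4}=2\epsilon\sqrt{n/N}$ is at most a constant, so the drift cannot outstrip the fluctuation and $S$ must dip below $nN/2$ with constant probability.

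To make this rigorous I would condition on $S_{P}=s\in\{0,\ldots,n\}$ and analyze $\mathbb{P}(S_{Q}<n(N-1)/2+t)$ with $t=n/2-s\in[-n/2,n/2]$. Writing $\sigma_{Q}=\sqrt{n(N-1)/4}$, the ratio $|t|/\sigma_{Q}\leq (n/2)/\sigma_{Q}$ is uniformly bounded by some absolute constant whenever $N\geq Cn$. Since $S_{Q}$ is a sum of $n(N-1)$ i.i.d.\ symmetric Bernoullis, the Berry--Esseen theorem yields $\mathbb{P}(S_{Q}\leq n(N-1)/2+t)\geq \Phi(t/\sigma_{Q})-O(1/\sqrt{n(N-1)})\geq c$ for some positive absolute constant $c$, uniformly over $s$. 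Averaging over $S_{P}$ gives $\mathbb{P}(\hat{h}_{\text{pool}}\neq h^{*})\geq c$, and since the unique alternative hypothesis satisfies $\mathcal{E}_{\mathcal{D}}(h)=2\epsilon=\Omega(1)$, I conclude $\E[\mathcal{E}_{\mathcal{D}}(\hat{h}_{\text{pool}})]\geq 2\epsilon\cdot c=\Omega(1)$.

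The main technical obstacle is securing this uniform-in-$(n,N)$ positive lower bound on the tail quantity. Berry--Esseen handles the regime where $n(N-1)$ is large, while the residual small-$n$ regime can be dealt with by a direct calculation on the binomial mass near its mode or, alternatively, by a Paley--Zygmund second-moment argument applied to $(S-\E[S])^{2}$; no ingredient beyond standard anti-concentration for symmetric random walks is required.
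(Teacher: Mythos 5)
Your proposal is correct and takes essentially the same approach as the paper: both reduce pooling to a majority vote over labels, isolate the $\mathrm{Bin}(n(N-1),1/2)$ contribution from the $Q$-sources, and apply Berry--Esseen to show that its $\Theta(\sqrt{nN})$ fluctuation overwhelms the at most $n$ votes from the single $P$-source with constant probability once $N=\Omega(n)$. The only cosmetic difference is that you condition on $S_P$ and use a uniform anti-concentration bound over $t=n/2-s$, whereas the paper simply uses the deterministic worst case $\hat{n}_{1,P}-\hat{n}_{0,P}\le n$ before invoking Berry--Esseen.
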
 

\begin{algorithm}
    \caption{Intersection of Bernstein Balls}
    \label{algo:adaptive-algo}
    \begin{algorithmic}[1]
    \INPUT Multitasks $(C_{\beta},\beta,C_{\rho},\{\rho_{t}\}_{t\in[N]},\{Z_{t}\}_{t\in[N]})$, concept class $\hpc$, constant $C_{0}$ in Lemma \ref{lem:uniform-bernstein-non-identical-distributions}, confidence level $\delta\in(0,1)$.
    \STATE Let
        \begin{equation*}
            \epsilon(n,\delta) = \frac{d_{\hpc}}{n}\log{\left(\frac{n}{d_{\hpc}}\right)}+\frac{1}{n}\log{\left(\frac{1}{\delta}\right)} .
        \end{equation*}
    \STATE Initialize $\hat{\hpc} \leftarrow \hpc$, $Z \leftarrow \emptyset$.
    \FOR{$t=1,\ldots,N$}  
        \STATE Let $\delta_{t} = \delta/(6t^{2})$.
        \STATE Calculate the subset
        \begin{align*}
            &\hpc_{t} = \left\{h\in\hpc: \hat{\mathcal{E}}_{Z_{t}}(h,\hat{h}_{Z_{t}}) \leq \right. \\
            &\left. C_{0}\sqrt{\hat{P}_{Z_{t}}(h\neq \hat{h}_{Z_{t}})\cdot\epsilon(n,\delta_{t})} + C_{0}\cdot\epsilon(n,\delta_{t})\right\} .
        \end{align*}
    \ENDFOR
    \OUTPUT Any $h$ in $\bigcap_{t=1}^{N}\hpc_{t}$. 
    \end{algorithmic}
\end{algorithm}

The following theorem shows that Algorithm~\ref{algo:adaptive-algo} adapts to the designed multitask problem much better than pooling. 

\begin{theorem}  [\textbf{Upper bound of Algo.~\ref{algo:adaptive-algo}}]
  \label{thm:intersection-algo-is-better}
Let $\hat{h}_{\text{IB}}$ be the output of Algorithm~\ref{algo:adaptive-algo}. Under the setting of Theorem~\ref{thm:pooling-lower-bound}, we have $\E[\mathcal{E}_{\mathcal{D}}(\hat{h}_{\text{IB}})]=O(1/n)$. Moreover, for any multitask distributions that admit the sequence of transfer exponents $\{1,\infty,\ldots,\infty\}$, we have $\E[\mathcal{E}_{\mathcal{D}}(\hat{h}_{\text{IB}})]=O(\sqrt{\log(N)/n})$. Note that the minimax rate is $O(1/\sqrt{n})$.
\end{theorem}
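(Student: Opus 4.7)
The plan is to analyze Algorithm~\ref{algo:adaptive-algo} on a single high-probability event and then handle the two claims by exploiting the specific structure of each setting.

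First I would set up the good event $\mathcal{G}$. Applying Lemma~\ref{lem:uniform-bernstein-non-identical-distributions} to each task $t\in[N]$ and union-bounding over $t$ via $\sum_{t\geq 1}\delta_t = \delta\cdot \pi^2/36 \leq \delta/3$, on $\mathcal{G}$ the following Bernstein deviation holds simultaneously for every $t$ and every $h\in\hpc$: the empirical excess risk $\hat{\mathcal{E}}_{Z_t}(h,h^{*})$ and the population excess risk $\mathcal{E}_{P_t}(h)$ differ by at most $C_0\sqrt{P_t(h\neq h^{*})\,\epsilon_t}+C_0\epsilon_t$, with $\epsilon_t=\epsilon(n,\delta_t)$, and likewise with $\hat{h}_{Z_t}$ replacing $h^{*}$ (same $\hpc$, pairwise uniform). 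Because Assumption~(1) makes $h^{*}$ optimal on every $P_t$, this bound forces $\hat{\mathcal{E}}_{Z_t}(h^{*},\hat{h}_{Z_t})\leq C_0\sqrt{\hat{P}_{Z_t}(h^{*}\neq\hat{h}_{Z_t})\,\epsilon_t}+C_0\epsilon_t$, so $h^{*}\in\hpc_t$ for all $t$. Consequently $\bigcap_{t=1}^{N}\hpc_t$ is non-empty on $\mathcal{G}$ and the algorithm's output is well-defined.

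For the first claim (the setting of Theorem~\ref{thm:pooling-lower-bound}), exactly one source, say index $t_0$, is distributed as $P$, whose margin is $\epsilon=\Omega(1)$. Because $P_X(x_1)=1$, every sample in $Z_{t_0}$ lies at $x_1$ and any pair $h\neq h^{*}$ satisfies $\hat{P}_{Z_{t_0}}(h\neq h^{*})=1$, while the empirical gap $\hat{\mathcal{E}}_{Z_{t_0}}(h,h^{*})$ concentrates around $2\epsilon$. On $\mathcal{G}$ and for $n$ large enough that $C_0\sqrt{\epsilon_{t_0}}+C_0\epsilon_{t_0}<\epsilon$, this forces $\hat{h}_{Z_{t_0}}=h^{*}$ and $h\notin\hpc_{t_0}$, so $\hpc_{t_0}=\{h^{*}\}$. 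The intersection therefore collapses to $\{h^{*}\}$ and $\mathcal{E}_{\mathcal{D}}(\hat{h}_{\text{IB}})=0$. Off $\mathcal{G}$ we only use the trivial bound $\mathcal{E}_{\mathcal{D}}(\hat{h}_{\text{IB}})\leq 2\epsilon=O(1)$, so $\E[\mathcal{E}_{\mathcal{D}}(\hat{h}_{\text{IB}})]=O(\delta)$; choosing $\delta=1/n$ gives the $O(1/n)$ rate.

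For the second claim (transfer exponents $\{1,\infty,\ldots,\infty\}$), let task $1$ be the one with $\rho_1=1$, so $\mathcal{E}_{\mathcal{D}}(h)\leq C_{\rho}\mathcal{E}_{P_1}(h)$ for every $h$. On $\mathcal{G}$ any $h\in\bigcap_t\hpc_t\subseteq\hpc_1$ satisfies the ball inequality at $t=1$; since $\hat{h}_{Z_1}$ is ERM, $\hat{\mathcal{E}}_{Z_1}(h,h^{*})\leq\hat{\mathcal{E}}_{Z_1}(h,\hat{h}_{Z_1})\leq C_0\sqrt{\epsilon_1}+C_0\epsilon_1$. Passing back to the population via the uniform Bernstein bound and using $P_1(h\neq h^{*})\leq 1$ yields $\mathcal{E}_{P_1}(h)=O(\sqrt{\epsilon_1})=O(\sqrt{\log(N)/n})$ after choosing $\delta$ polynomially small, and the transfer exponent $\rho_1=1$ carries this to $\mathcal{E}_{\mathcal{D}}(h)=O(\sqrt{\log(N)/n})$. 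Averaging with an $O(\delta)$ failure mass and optimizing $\delta$ produces the claim.

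The main technical obstacle I expect is the ERM replacement: the ball is built around the data-dependent $\hat{h}_{Z_t}$ rather than the fixed $h^{*}$, which is what makes the algorithm implementable but forces the analysis to pass through a pairwise uniform Bernstein bound and to reconcile the constant $C_0$ of the algorithm with that of Lemma~\ref{lem:uniform-bernstein-non-identical-distributions} so that $h^{*}\in\hpc_t$ is guaranteed for every $t$. Once that bookkeeping is in place, the two claims reduce to contrasting what drives the fast rate: the $\Omega(1)$ noise margin on the single good source in part~1, versus the $\log N$ penalty paid inside $\epsilon_t$ for implicit model selection in part~2.
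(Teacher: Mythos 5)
Your proposal is correct and follows essentially the same route as the paper. Both arguments rest on the same structure: show $h^{*}\in\bigcap_{t}\hpc_{t}$ on a good event (the paper's Lemma~\ref{lem:target-is-always-envolved} plus the $\delta/(6t^2)$ union bound), rule out $h\neq h^{*}$ from the ball $\hpc_{t_0}$ of the $P$-source to collapse the intersection to $\{h^{*}\}$ for part one, and carry the ball constraint through Lemma~\ref{lem:excess-risk-of-any-concept-in-the-confidence-set} and the transfer exponent for part two, with the expectation extracted by choosing $\delta$ polynomially small. The only notable divergence is presentational: for part one the paper invokes Lemma~\ref{lem:excess-risk-of-any-concept-in-the-confidence-set} to force the population bound $\mathcal{E}_{P}(h)=O(\sqrt{\epsilon(n,\delta_{t_0})})$ and contradicts it with $\mathcal{E}_{P}(h)=2\epsilon=\Omega(1)$, whereas you argue directly at the empirical level that the observed gap exceeds the ball radius using $P_{X}(x_1)=1$; these are contrapositives. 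Likewise for part two the paper does a tail integral $\E[\mathcal{E}_{\mathcal{D}}(\hat{h}_{\mathrm{IB}})]=\int_0^{\infty}\mathbb{P}(\cdot>t)\,dt$ rather than your fixed-$\delta$ averaging, but this only tightens constants and both land at $O(\sqrt{\log(N)/n})$.
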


Theorem~\ref{thm:intersection-algo-is-better} states that: (i) For the specific multitask problem constructed in this section, Algorithm~\ref{algo:adaptive-algo} not only outperforms pooling, but also admits learning rates faster than the minimax rates. (ii) Algorithm~\ref{algo:adaptive-algo} guarantees nearly optimal uniform rates\footnote{The bound holds uniformly over all multitask distributions that admit a set of transfer exponents.} (differing only by a factor of $\log(N)$ to the minimax rates), given a set of transfer exponents $\{1,\infty,\ldots,\infty\}$. Note that the multitask distributions constructed in this section also admits $\{1,\infty,\ldots,\infty\}$ transfer exponents. Hence, we believe that Algorithm~\ref{algo:adaptive-algo} might serve as a base for designing an optimal adaptive multitask learner.\footnote{We refer the readers to Section~\ref{sec:conclusion-and-future-directions} for the concept of optimal adaptivity.}

\section{Impossibility of Adaptation}
  \label{sec:general-new-setting}
In this section, we formally state the main results of this paper. Specifically, we go beyond the limitation mentioned in Section~\ref{sec:toy-example}, that is, we find a multitask setting with a non-trivial $\beta\in(0,1)$ (Subsection~\ref{subsec:formalization}) such that a no-free-lunch type of theorem holds with arbitrarily large sample size per task $n>2/\beta-1$. We will show that no adaptive algorithm can achieve the minimax rates, while some optimal algorithm that possesses additional information can (Subsection~\ref{subsec:main-results}). More interestingly, we show in Subsection~\ref{subsec:pooling-is-optimal} that pooling achieves nearly optimal adaptive rates in this multitask problem.

\subsection{Formalization}
  \label{subsec:formalization}
Recall the multitask distributions defined in Subsection~\ref{subsec:a-new-setting}:
\begin{itemize} 
    \item \textbf{Fair Source:} $P=P_{X}\times P_{Y|X}$ where $P_{X}(x_{1})=C_{\beta}\epsilon^{\beta}$, $P_{X}(x_{0})=1-C_{\beta}\epsilon^{\beta}$, $P_{Y|X}(Y=y^{*}|X=x_{1})=1/2+(1/2)C_{\beta}^{-1}\epsilon^{1-\beta}$ and $P_{Y|X}(Y=1|X=x_{0})=1$.
    \item \textbf{Noisy Source:} $Q=Q_{X}\times Q_{Y|X}$ where $Q_{X}(x_{1})=C_{\beta}\epsilon_{0}^{\beta}$, $Q_{X}(x_{0})=1-C_{\beta}\epsilon_{0}^{\beta}$, $Q_{Y|X}(Y=y^{*}|X=x_{1})=1/2+(1/2)C_{\beta}^{-1}\epsilon_{0}^{1-\beta}$, and $Q_{Y|X}(Y=1|X=x_{0})=1$.
\end{itemize}
Let $Z_{t}\sim P_{t}^{n}$ with $|Z_{t}|=n$ be an i.i.d. dataset from each source $t\in[N]$. Assume that the target $\mathcal{D}=P$ and we have no target data. Let $Z_{(1)},\ldots,Z_{(N)}$ be ordered according to the ordered statistics of transfer exponents $\rho_{(1)}\leq\rho_{(2)}\leq\ldots\leq\rho_{(N)}$. Now, let us specify the noisy levels $\epsilon$ and $\epsilon_{0}$ as
\begin{equation}
  \label{eq:specified-errors}
    \epsilon = \left(\frac{1}{n\sqrt{N}}\right)^{1/(2-\beta)} \;\;\mathrm{and}\;\; \epsilon_{0} = \left(\frac{1}{nN}\right)^{1/(2-\beta)} ,
\end{equation}
and make the following assumptions
\begin{equation}
  \label{eq:technical-assumptions}
     N \geq n^{n\beta/(1-\beta)} \;\;\mathrm{and}\;\; t^{*} = \sqrt{N\cdot n^{n\beta/(2-\beta)}} .
\end{equation}
Our construction follows from the following intuitions. For any learning algorithm $\mathcal{A}$ given $Z$ as input, $\mathcal{E}_{\mathcal{D}}(\mathcal{A}(Z))=\epsilon$ if it outputs incorrectly that $\mathcal{A}(Z)\neq h^{*}$. When the number of good datasets is $N_{P}=\Omega(t^{*})$, the minimax rates are at least as fast as $(nt^{*})^{-1/(2-\beta)}$. In other words, it serves as an upper bound on the minimax rates, which suffices for our purpose. Therefore, if we can show that $\mathcal{A}(Z)\neq h^{*}$ holds with some constant probability, it implies that $\E[\mathcal{E}_{\mathcal{D}}(\mathcal{A}(Z))]=\Omega(\epsilon)=\Omega((n\sqrt{N})^{-1/(2-\beta)})$. Since $(nt^{*})^{-1/(2-\beta)}=o((n\sqrt{N})^{-1/(2-\beta)})$, it yields that $\mathcal{A}$ is not adaptive (see the following Subsection~\ref{subsec:main-results} for details). When $\beta=0$, our result simply implies a lower bound of the minimax rates similar to Section~\ref{sec:toy-example}.

Note that a (super) exponential dependence $N=\Omega(\mathrm{exp}(n))$ is required here, while only a constant (in terms of $n$) number of tasks $N$ is required in Theorem~\ref{thm:adaptivity-is-imposible-toy-example}. This suggests the following two possibilities: (i) A negative result can even hold for arbitrarily large $n>2/\beta-1$ and $N=\Omega(\mathrm{poly}(n))$ number of tasks (see Section~\ref{sec:conclusion-and-future-directions} for a discussion on why $\Omega(\mathrm{poly}(n))$ is considered); or (ii) Adaptivity could be possible when having a smaller number of tasks, for instance, when $N=o(\mathrm{exp}(n))$. We leave it as an open question to answer in future.

\subsection{A Stronger No-free-lunch Theorem}
  \label{subsec:main-results}
We first show that given the knowledge of the transfer exponents, there is a learning algorithm that can guarantee good learning performance. Concretely, we show in the following Theorem~\ref{thm:erm-over-optimal-datasets} that ERM over all datasets from fair sources achieves a learning rate even faster than the minimax rate. As mentioned earlier, the minimax rates concern the worst-case scenario of multitask distributions that admit a given set of transfer exponents. Therefore, for a specific multitask problem with fixed distributions, it is very likely that learning rates faster than the minimax rates are achievable. Clearly, the described algorithm is not adaptive as it is assumed to possess the knowledge of which datasets are good (from those fair sources).

\begin{theorem}
  \label{thm:erm-over-optimal-datasets}
Let $\epsilon$ and $\epsilon_{0}$ be defined as in \eqref{eq:specified-errors}. For any $n>0$, assume that \eqref{eq:technical-assumptions} holds. Let $Z^{*}=\bigcup_{t\in[t^{*}]}Z_{(t)}$ and $\hat{h}_{Z^{*}}=\mathrm{ERM}(Z^{*})$. Then, we have 
\begin{equation*}
    \E\left[\mathcal{E}_{\mathcal{D}}(\hat{h}_{Z^{*}})\right] = O\left(\left(n\sqrt{N}\right)^{-1/(2-\beta)}e^{-n^{n\beta/2(2-\beta)}}\right) .
\end{equation*}
\end{theorem}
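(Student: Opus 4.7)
The plan is to exploit the fact that under the ordering in~\eqref{eq:technical-assumptions}, all sources indexed $1,\ldots,t^{*}$ are fair, so $Z^{*}$ is simply an i.i.d.\ sample of size $nt^{*}$ drawn from $P=\mathcal{D}$. Because $\hpc=\{h^{*},h\}$ contains only two hypotheses that agree on $x_{0}$, the output $\hat{h}_{Z^{*}}$ of ERM is determined entirely by the empirical majority label among those samples whose feature equals $x_{1}$. Moreover the excess risk takes only two values, $\mathcal{E}_{\mathcal{D}}(\hat{h}_{Z^{*}})\in\{0,\epsilon\}$, so
\[
\E\!\left[\mathcal{E}_{\mathcal{D}}(\hat{h}_{Z^{*}})\right]=\epsilon\cdot\mathbb{P}\!\left(\hat{h}_{Z^{*}}\neq h^{*}\right),
\]
and it suffices to prove $\mathbb{P}(\hat{h}_{Z^{*}}\neq h^{*})=O(\exp(-n^{n\beta/2(2-\beta)}))$.

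I would proceed by a standard two-step binomial concentration. Let $K$ be the number of samples in $Z^{*}$ with $X=x_{1}$, so $K\sim\mathrm{Bin}(nt^{*},C_{\beta}\epsilon^{\beta})$, and conditional on $K=k$ let $K_{y^{*}}\sim\mathrm{Bin}(k,\tfrac{1}{2}+\gamma)$ with $\gamma=\tfrac{1}{2}C_{\beta}^{-1}\epsilon^{1-\beta}$ count those labelled $y^{*}$. The event $\{\hat{h}_{Z^{*}}\neq h^{*}\}$ coincides (up to tie-breaking) with $\{K_{y^{*}}\leq K/2\}$. A multiplicative Chernoff bound on $K$ gives $\mathbb{P}(K<\tfrac{1}{2}\E[K])\leq\exp(-\E[K]/8)$ with $\E[K]=C_{\beta}nt^{*}\epsilon^{\beta}$, and Hoeffding's inequality gives $\mathbb{P}(K_{y^{*}}\leq k/2\mid K=k)\leq\exp(-2k\gamma^{2})$ for any $k$.

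Plugging in $\epsilon^{2-\beta}=(n\sqrt{N})^{-1}$ and $t^{*}=\sqrt{N}\cdot n^{n\beta/2(2-\beta)}$ from~\eqref{eq:specified-errors}--\eqref{eq:technical-assumptions}, a direct computation reduces the Hoeffding exponent to
\[
2k\gamma^{2}\geq\tfrac{1}{4}C_{\beta}^{-1}\,nt^{*}\epsilon^{2-\beta}=\tfrac{1}{4}C_{\beta}^{-1}\,\frac{t^{*}}{\sqrt{N}}=\tfrac{1}{4}C_{\beta}^{-1}\,n^{n\beta/2(2-\beta)}
\]
whenever $k\geq\tfrac{1}{2}\E[K]$; a parallel but much cruder computation using~\eqref{eq:technical-assumptions} shows that $\E[K]/8$ also exceeds $n^{n\beta/2(2-\beta)}$, so the Chernoff tail on $K$ is subdominant. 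A union bound then yields $\mathbb{P}(\hat{h}_{Z^{*}}\neq h^{*})\leq 2\exp\!\left(-c\,n^{n\beta/2(2-\beta)}\right)$ for an absolute constant $c>0$ depending only on $C_{\beta}$, and multiplying by $\epsilon=(n\sqrt{N})^{-1/(2-\beta)}$ delivers the claimed bound.

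The only non-routine step is the exponent arithmetic in the previous paragraph: I need to verify carefully that $\E[K]$ grows at least as fast as $n^{n\beta/2(2-\beta)}$ under the assumptions~\eqref{eq:technical-assumptions}, so that losing a factor of two in $K$ via Chernoff is harmless and the bottleneck is indeed the Hoeffding bound on the label majority. Everything else is standard binomial concentration applied to a two-hypothesis class, combined with the observation that the geometry of $\hpc$ makes ERM equivalent to thresholded majority voting at $x_{1}$.
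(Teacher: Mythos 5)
Your proposal is correct and follows essentially the same two-step binomial concentration argument as the paper: first a multiplicative Chernoff bound to ensure the number of $x_{1}$-samples $K$ is at least half its mean, then a conditional tail bound on the majority label among those samples, concluding by observing that $\E[K]$ itself exceeds $n^{n\beta/2(2-\beta)}$ so the sample-count tail is subdominant. The only cosmetic difference is that you invoke Hoeffding for the label-majority step where the paper applies the multiplicative Chernoff bound (Lemma~\ref{lem:chernoff-bound-binomial}); the exponents $2k\gamma^{2}$ versus $k(1-2p)^{2}/(4p+2)$ agree up to bounded constants when $p$ is near $1/2$, so both yield the claimed $2\exp(-c\,n^{n\beta/2(2-\beta)})$ rate.
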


While still getting dominated by the term $(n\sqrt{N})^{-1/(2-\beta)}$, the above rates exceed the minimax rates as $e^{-n^{n\beta/2(2-\beta)}}=o(n^{-n\beta/2(2-\beta)^{2}})$. Next, we formally state our main result, a stronger no-free-lunch theorem for multitask learning that holds for arbitrarily large sample size per task $n>2/\beta-1$, stating that the minimax rates are not attainable by any adaptive algorithm that only has access to multisource data.

\begin{theorem}  [\textbf{Adaptivity is Impossible}]
  \label{thm:adaptivity-is-impossible}
Let $\epsilon$ and $\epsilon_{0}$ be defined as in \eqref{eq:specified-errors}. For any $n>0$, assume that \eqref{eq:technical-assumptions} holds. Let $Z=\bigcup_{t\in[N]}Z_{t}$. Then, for any adaptive learning algorithm $\mathcal{A}$ given only $Z$ as input, we have
\begin{equation*}
    \E\left[\mathcal{E}_{\mathcal{D}}(\mathcal{A}(Z))\right] = \Omega\left(\left(n\sqrt{N}\right)^{-1/(2-\beta)}\right) .
\end{equation*}
\end{theorem}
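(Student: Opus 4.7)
The plan is to reduce to binary hypothesis testing over the hidden label $y^{*}$ by a Bayesian lower bound, and then to establish the crucial KL divergence bound between two mixture distributions. First, I place a uniform prior $y^{*}\sim\mathrm{Unif}(\{0,1\})$ independently of a uniformly random size-$t^{*}$ subset $S\subseteq[N]$ indicating which sources are fair (the remaining $N-t^{*}$ are noisy). Writing $\Pi^{S}_{y^{*}}$ for the joint law of $Z$ given $(S,y^{*})$ and $M_{y^{*}}=\E_{S}[\Pi^{S}_{y^{*}}]$ for its $S$-marginal, I observe that $\hpc=\{h^{*},h\}$ differ only on $x_{1}$, and a direct calculation from the construction of $P$ yields $\mathcal{E}_{\mathcal{D}}(h)=\epsilon$. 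Hence any output $\mathcal{A}(Z)$ is effectively a binary guess of $y^{*}$, and
\[
\E_{y^{*}\sim\mathrm{Unif}}\,\E_{Z}[\mathcal{E}_{\mathcal{D}}(\mathcal{A}(Z))] \;\geq\; \epsilon\cdot\mathbb{P}\bigl(\mathcal{A}(Z)\neq h^{*}\bigr).
\]
By Le Cam's two-point lemma the right-hand side is at least $\epsilon(1-\|M_{0}-M_{1}\|_{\mathrm{TV}})/2$, and Yao's minimax principle converts this Bayesian lower bound into a worst-case one over multitask models in $\mathcal{M}$.

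The technical heart of the proof is then to show $\|M_{0}-M_{1}\|_{\mathrm{TV}}$ is bounded away from $1$ by some absolute constant. By Pinsker's inequality it suffices to prove $\mathrm{KL}(M_{0}\|M_{1})=O(1)$. The naive joint-convexity bound $\mathrm{KL}(M_{0}\|M_{1})\leq\E_{S}[\mathrm{KL}(\Pi^{S}_{0}\|\Pi^{S}_{1})]$ decomposes over sources and evaluates to $\Theta(t^{*}/\sqrt{N}+1)=\Theta(n^{n\beta/(2(2-\beta))})$, which diverges in $n$ and is useless. To obtain a sharp bound, I would work directly with the mixture structure: express $dM_{y^{*}}$ relative to a convenient reference (e.g., the all-noisy distribution), apply an Ingster--Suslina-type identity for $\chi^{2}(M_{0}\|M_{1})$ that introduces two independent copies $S,S'$ of the fair-set prior, and control the resulting average by splitting on whether each source $t$ lies in $S\cap S'$, $S\triangle S'$, or neither. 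The symmetry $M_{0}=\phi_{*}M_{1}$ induced by the $x_{1}$-label flip $\phi$ is useful for pairing up cross-terms cleanly.

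The main obstacle is precisely this mixture-KL bound. The difficulty is intrinsic: if the learner knew $S$, the total signal-to-noise ratio from the $t^{*}$ fair sources would be $\Theta(n^{n\beta/(2(2-\beta))})$, forcing any KL bound faithful to that information to blow up. This information is unavailable because marginalizing over the unknown $S$ smears each source's fair-versus-noisy identity: the posteriors of $S$ under $y^{*}=0$ and under $y^{*}=1$ must be shown to remain close in total variation. Quantifying this closeness is exactly the technical content of the tight mixture-KL bound, and standard tools (joint convexity, product-structure chi-squared) overshoot it because they bound the larger latent-plus-data joint divergence. The parameter choices $\epsilon=(n\sqrt{N})^{-1/(2-\beta)}$, $\epsilon_{0}=(nN)^{-1/(2-\beta)}$, $t^{*}=\sqrt{N\cdot n^{n\beta/(2-\beta)}}$, and $N\geq n^{n\beta/(1-\beta)}$ are calibrated so that per-source signal contributes $O(1/\sqrt{N})$ in KL while per-source ``identity'' information is $O(1/N)$ in $\chi^{2}$, yielding an aggregate bound of $O(1)$ summed across $N$ sources. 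Once $\mathrm{KL}(M_{0}\|M_{1})=O(1)$ is established, Pinsker and Le Cam close the argument and deliver $\E[\mathcal{E}_{\mathcal{D}}(\mathcal{A}(Z))]=\Omega(\epsilon)=\Omega((n\sqrt{N})^{-1/(2-\beta)})$.
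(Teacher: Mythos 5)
You correctly set up the problem as binary hypothesis testing over $y^{*}$, reduce the excess-risk lower bound to a testing lower bound, and rightly observe that the naive joint-convexity bound $\mathrm{KL}(M_{0}\|M_{1})\leq\E_{S}[\mathrm{KL}(\Pi^{S}_{0}\|\Pi^{S}_{1})]$ explodes like $t^{*}/\sqrt{N}=n^{n\beta/(2(2-\beta))}$. But that is where your proof stops: you name the mixture-KL bound as the crux and then sketch, without carrying out, an Ingster--Suslina-type $\chi^{2}$ argument over a uniformly random size-$t^{*}$ subset $S$. This leaves the key step unresolved, and the proposed route has specific obstructions. With a fixed-size fair set, the joint law of $Z$ is \emph{not} a product over tasks, so nothing factorizes; and since $M_{1}=\E_{S'}[\Pi^{S'}_{1}]$ is itself a mixture, $\chi^{2}(M_{0}\|M_{1})$ does not admit the clean two-copy second-moment identity you invoke (that identity requires a non-mixture reference in the denominator). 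You would need a substantially different computation, and it is not clear it closes.

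The paper resolves exactly this bottleneck with a different random construction: each task is independently \emph{Bernoulli}-assigned to be fair (prob.\ $\alpha_{F}=t^{*}/N$) or noisy, rather than drawn from a fixed-size subset. This makes the marginal law of $Z$ under $P_{\sigma}$ a product distribution $\prod_{t}\bigl(\alpha_{F}P_{\sigma}^{n}+\alpha_{N}Q_{\sigma}^{n}\bigr)$, so $\mathrm{KL}(P_{\sigma}\|P_{1-\sigma})$ decomposes into $N$ identical per-task terms, each the KL between two \emph{two-component} mixtures. The paper then peels the per-task log-likelihood ratio into a dominant noisy-base term $\log(V_{N,\sigma}/V_{N,1-\sigma})$ and a small correction $\log\bigl((1+V_{F,\sigma}/V_{N,\sigma})/(1+V_{F,1-\sigma}/V_{N,1-\sigma})\bigr)$, bounds both via $\log x\leq x-1$ together with exact Binomial moment computations, and shows each per-task term is $O(1/N)$, so the total is $O(1)$. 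Finally, because the number of fair tasks is now $\mathrm{Bin}(N,\alpha_{F})$ (random), the paper adds a union-type argument (its items \textcircled{1} and \textcircled{2} with $c_{1}+c_{2}>1$) to exhibit a \emph{deterministic} assignment $\vI^{*}$ with both a large conditional testing error and $\Omega(t^{*})$ fair sources, so the benchmark rate $(nt^{*})^{-1/(2-\beta)}$ applies. Your fixed-size $S$ would dispense with that last step, but at the cost of losing the product structure that makes the KL bound tractable; that trade is what your proposal has not addressed.
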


Let us briefly discuss a sketch of the proof for Theorem \ref{thm:adaptivity-is-impossible}. Details are deferred to Appendix~\ref{sec:missing-proof-from-section-main-results}. We will consider a random construction of multitask setting, which makes the calculation of KL-divergence simple due to the independence between tasks and facilitate the analysis. This technique has also been adopted in \citet{hanneke2022no} where it was called an ``Approximate Multitask'' model.

\textbf{Notations.} Concretely, let $h^{*}(x_{1})=y^{*}=\sigma\in\{0,1\}$, let $t^{*}\in[N]$, $\alpha_{F}=t^{*}/N$ and $\alpha_{N}=1-\alpha_{F}=1-t^{*}/N$. For each $t\in[N]$, the source task $t$ has a distribution $P_{t}=P(\sigma)$ with probability $\alpha_{F}$ and has a distribution $P_{t}=Q(\sigma)$ with probability $\alpha_{N}$. For notational simplicity, we abbreviate $P(\sigma)$ and $Q(\sigma)$ to $P$ and $Q$, respectively. Let $\vI^{*}\subset[N]$ be the set of indices where $P_{t}=P$ for every $t\in\vI^{*}$. We know from the construction that $|\vI^{*}|\sim\mathrm{Bin}(N,\alpha_{F})$. For every $t\in[N]$, let $\hat{n}_{x_{1}}(Z_{t})=\hat{n}_{0}(Z_{t})+\hat{n}_{1}(Z_{t})$ denote the number of examples in $Z_{t}$ with marginals at $x_{1}$, where $\hat{n}_{0}(Z_{t})$ and $\hat{n}_{1}(Z_{t})$ denote the number of $(x_{1},0)$ and $(x_{1},1)$ examples in the dataset $Z_{t}$, respectively. Let $Z_{x_{1}}$ denote all the examples in $Z$ with marginals at $x_{1}$, and thus $|Z_{x_{1}}|=\sum_{t=1}^{N}\hat{n}_{x_{1}}(Z_{t})$. Finally, we formalize our learning problem as a binary hypothesis testing problem described as follow:

\begin{ShadedAlgorithmBox}
\begin{problem}
  \label{prob:binary-hypothesis-testing}
Let $\sigma\sim\mathrm{Unif}(\{0,1\})$. Consider two distributions $P_{\sigma}, \sigma\in\{0,1\}$. We get samples $Z=\bigcup_{t=1}^{N}Z_{t}$ generated via the following: for every $t\in[N]$, $Z_{t}\sim P(\sigma)$ with probability $\alpha_{F}$ and $Z_{t}\sim Q(\sigma)$ with probability $1-\alpha_{F}=\alpha_{N}$, where
\begin{itemize} 
    \item $P(\sigma)$ satisfies $P_{X}(x_{1})=C_{\beta}\epsilon^{\beta}$, $P_{Y|X}(Y=\sigma|X=x_{1})=1/2+(1/2)C_{\beta}^{-1}\epsilon^{1-\beta}$ and $P_{Y|X}(Y=1|X=x_{0})=1$.
    \item $Q(\sigma)$ satisfies $Q_{X}(x_{1})=C_{\beta}\epsilon_{0}^{\beta}$, $Q_{Y|X}(Y=\sigma|X=x_{1})=1/2+(1/2)C_{\beta}^{-1}\epsilon_{0}^{1-\beta}$ and $Q_{Y|X}(Y=1|X=x_{0})=1$.
\end{itemize}
Our goal is to distinguish probability distributions $P_{0}$ from $P_{1}$ given samples $Z$. Let the null hypothesis $H_{0}$ claims that $Z$ comes from $P_{\sigma}$, and the alternative hypothesis $H_{1}$ claims $Z$ from $P_{1-\sigma}$. We define a \textbf{test} as a function $f:\mathcal{Z}\mapsto\{0,1\}$, that given samples $Z$, indicates which hypothesis should be true: $f(Z)=\sigma \Rightarrow H_{0}$ and $f(Z)=1-\sigma \Rightarrow H_{1}$. 
\end{problem}
\end{ShadedAlgorithmBox}

Let $\psi$ be any estimate of $\sigma$ that takes $Z$ to return $\psi(Z)\in\{0,1\}$, we will prove the following two items:
\begin{enumerate}[label=\large\protect\textcircled{\small\arabic*}]
    \item $\mathbb{P}_{\vI^{*}}(\inf_{\psi}\mathbb{P}_{Z}(\psi(Z)\neq\sigma|\vI^{*})\geq c) \geq c_{1}$ for some universal constants $c,c_{1}>0$, namely, with probability at least $c_{1}$, no adaptive estimation procedure $\psi$ of $\sigma$ exists given data $Z$.
    \item $\mathbb{P}_{\vI^{*}}(|\vI^{*}|\geq t^{*}/10) \geq c_{2}$ for some universal constant $c_{2}>0$. This implies that with probability at least $c_{2}$, the minimax rate is at least as fast as $(nt^{*})^{-1/(2-\beta)}$.
\end{enumerate}
Note that if $c_{1}+c_{2}>1$, there must be a multitask setting $\vI^{*}$ from the random construction such that both \textcircled{1} and \textcircled{2} hold, and thus the above two bounds complete the proof, i.e., no adaptive learning algorithm exists under the multitask setting $\vI^{*}$.

\subsection{Pooling Achieves Optimal Adaptivity Sometimes}
  \label{subsec:pooling-is-optimal}
Perhaps surprising, for our designed multitask problem in this section, pooling achieves nearly optimal adaptive rates. Let us first recall the following existing pooling bound.

\begin{lemma}  [{\textbf{\citealp[][Theorem 9]{hanneke2022no}}}]
  \label{lem:general-pooling-bound-on-quantiles}
Let $Z=\bigcup_{t\in[N]}Z_{t}$ and $\hat{h}_{\text{pool}}=\mathrm{ERM}(Z)$. For any $\alpha\in(0,1]$, let $t(\alpha)=\lceil \alpha\cdot N\rceil$ and $C=(32C_{0}^{2}/\alpha)^{2-\beta}C_{\beta}$. Then, for any $\delta\in(0,1)$, with probability at least $1-\delta$, we have
\begin{equation*}
    \mathcal{E}_{\mathcal{D}}(\hat{h}_{\text{pool}}) \leq C_{\rho}\left(C\cdot\frac{d_{\hpc}\log{\left(\frac{nN}{d_{\hpc}}\right)}+\log{\left(\frac{1}{\delta}\right)}}{nN}\right)^{1/(2-\beta)\bar{\rho}_{t(\alpha)}} .
\end{equation*}
\end{lemma}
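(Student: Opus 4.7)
The plan is to derive the pooling excess-risk bound in three stages: first, reinterpret $\hat{h}_{\text{pool}}$ as an ERM against the mixture $\bar P:=\frac{1}{N}\sum_{t\in[N]}P_t$; second, apply a uniform Bernstein-class excess risk bound at the mixture level; third, translate the mixture bound back to the target $\mathcal{D}$ via a weighted AM--GM that manufactures the averaged transfer exponent $\bar{\rho}_{t(\alpha)}$.

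For the first step, since each source contributes $n$ samples, the pooled empirical risk $\hat{\text{er}}_Z(h)$ has expectation $\text{er}_{\bar P}(h)$, so $\hat{h}_{\text{pool}}$ is an ERM for the mixture risk. Assumption~(1) makes $h^{*}$ simultaneously optimal for $\bar P$, and the mixture inherits the Bernstein class condition: by Jensen's inequality applied to the concave map $x\mapsto x^{\beta}$,
\begin{equation*}
\bar P(h\neq h^{*}) = \frac{1}{N}\sum_{t}P_{t}(h\neq h^{*}) \leq \frac{C_{\beta}}{N}\sum_{t}\mathcal{E}_{P_{t}}(h)^{\beta} \leq C_{\beta}\,\mathcal{E}_{\bar P}(h)^{\beta}.
\end{equation*}
Invoking the Bernstein-class ERM bound of Lemma~\ref{lem:uniform-bernstein-non-identical-distributions} then yields, with probability at least $1-\delta$,
\begin{equation*}
\mathcal{E}_{\bar P}(\hat{h}_{\text{pool}}) \leq C_0\left(\frac{d_{\hpc}\log(nN/d_{\hpc})+\log(1/\delta)}{nN}\right)^{1/(2-\beta)}.
\end{equation*}

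For the second step, rewrite the transfer exponent assumption in the equivalent form $\mathcal{E}_{P_{(s)}}(h)\geq C_{\rho}^{-\rho_{(s)}}\mathcal{E}_{\mathcal{D}}(h)^{\rho_{(s)}}$ for each source $s$. Restrict attention to the top $t(\alpha)$ sources (those with smallest transfer exponents) and apply the weighted AM--GM inequality with weights $w_s = n_{(s)}/\sum_{s\in[t(\alpha)]} n_{(s)}$ to obtain
\begin{equation*}
\sum_{s\in[t(\alpha)]} n_{(s)}\,\mathcal{E}_{P_{(s)}}(h) \geq \Big(\sum_{s\in[t(\alpha)]} n_{(s)}\Big)\,C_{\rho}^{-\bar{\rho}_{t(\alpha)}}\,\mathcal{E}_{\mathcal{D}}(h)^{\bar{\rho}_{t(\alpha)}}.
\end{equation*}
The left side is bounded above by $nN\cdot\mathcal{E}_{\bar P}(h)$ and $\sum_{s\in[t(\alpha)]} n_{(s)}\geq \alpha nN$, so rearrangement gives $\mathcal{E}_{\mathcal{D}}(h)^{\bar{\rho}_{t(\alpha)}}\leq (C_{\rho}^{\bar{\rho}_{t(\alpha)}}/\alpha)\,\mathcal{E}_{\bar P}(h)$. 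Substituting the Step~1 bound and extracting the $1/\bar{\rho}_{t(\alpha)}$-th root recovers the stated inequality after absorbing $\alpha^{-1}$, $C_{0}$, and $C_{\beta}$ into the single constant $C=(32C_0^2/\alpha)^{2-\beta}C_{\beta}$.

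I expect the principal obstacle to be constant tracking rather than any deep probabilistic argument: carefully propagating the $\alpha^{-1}$ loss from restricting to the top $t(\alpha)$ sources, together with the Bernstein constant $C_{\beta}$, through the compound exponent $1/(2-\beta)\bar{\rho}_{t(\alpha)}$ is what forces the precise form of $C$. A subtler technical point is that the $nN$ pooled samples are independent but not identically distributed across sources, so Step~1 cannot cite a textbook i.i.d.\ Bernstein-class ERM guarantee but must instead invoke the non-identical-distributions uniform bound of Lemma~\ref{lem:uniform-bernstein-non-identical-distributions}; the Jensen inheritance computation above is precisely what verifies the mixture-level Bernstein hypothesis needed for that bound.
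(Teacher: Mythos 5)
Your proof is correct, and since the paper simply cites this lemma from \citet{hanneke2022no} (Theorem 9) without reproducing a proof, there is nothing internal to compare against; your route (mixture reinterpretation of pooled ERM, Jensen to transfer the Bernstein condition to $\bar P$, the uniform Bernstein bound of Lemma~\ref{lem:uniform-bernstein-non-identical-distributions} combined with Lemma~\ref{lem:excess-risk-of-any-concept-in-the-confidence-set} to get the $1/(2-\beta)$ rate at the mixture, then weighted AM--GM over the top $t(\alpha)$ sources to manufacture $\bar\rho_{t(\alpha)}$ and pick up the $1/\alpha$ loss) is the standard and almost certainly the intended one. Two small precision points: your intermediate Step~1 display should carry the constant $32C_{0}^{2}C_{\beta}^{1/(2-\beta)}$ coming from Lemma~\ref{lem:excess-risk-of-any-concept-in-the-confidence-set} rather than a bare $C_{0}$ (this is what makes the final $C=(32C_{0}^{2}/\alpha)^{2-\beta}C_{\beta}$ come out exactly), and the citation at that step is really to Lemma~\ref{lem:excess-risk-of-any-concept-in-the-confidence-set} applied with $\vI=[N]$ (the ERM is trivially in the confidence set since $\hat{\mathcal{E}}_{Z}(\hat h_{Z},\hat h_{Z})=0$), with Lemma~\ref{lem:uniform-bernstein-non-identical-distributions} being the underlying event; neither affects the validity of the argument.
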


We can think of $\alpha$ as the portion of the good datasets among all datasets. In particular, for our construction, when $N=n^{n\beta/(1-\beta)}$ and $t^{*}=\sqrt{n^{n\beta/(2-\beta)}N}$, we have $t(\alpha)=t^{*}=\Omega(\sqrt{n^{n\beta/(2-\beta)}N})$ and thus $\alpha=\Omega(N^{-1/2}n^{n\beta/2(2-\beta)})$. Plugging into Lemma~\ref{lem:general-pooling-bound-on-quantiles}, we have $C=(32C_{0}^{2}/\alpha)^{2-\beta}C_{\beta}$ and thus $C=O\left(N^{(2-\beta)/2}n^{-n\beta/2}\right)$. Therefore, the pooling bound implies that 
\begin{align*}
    &\E\left[\mathcal{E}_{\mathcal{D}}(\hat{h}_{\text{pool}})\right] \lesssim \left(\frac{C\log{(nN)}}{nN}\right)^{1/(2-\beta)} \\
    \lesssim &\left(\frac{\log{(nN)}}{n^{n\beta/2+1}N^{\beta/2}}\right)^{1/(2-\beta)} \lesssim \left(\frac{\log{(nN)}}{n\sqrt{N}}\right)^{1/(2-\beta)} ,
\end{align*}
where we use $\lesssim$ to hide constant factors and the last inequality follows from $N\geq n^{n\beta/(1-\beta)}$. Note that this pooling bound matches our lower bound in Theorem~\ref{thm:adaptivity-is-impossible} up to a logarithmic factor of $\log(nN)$, implying that pooling is nearly optimal among adaptive algorithms.

\section{Discussion and Future Research}
  \label{sec:conclusion-and-future-directions}
In this paper, we investigate the hardness of adaptation for multitask learning and prove an impossibility result from an information-theoretical perspective. In brief, we show that even unlimited sample size per task does not help to realize adaptivity, that is, allowing algorithms to automatically identify the optimal aggregation of the datasets and achieve optimal rates of convergence without access to any distributional information. Our proof technique leverages an information-theoretic hypothesis testing framework. We directly bound above the KL-divergence between the likelihood functions when data are generated according to mixture distributions with opposite labels. This bound, together the celebrated Fano’s method, rules out the possibility of any adaptive learner. Below, we list several technical questions that remain open in our work, as well as some interesting research problem for future work.

\paragraph{Optimal Adaptivity in Multitask Learning.} Recently, \citet{hanneke2022no} has established matching upper and lower bounds for the minimax (oracle) rates of multitask learning. Such uniform rates concern the worst-case scenario over all possible distributions, but equip the learner with access to distributional information which is impractical. Consequently, designing adaptive algorithms that only have access to multisource data and can automatically leverage beneficial datasets becomes important, which relies on answering the following fundamental questions. 

\begin{ShadedQuestionBox}
\begin{question}  [\textbf{Optimal Adaptivity}]
  \label{ques:optimal-adaptivity}
\begin{itemize}
    \item[]
    \item What are the minimax optimal adaptive rates?
    \item What adaptive algorithms achieve the optimal adaptive rates? 
\end{itemize}
\end{question}    
\end{ShadedQuestionBox}

To formalize the definition of optimal adaptive rates, we inherit the same notations from Section~\ref{sec:preliminaries}. Let $\hat{h}_{\mathcal{A}}$ denote any adaptive multitask learner. We ask for tight upper and lower bounds (might still be in terms of $n,N$ and $\{\rho_{t},t\in[N]\}$) on the following objective $\inf_{\hat{h}_{\mathcal{A}}}\sup_{\Pi\in\mathcal{M}}\E_{\Pi}[\mathcal{E}_{\mathcal{D}}(\hat{h}_{\mathcal{A}})]$. As for optimal adaptive algorithms, we have shown in Section~\ref{sec:toy-example} and Subsection~\ref{subsec:pooling-is-optimal} that pooling could sometimes be nearly optimal or suboptimal. It would be interesting if we could have a uniformly optimal adaptive algorithm.

\paragraph{Possibility of Adaptation with A Fewer Number of  Tasks.} This work focuses on understanding whether multitask learning can be adaptive when having a large number of samples per task and provides a negative answer. A natural extension of our study is to understand whether having fewer tasks could facilitate adaptation. Indeed, it has been shown that transfer learning (with a single source task) can be adaptive \citep{hanneke2019value}. Additionally, adaptation is possible when $N=O(\mathrm{polylog}(n))$. This can be obtained via optimizing the bound in Lemma~\ref{lem:general-pooling-bound-on-quantiles} over the choice of $\alpha$, which gives us the following result.

\begin{lemma}  [{\textbf{\citealp[][Corollary 2]{hanneke2022no}}}]
  \label{lem:general-pooling-bound}
Let $C=(32C_{0}^{2})^{2-\beta}C_{\beta}$. For any $\delta\in(0,1)$, with probability at least $1-\delta$, we have
\begin{equation*}
    \mathcal{E}_{\mathcal{D}}(\hat{h}_{\text{pool}}) \leq \min_{t\in[N]}C_{\rho}\left(C\cdot\frac{d_{\hpc}\log{\left(\frac{nN}{d_{\hpc}}\right)}+\log{\left(\frac{1}{\delta}\right)}}{(nt)^{2-\beta}(nN)^{\beta-1}}\right)^{1/(2-\beta)\bar{\rho}_{t}} .
\end{equation*}
\end{lemma}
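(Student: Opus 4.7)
The plan is to obtain Lemma~\ref{lem:general-pooling-bound} as a direct corollary of Lemma~\ref{lem:general-pooling-bound-on-quantiles} by instantiating the latter's free parameter $\alpha$ along the grid $\{t/N : t\in[N]\}$ and simplifying the resulting expression. The key observation is that the estimator $\hat{h}_{\text{pool}}=\mathrm{ERM}(Z)$ is defined without reference to $\alpha$, so the probabilistic bound in Lemma~\ref{lem:general-pooling-bound-on-quantiles} can be applied to the same random object with as many values of $\alpha$ as we wish.

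Concretely, first I would fix the grid $\alpha_{t}:=t/N$ for $t=1,2,\ldots,N$. For each such $\alpha_{t}$ we have $t(\alpha_{t})=\lceil\alpha_{t}N\rceil=t$ and $\bar{\rho}_{t(\alpha_{t})}=\bar{\rho}_{t}$, while the constant in Lemma~\ref{lem:general-pooling-bound-on-quantiles} becomes $C(\alpha_{t})=(32C_{0}^{2}N/t)^{2-\beta}C_{\beta}$. Applying Lemma~\ref{lem:general-pooling-bound-on-quantiles} with confidence parameter $\delta/N$ at each of the $N$ grid points and taking a union bound yields that, with probability at least $1-\delta$, all $N$ bounds hold simultaneously for the same $\hat{h}_{\text{pool}}$. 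The replacement $\delta\mapsto\delta/N$ turns $\log(1/\delta)$ into $\log(N/\delta)$, and the extra $\log N$ is absorbed into the already-present $d_{\hpc}\log(nN/d_{\hpc})$ up to a universal constant that can be rolled into $C_{0}$.

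Second, I would perform the algebraic rearrangement that rewrites $C(\alpha_{t})/(nN)$ in the form declared by Lemma~\ref{lem:general-pooling-bound}. Using $n^{2-\beta}\cdot n^{\beta-1}=n$ and $N^{2-\beta}/N=N^{1-\beta}=1/N^{\beta-1}$,
\begin{align*}
\frac{C(\alpha_{t})}{nN}
&=\frac{(32C_{0}^{2})^{2-\beta}C_{\beta}\cdot N^{2-\beta}}{t^{2-\beta}\cdot nN}
=\frac{(32C_{0}^{2})^{2-\beta}C_{\beta}}{n\cdot t^{2-\beta}\cdot N^{\beta-1}} \\
&=\frac{(32C_{0}^{2})^{2-\beta}C_{\beta}}{(nt)^{2-\beta}(nN)^{\beta-1}},
\end{align*}
so the right-hand side of Lemma~\ref{lem:general-pooling-bound-on-quantiles} (at $\alpha=\alpha_{t}$) becomes exactly the per-$t$ expression appearing inside the minimum of Lemma~\ref{lem:general-pooling-bound}, with $C=(32C_{0}^{2})^{2-\beta}C_{\beta}$ as required.

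Finally, since all $N$ of these bounds hold simultaneously for $\hat{h}_{\text{pool}}$ on the good event, I can take the minimum over $t\in[N]$, which yields the statement of Lemma~\ref{lem:general-pooling-bound}. The only obstacle is pure bookkeeping: one must verify the algebraic rearrangement above and check that the $\log N$ introduced by the union bound is negligible relative to $d_{\hpc}\log(nN/d_{\hpc})\geq\log N$, which it is. All the nontrivial probabilistic content—namely, uniform convergence of empirical risk over $\hpc$ on the pooled sample under the Bernstein class condition and the conversion through the transfer exponents $\bar{\rho}_{t}$—has already been carried out inside Lemma~\ref{lem:general-pooling-bound-on-quantiles}, so no new concentration argument is needed here.
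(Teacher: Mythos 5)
Your algebraic rearrangement is correct: with $\alpha_{t}=t/N$ one indeed has $C(\alpha_{t})/(nN)=(32C_{0}^{2})^{2-\beta}C_{\beta}/((nt)^{2-\beta}(nN)^{\beta-1})$, so each grid point of Lemma~\ref{lem:general-pooling-bound-on-quantiles} reproduces the per-$t$ expression in Lemma~\ref{lem:general-pooling-bound} exactly. The paper itself does not prove the lemma (it is cited as Corollary~2 of \citet{hanneke2022no}); it only remarks that the result is obtained by ``optimizing the bound in Lemma~\ref{lem:general-pooling-bound-on-quantiles} over the choice of $\alpha$,'' which is consistent with your approach.

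That said, your union bound over the $N$ grid points is unnecessary, and it costs you something you have to paper over. The right-hand side of Lemma~\ref{lem:general-pooling-bound} is a purely deterministic function of $(n,N,d_{\hpc},\beta,\delta,\{\bar{\rho}_{t}\})$ --- it does not depend on the observed data --- so the minimizing index $t^{*}=\argmin_{t\in[N]}(\cdots)^{1/(2-\beta)\bar{\rho}_{t}}$ can be computed in advance. One may therefore apply Lemma~\ref{lem:general-pooling-bound-on-quantiles} a single time with the fixed choice $\alpha=t^{*}/N$, and the stated bound follows with probability $1-\delta$ and with the exact constant $C=(32C_{0}^{2})^{2-\beta}C_{\beta}$. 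Your route, by splitting $\delta$ across $N$ events, replaces $\log(1/\delta)$ with $\log(N/\delta)$ and then has to invoke a ``roll it into $C_{0}$'' step to recover the stated form; this is harmless asymptotically but it does not reproduce the lemma verbatim, and it obscures the fact that no uniformity in $\alpha$ is needed because the minimum is data-independent. If you want to keep the union bound version (it is not wrong), at least flag explicitly that it proves the lemma only up to a constant-factor enlargement of $C_{0}$, whereas the single-$\alpha$ instantiation gives the statement on the nose.
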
 

When $N=O(\mathrm{polylog}(n))$, Lemma~\ref{lem:general-pooling-bound} directly implies that the pooling rates are nearly optimal up to a polylogarithmic factor of $n$. In other words, pooling is almost adaptive in this situation. Moreover, all the known negative results require $N=\Omega(\exp(n))$ number of tasks. 

Given the large discrepancy w.r.t.\ the number of tasks between existing positive and negative results, it is natural to ask what could happen under the intermediate situation. This is worth exploring as a fundamental problem of understanding the statistical limits of multitask learning. In addition, we could barely hope to have a collaboration of exponentially number of tasks in practical learning scenarios. Ideally, we hope to answer the following question: Is there a neat, quantitative threshold $N^{*}=\mathrm{poly}(n)$ such that adaptivity is possible if $N=O(N^{*})$ and impossible whenever $N=\Omega(N^{*})$? We believe that closing this gap is technically challenging, which might rely on answering the question of optimal adaptivity first.

\paragraph{Adaptivity in Other Related Areas.}
Adaptivity is of particular interests when facing data shortage. Multitask learning and related topics such as transfer learning, meta learning, in-context learning are such examples where target data is limited and thus knowledge transfer is pursued in order to adapt. Yet, adaptivity has not been fully understood for these models on classification, regression, and bandits problems, which could be interesting future directions.

\section*{Acknowledgements}
We thank the reviewers who provided useful suggestions on improving the quality of this paper. Steve Hanneke acknowledges support by grant no. 2024243 from the United States - Israel Binational Science Foundation (BSF).

\section*{Impact Statement}
This paper presents work whose goal is to advance the field of Statistical Machine
Learning Theory. Since this work is mainly theoretical in its nature, there are no societal implications that require discloser as far as we can discern.


\clearpage
\bibliography{no_free_lunch}
\bibliographystyle{icml2026}

\newpage
\appendix
\onecolumn

\section{Missing proofs from Section~\ref{sec:main-results}}
  \label{sec:missing-proof-from-section-our-construction}
  
\begin{proof}[Proof of Lemma \ref{lem:bcc-and-te-requirements}]
Under this simple setting, we only need to consider the remaining $h\in\hpc\setminus\{h^{*}\}$. If $P$ satisfies the Bernstein class condition, we have by definition
\begin{align*}
    P_{X}(h\neq h^{*}) = P_{X}(x_{1}) \leq C_{\beta}\cdot\left(\excessrisk\right)^{\beta} = &C_{\beta}\cdot\left[P(X=x_{1}, Y=y^{*})-P(X=x_{1}, Y\neq y^{*})\right]^{\beta} \\
    = &C_{\beta}\cdot\left[(2P(Y=y^{*}|X=x_{1})-1)\cdot P_{X}(x_{1})\right]^{\beta} .
\end{align*}
Simplifying the above inequality, we derive
\begin{equation*}
    P(Y=y^{*}|X=x_{1}) \geq (1/2)\cdot\left(1 + C_{\beta}^{-1/\beta}\left[P_{X}(x_{1})\right]^{1/\beta-1}\right) .
\end{equation*}
Obviously, we know it is also a sufficient condition.
\end{proof}

\section{Missing proofs from Section~\ref{sec:toy-example}}
  \label{sec:missing-proof-from-section-toy-example}

\begin{proof}[Proof of Theorem \ref{thm:erm-over-optimal-datasets-toy-example}]
The proof simply follows from the multiplicative Chernoff bound. Let $\hat{n}_{0}(Z_{P})$ and $\hat{n}_{1}(Z_{P})$ denote the total numbers of 0 and 1 labeled examples in $Z_{P}$, respectively. We assume w.l.o.g. that $h^{*}=h_{1}$, i.e., $h^{*}(x_{1})=y^{*}=1$. It follows that
\begin{equation*}
    \mathbb{P}\left(\hat{h}_{Z_{P}} \neq h^{*}\right) = \mathbb{P}\left(\hat{n}_{0}(Z_{P}) \geq \hat{n}_{1}(Z_{P})\right) = \mathbb{P}\left(\hat{n}_{0}(Z_{P}) \geq \frac{n}{2}\right) .
\end{equation*}
Since $\hat{n}_{0}(Z_{P})$ follows a binomial distribution $\mathrm{Bin}(n,p)$ with $p=P(Y=0|x_{1})=1/2-\epsilon$, we have
\begin{equation*}
    \mathbb{P}\left(\hat{n}_{0}(Z_{P}) \geq \frac{n}{2}\right) \stackrel{\text{Lemma}~\ref{lem:chernoff-bound-binomial}}{\leq} \exp\left(-\frac{(1-2p)^{2}n}{4p^{2}+2p}\right) = \exp\left(-\frac{2n\epsilon^{2}}{(1-\epsilon)^{2}}\right) \leq \exp\left(-n\epsilon^{2}\right) = \delta .
\end{equation*}
\end{proof}

\begin{proof}[Proof of Theorem \ref{thm:adaptivity-is-imposible-toy-example}]
Our proof relies on an application of the Fano's inequality (Lemma~\ref{lem:fano-inequality-binary-hypothesis-testing}). Let us first restate our problem as follow. Assume that $h^{*}(x_{1})=\sigma\in\{0,1\}$ and consider the following two distributions
\begin{itemize} 
    \item $P=P_{X}\times P_{Y|X}$: $P_{X}(x_{1})=1$ and $P_{Y|X}(Y=\sigma|X=x_{1})=1/2+\epsilon$.
    \item $Q=Q_{X}\times Q_{Y|X}$: $Q_{X}(x_{1})=1$ and $Q_{Y|X}(Y=\sigma|X=x_{1})=1/2$.
\end{itemize}
Let $\{Z_{t}:t\in[N]\}$ be $N$ datasets we get from sources and let $t^{*}\sim\mathrm{Unif}([N])$ be the index of the dataset drawn from $P$. Let $\hat{n}_{0}(Z_{t})$ and $\hat{n}_{1}(Z_{t})$ denote the number of 0 and 1 labeled examples in the dataset $Z_{t}$ satisfying $\hat{n}_{0}(Z_{t})+\hat{n}_{1}(Z_{t})=n$. We further define the following statistic
\begin{equation*}
    \hat{T}_{\sigma} := \left\{\hat{n}_{\sigma}(Z_{1}),\ldots,\hat{n}_{\sigma}(Z_{N})\right\} .
\end{equation*}
Now, it suffices to build a guarantee on any estimate $\psi$ of $\sigma$ that takes $\hat{T}_{\sigma}$ to return $\psi(\hat{T}_{\sigma})\in\{0,1\}$. Let $P_{\sigma}$ denote the probability distribution of $\hat{T}_{\sigma}$ for each $\sigma\in\{0,1\}$. We can marginalize over $t^{*}\sim\mathrm{Unif}([N])$ and get the likelihood function (in the remaining part of this proof, we will write $\hat{n}_{\sigma,t}$ instead of $\hat{n}_{\sigma}(Z_{t})$ for notational simplicity)
\begin{align}
  \label{eq:thm-no-adaptive-algorithm-toy-example-step1}
    P_{\sigma}(\hat{T}_{\sigma}) = &\sum_{t^{*}=1}^{N}P_{\sigma}(\hat{T}_{\sigma}|t^{*})P_{\sigma}(t^{*}) = \frac{1}{N}\sum_{t^{*}=1}^{N}P(\hat{n}_{\sigma}(Z_{t^{*}}))\prod_{t\neq t^{*}}Q(\hat{n}_{\sigma}(Z_{t})) \nonumber \\
    = &\frac{1}{N}\sum_{t^{*}=1}^{N}\prod_{t=1}^{N}\binom{n}{\hat{n}_{\sigma,t}}\left(\frac{1}{2}+\epsilon\right)^{\hat{n}_{\sigma,t^{*}}}\left(\frac{1}{2}-\epsilon\right)^{n-\hat{n}_{\sigma,t^{*}}}\left(\frac{1}{2}\right)^{n(N-1)} .
\end{align}
Note that $\mathbb{D}_{\mathrm{KL}}(P_{\sigma}||P_{1-\sigma})=\E_{\hat{T}_{\sigma}\sim P_{\sigma}}[\log(P_{\sigma}(\hat{T}_{\sigma})/P_{1-\sigma}(\hat{T}_{\sigma}))]$, our goal is to upper bound
\begin{equation}
  \label{eq:thm-no-adaptive-algorithm-toy-example-step2}
    \mathbb{D}_{\mathrm{KL}}(P_{\sigma}||P_{1-\sigma}) \stackrel{\eqref{eq:thm-no-adaptive-algorithm-toy-example-step1}}{=} \E_{\hat{T}_{\sigma}\sim P_{\sigma}}\left[\log\left(\frac{\sum_{t=1}^{N}\left(\frac{1}{2}+\epsilon\right)^{\hat{n}_{\sigma,t}}\left(\frac{1}{2}-\epsilon\right)^{n-\hat{n}_{\sigma,t}}}{\sum_{t=1}^{N}\left(\frac{1}{2}-\epsilon\right)^{\hat{n}_{\sigma,t}}\left(\frac{1}{2}+\epsilon\right)^{n-\hat{n}_{\sigma,t}}}\right)\right] . 
\end{equation}
Since $t^{*}$ is uniform, the expectation above is the same as given the $t^{*}$, i.e., we will consider $t^{*}$ as fixed in the following analysis. For notational simplicity, we denote
\begin{align*}
    &V_{1} := \sum_{t\neq t^{*}}\left(\frac{1}{2}+\epsilon\right)^{\hat{n}_{\sigma,t}}\left(\frac{1}{2}-\epsilon\right)^{n-\hat{n}_{\sigma,t}} ,\\
    &V_{2} := \sum_{t\neq t^{*}}\left(\frac{1}{2}-\epsilon\right)^{\hat{n}_{\sigma,t}}\left(\frac{1}{2}+\epsilon\right)^{n-\hat{n}_{\sigma,t}} ,\\
    &W_{1} := \left(\frac{1}{2}+\epsilon\right)^{\hat{n}_{\sigma,t^{*}}}\left(\frac{1}{2}-\epsilon\right)^{n-\hat{n}_{\sigma,t^{*}}} ,\\
    &W_{2} := \left(\frac{1}{2}-\epsilon\right)^{\hat{n}_{\sigma,t^{*}}}\left(\frac{1}{2}+\epsilon\right)^{n-\hat{n}_{\sigma,t^{*}}} .
\end{align*}
Then, we can write the RHS of \eqref{eq:thm-no-adaptive-algorithm-toy-example-step2} as
\begin{equation}
  \label{eq:thm-no-adaptive-algorithm-toy-example-step3}
    \E\left[\log\left(\frac{V_{1}+W_{1}}{V_{2}+W_{2}}\right)\right] = \E\left[\log\left(\frac{V_{1}+W_{1}}{V_{1}+W_{2}}\right)\right] \leq \E\left[\frac{W_{1}-W_{2}}{V_{1}+W_{2}}\right] \leq \E\left[\frac{W_{1}}{V_{1}}\right] = \E[W_{1}]\cdot\E\left[\frac{1}{V_{1}}\right] ,
\end{equation}
where the first equation follows from that $(V_{1},W_{2})$ and $(V_{2},W_{2})$ have the same joint conditional distribution (conditional on $t^{*}$), the first inequality is exactly $\log{x}\leq x-1$ and the last equation is because $W_{1}$ and $V_{1}$ are independent. The remaining jobs are:

\noindent\textbf{(1) Upper bound of $\E[W_{1}]$:} 
We bound $\E[W_{1}]$ over the randomness of $\hat{n}_{\sigma,t^{*}}\sim\mathrm{Bin}(n,1/2+\epsilon)$. 
\begin{align*}
    \E[W_{1}] = &\E_{\hat{n}_{\sigma,t^{*}}\sim\mathrm{Bin}(n,1/2+\epsilon)}\left[\left(\frac{1}{2}+\epsilon\right)^{\hat{n}_{\sigma,t^{*}}}\left(\frac{1}{2}-\epsilon\right)^{n-\hat{n}_{\sigma,t^{*}}}\right] \\
    = &\sum_{s=0}^{n}\binom{n}{s}\left(\frac{1}{2}+\epsilon\right)^{2s}\left(\frac{1}{2}-\epsilon\right)^{2(n-s)} \\
    = &\left(\left(\frac{1}{2}+\epsilon\right)^{2}+\left(\frac{1}{2}-\epsilon\right)^{2}\right)^{n} = \left(\frac{1}{2}+2\epsilon^{2}\right)^{n} = \left(\frac{1}{2}+\frac{2}{n}\ln{\left(\frac{1}{\delta}\right)}\right)^{n} \\
    = &2^{-n}\cdot\left(1+\frac{4\ln{(1/\delta)}}{n}\right)^{\frac{n}{4\ln{(1/\delta)}}\cdot4\ln{(1/\delta)}} \leq 2^{-n}\cdot\delta^{-4} .
\end{align*}

\noindent\textbf{(2) Upper bound of $\E[V_{1}^{-1}]$:} We denote $V_{1}:=\sum_{t\neq t^{*}}g_{\sigma,t}$, where
\begin{equation*}
    g_{\sigma,t} = \left(\frac{1}{2}+\epsilon\right)^{\hat{n}_{\sigma,t}}\left(\frac{1}{2}-\epsilon\right)^{n-\hat{n}_{\sigma,t}} , \;\; \forall t\neq t^{*}
\end{equation*}
are i.i.d. random variables. For every $t\neq t^{*}$, we further define
\begin{equation*}
    \tilde{g}_{\sigma,t} = \left(\frac{1}{2}+\epsilon\right)^{n-\hat{n}_{\sigma,t}}\left(\frac{1}{2}-\epsilon\right)^{\hat{n}_{\sigma,t}} ,
\end{equation*}
which satisfies 
\begin{align*}
    \E\left[\tilde{g}_{\sigma,t}\right] = &\E_{\hat{n}_{\sigma,t}\sim\mathrm{Bin}(n,1/2)}\left[\left(\frac{1}{2}+\epsilon\right)^{n-\hat{n}_{\sigma,t}}\left(\frac{1}{2}-\epsilon\right)^{\hat{n}_{\sigma,t}}\right] \\
    = &\sum_{s=0}^{n}\left(\frac{1}{2}+\epsilon\right)^{n-s}\left(\frac{1}{2}-\epsilon\right)^{s}\cdot\binom{n}{s}\left(\frac{1}{2}\right)^{n} = 2^{-n} .
\end{align*}
We notice the following bound for every $t\neq t^{*}$,
\begin{align*}
    g_{\sigma,t}\cdot\tilde{g}_{\sigma,t} = &\left(\frac{1}{4}-\epsilon^{2}\right)^{n} = \left(\frac{1}{4}-\frac{1}{n}\ln{\left(\frac{1}{\delta}\right)}\right)^{n} \\
    = &4^{-n}\cdot\left(1-\frac{4\ln{(1/\delta)}}{n}\right)^{\frac{n}{4\ln{(1/\delta)}}\cdot4\ln{(1/\delta)}} \\
    \geq &4^{-n}\cdot\delta^{4} ,
\end{align*}
which implies the following bound
\begin{equation*}
    \E\left[\frac{1}{V_{1}}\right] = \E\left[\frac{1}{\sum_{t\neq t^{*}}g_{\sigma,t}}\right] \leq \E\left[\frac{1}{\sum_{t\neq t^{*}}\frac{4^{-n}\delta^{4}}{\tilde{g}_{\sigma,t}}}\right] \leq \frac{1}{N-1}\cdot\E\left[\frac{\sum_{t\neq t^{*}}\tilde{g}_{\sigma,t}}{(N-1)4^{-n}\delta^{4}}\right] = \frac{2^{n}\delta^{-4}}{N-1} ,
\end{equation*}
where the last inequality is the AM-HM inequality. Therefore, when $N\geq \delta^{-8}+1$, we have
\begin{equation*}
    \mathbb{D}_{\mathrm{KL}}(P_{\sigma}||P_{1-\sigma}) \leq \E[W_{1}]\cdot\E\left[\frac{1}{V_{1}}\right] \leq 1 .
\end{equation*}
Similarly, we can derive the same upper bound for $\mathbb{D}_{\mathrm{KL}}(P_{1-\sigma} \;||\; P_{\sigma})$. Finally, according to Lemma~\ref{lem:fano-inequality-binary-hypothesis-testing}, we have
\begin{equation*}
    \inf_{\psi}\mathbb{P}\left(\psi(\hat{T}_{\sigma}) \neq \sigma\right) \geq \frac{1}{2}\left(1-\sqrt{\frac{1}{2}\min\left\{\mathbb{D}_{\mathrm{KL}}(P_{0}\;||\;P_{1}),\mathbb{D}_{\mathrm{KL}}(P_{1}\;||\;P_{0})\right\}}\right) \geq \frac{2-\sqrt{2}}{4} .
\end{equation*}
\end{proof}

\begin{proof}[Proof of Theorem \ref{thm:pooling-lower-bound}]
Let $\hat{n}_{0}(Z)$ and $\hat{n}_{1}(Z)$ denote the numbers of 0 and 1 labeled examples in $Z$, respectively. We assume w.l.o.g. that $h^{*}=h_{1}$, i.e., $h^{*}(x_{1})=y^{*}=1$. It follows that
\begin{equation*}
    \mathbb{P}\left(\hat{h}_{\text{pool}} \neq h^{*}\right) = \mathbb{P}\left(\hat{n}_{0}(Z) \geq \hat{n}_{1}(Z)\right) .
\end{equation*}
We can write $\hat{n}_{0}(Z)=\hat{n}_{0,P}(Z)+\hat{n}_{0,Q}(Z)$ where $\hat{n}_{0,P}(Z)$ denote the number of 0-labeled examples generated from distribution $P$ and $\hat{n}_{0,Q}(Z)$ is defined similarly. First, note that $\hat{n}_{0,P}(Z)\sim\mathrm{Bin}(n,p)$ with $p=P(Y=0|x_{1})=1/2-\epsilon$, we know that at the most extreme case $\hat{n}_{1,P}(Z)-\hat{n}_{0,P}(Z)\leq n$. Moreover, recall that $\hat{n}_{0,Q}(Z)\sim\mathrm{Bin}(Nn,1/2)$ (it is indeed $\mathrm{Bin}((N-1)n,1/2)$ and assuming $\mathrm{Bin}(Nn,1/2)$ makes no difference asymptotically). We can write $\hat{n}_{0,Q}(Z)=\sum_{j\in[nN]}\mathbbm{1}\{Y_{j}=0\}$ where $Y_{j}\sim\mathrm{Ber}(1/2)$ for any $j\in[nN]$. It holds that $\E[\mathbbm{1}\{Y_{j}=0\}]=1/2$, $\mathrm{Var}(\mathbbm{1}\{Y_{j}=0\})=1/4$ and $\E[|\mathbbm{1}\{Y_{j}=0\}|^{3}]=1/2$. By Berry-Esseen inequality (Lemma~\ref{lem:berry-esseen-ineq}), we have
\begin{equation*}
    \bigg|\mathbb{P}\left(\frac{2\hat{n}_{0,Q}(Z)}{\sqrt{nN}}-\sqrt{nN} \leq x\right) - \Phi(x)\bigg| \leq \frac{12}{\sqrt{nN}} .
\end{equation*}
Hence, choosing $x=2\sqrt{n/N}$ yields that
\begin{equation*}
    \mathbb{P}\left(\hat{n}_{0,Q}(Z)-\frac{nN}{2} \leq n\right) - \Phi\left(2\sqrt{\frac{n}{N}}\right) \leq \bigg|\mathbb{P}\left(\hat{n}_{0,Q}(Z)-\frac{nN}{2} \leq n\right) - \Phi\left(2\sqrt{\frac{n}{N}}\right)\bigg| \leq \frac{12}{\sqrt{nN}} ,
\end{equation*}
which implies immediately that
\begin{equation*}
    \mathbb{P}\left(\hat{n}_{0,Q}(Z)\geq\frac{nN}{2}+n\right) \geq 1 - \Phi\left(2\sqrt{\frac{n}{N}}\right) - \frac{12}{\sqrt{nN}} .
\end{equation*}
When $n$ is large enough and $N\geq4n$, we have
\begin{equation*}
    \mathbb{P}\left(\hat{n}_{0,Q}(Z)\geq\frac{nN}{2}+n\right) \geq 1 - \Phi(1) - \frac{12}{\sqrt{nN}} \geq 0.1 .
\end{equation*}
Putting together, we have with probability at least $0.1$, pooling predicts incorrectly since
\begin{equation*}
    \hat{n}_{0}(Z) - \hat{n}_{1}(Z) = \hat{n}_{0,P}(Z)-\hat{n}_{1,P}(Z)+\hat{n}_{0,Q}(Z)-\hat{n}_{1,Q}(Z) \geq -n+n = 0 .
\end{equation*}
Therefore, we get $\E[\mathcal{E}_{\mathcal{D}}(\hat{h}_{\text{pool}})]\geq 0.1\epsilon = \Omega(1)$.
\end{proof}

\begin{proof}[Proof of Theorem~\ref{thm:intersection-algo-is-better}]
By Lemma~\ref{lem:target-is-always-envolved} and union bound, we know that $h^{*}\in\bigcap_{t=1}^{N}\hpc_{t}$ with probability at least $1-\sum_{t\in[N]}\delta_{t}\geq 1-\delta$. Since our construction has only $\hpc=\{h^{*},h\}$, it suffices to show that $\bigcap_{t=1}^{N}\hpc_{t}=\{h^{*}\}$, i.e., there exists $t\in[N]$ such that $\hpc_{t}=\{h^{*}\}$ with high probability. Consider the $\hpc_{t^{*}}$ associated with the good dataset $Z_{t^{*}}\sim P$. By Lemma~\ref{lem:excess-risk-of-any-concept-in-the-confidence-set}, if $h\in\hpc_{t^{*}}$, then with probability at least $1-\delta_{t^{*}} \geq 1-\delta/6$, we have
\begin{equation*}
    \mathcal{E}_{P}(h) \leq 32C_{0}^{2}\left(C_{\beta}\cdot\epsilon(n,\delta_{t^{*}})\right)^{1/(2-\beta)} \stackrel{\beta=0}{=} O\left(\sqrt{\frac{d_{\hpc}}{n}\log{\left(\frac{n}{d_{\hpc}}\right)}+\frac{1}{n}\log{\left(\frac{N}{\delta}\right)}}\right) .
\end{equation*}
However, recall that $\mathcal{E}_{P}(h)=2\epsilon=\Omega(1)$ in our construction. In other words, we can guarantee that $h\notin\hpc_{t^{*}}$ and thus $\hpc_{t^{*}}=\{h^{*}\}$ with probability at least $1-2\delta$, if $\log{(N/\delta)}=o(n)$ (for example, assuming $N=\Theta(n)$ and $\delta=1/n$ would be sufficient both here and in Theorem~\ref{thm:pooling-lower-bound}). It is clear that when $\hpc_{t^{*}}=\{h^{*}\}$ and thus $\bigcap_{t=1}^{N}\hpc_{t}=\{h^{*}\}$, we have $\mathcal{E}_{\mathcal{D}}(\hat{h}_{\text{IB}})=\mathcal{E}_{\mathcal{D}}(h^{*})=0$. Altogether, choosing $\delta=1/n$ yields
\begin{equation*}
    \E\left[\mathcal{E}_{\mathcal{D}}(\hat{h}_{\text{IB}})\right] \leq 2\delta\epsilon = O\left(\frac{1}{n}\right) .
\end{equation*}
Indeed, given the following sequence of transfer exponents $\{1,\infty,\ldots,\infty\}$, we can show a $\sqrt{\log(N)/n}$ upper bound on the learning rates of Algorithm~\ref{algo:adaptive-algo}. Let us consider any distributions $P_{1},P_{2},\ldots,P_{N}$ such that $\rho_{t^{*}}=1$ and $\rho_{\neq t^{*}}=\infty$ for some $t^{*}\in[N]$. Since $\hat{h}_{\text{IB}}\in\hpc$, we have by definition
\begin{equation}
  \label{eq:thm-intersection-algo-is-better-step1}
    \mathcal{E}_{P_{t^{*}}}(\hat{h}_{\text{IB}}) \geq \left(C_{\rho}^{-1}\mathcal{E}_{\mathcal{D}}(\hat{h}_{\text{IB}})\right)^{\rho_{t^{*}}} = C_{\rho}^{-1}\mathcal{E}_{\mathcal{D}}(\hat{h}_{\text{IB}}) \;\Rightarrow\; \mathcal{E}_{\mathcal{D}}(\hat{h}_{\text{IB}}) \leq C_{\rho}\mathcal{E}_{P_{t^{*}}}(\hat{h}_{\text{IB}}) .
\end{equation}
Next, by Lemma~\ref{lem:target-is-always-envolved} and union bound, we know that $h^{*}\in\bigcap_{t=1}^{N}\hpc_{t}$, that is, for every $t\in[N]$,
\begin{equation*}
    \hat{\mathcal{E}}_{Z_{t}}(h^{*},\hat{h}_{Z_{t}}) \leq C_{0}\sqrt{\hat{P}_{Z_{t}}(h^{*}\neq \hat{h}_{Z_{t}})\cdot\epsilon(n,\delta_{t})} + C_{0}\cdot\epsilon(n,\delta_{t}) ,
\end{equation*}
with probability at least $1-\sum_{t\in[N]}\delta_{t}\geq 1-\delta$. In other words, Algorithm~\ref{algo:adaptive-algo} can always at least output $h^{*}$. By Lemma~\ref{lem:excess-risk-of-any-concept-in-the-confidence-set}, we know that $\hat{h}_{\text{IB}}$ satisfies
\begin{equation*}
     \mathcal{E}_{P_{t^{*}}}(\hat{h}_{\text{IB}}) \leq 32C_{0}^{2}\left(C_{\beta}\cdot\epsilon(n,\delta_{t^{*}})\right)^{1/(2-\beta)} \stackrel{\beta=0}{\leq} \tilde{C}\sqrt{\frac{d_{\hpc}}{n}\log{\left(\frac{n}{d_{\hpc}}\right)}+\frac{1}{n}\log{\left(\frac{N}{\delta}\right)}}
\end{equation*}
for some numerical constant $\tilde{C}>0$. Together with \eqref{eq:thm-intersection-algo-is-better-step1}, we have with probability at least $1-\delta$,
\begin{equation*}
    \mathcal{E}_{\mathcal{D}}(\hat{h}_{\text{IB}}) \leq \tilde{C}C_{\rho}\sqrt{\frac{d_{\hpc}}{n}\log{\left(\frac{n}{d_{\hpc}}\right)}+\frac{1}{n}\log{\left(\frac{N}{\delta}\right)}} .
\end{equation*}
Recall that for any non-negative random variable $Z$, $\E[Z]=\int_{0}^{\infty}\mathbb{P}(Z>t)dt$. Hence, we have
\begin{align*}
    &\E\left[\mathcal{E}_{\mathcal{D}}(\hat{h}_{\text{IB}})\right] = \int_{0}^{\infty}\mathbb{P}\left(\mathcal{E}_{\mathcal{D}}(\hat{h}_{\text{IB}})>t\right)dt \leq \int_{0}^{\infty}\min\left\{\frac{N\left(\frac{n}{d_{\hpc}}\right)^{d_{\hpc}}}{\exp\left(\frac{nt^{2}}{\tilde{C}^{2}C_{\rho}^{2}}\right)},1\right\}dt \\
    \leq &\int_{0}^{\tilde{C}C_{\rho}\sqrt{\frac{d_{\hpc}}{n}\log{\left(\frac{n}{d_{\hpc}}\right)}+\frac{\log{N}}{n}}}dt + \int_{\tilde{C}C_{\rho}\sqrt{\frac{d_{\hpc}}{n}\log{\left(\frac{n}{d_{\hpc}}\right)}+\frac{\log{N}}{n}}}^{\infty}\frac{N\left(\frac{n}{d_{\hpc}}\right)^{d_{\hpc}}}{\exp\left(\frac{nt^{2}}{\tilde{C}^{2}C_{\rho}^{2}}\right)}dt \\
    \leq &\tilde{C}C_{\rho}\sqrt{\frac{d_{\hpc}}{n}\log{\left(\frac{n}{d_{\hpc}}\right)}+\frac{\log{N}}{n}} + o\left(\sqrt{\frac{d_{\hpc}}{n}\log{\left(\frac{n}{d_{\hpc}}\right)}+\frac{\log{N}}{n}}\right) \\
    = &O\left(\sqrt{\frac{\log{N}}{n}}\right) ,
\end{align*}
where the last inequality uses the fact that $\int_{a}^{\infty}e^{-ct^{2}}dt \leq \frac{e^{-ca^{2}}}{2ca}$ for any $a>0$.
\end{proof}

\section{Missing proofs from Section~\ref{sec:general-new-setting}}
  \label{sec:missing-proof-from-section-main-results}
  
\begin{proof}[Proof of Theorem \ref{thm:erm-over-optimal-datasets}]
Let $\hat{n}_{0}(Z^{*})$ and $\hat{n}_{1}(Z^{*})$ denote the total number of 0 and 1 labeled examples in $Z^{*}$ with marginals at $x_{1}$, i.e.,
\begin{equation*}
    \hat{n}_{\sigma}(Z^{*}) := \sum_{t\in[t^{*}]}\sum_{i\in Z_{(t)}}\mathbbm{1}\left\{x_{t,i}=x_{1} \land y_{t,i}=\sigma\right\} ,\; \forall \sigma\in\{0,1\} ,
\end{equation*}
and $\hat{n}_{x_{1}}(Z^{*})=\hat{n}_{0}(Z^{*})+\hat{n}_{1}(Z^{*})$ denote the total number of examples in $Z^{*}$ with marginals at $x_{1}$.  

Assume w.l.o.g. $h^{*}=h_{1}$, that is, $h^{*}(x_{1})=y^{*}=1$. Then ERM makes a mistake on $x_{1}$ if and only if we have less number of samples with label 1 than label 0, that is
\begin{equation}
  \label{eq:thm-erm-over-optimal-datasets-step1}
    \mathbb{P}\left(\hat{h}_{Z^{*}} \neq h^{*}\right) = \mathbb{P}\left(\hat{n}_{0}(Z^{*}) \geq \hat{n}_{1}(Z^{*})\right) = \mathbb{P}\left(\hat{n}_{0}(Z^{*}) \geq \frac{\hat{n}_{x_{1}}(Z^{*})}{2}\right) .
\end{equation}
Since for every $t\in[t^{*}]$, $P_{(t)}=P$, the dataset $Z^{*}$ can be considered to be drawn i.i.d. from $P$. We can apply Chernoff bound (Lemma~\ref{lem:chernoff-bound-binomial}) with $p=P(Y=0|X=x_{1})$, $m=\hat{n}_{x_{1}}(Z^{*})$ and $\delta=1/2p-1$ (note that $p<1/2$ so that $\delta>0$ is satisfied), and get
\begin{equation}
  \label{eq:thm-erm-over-optimal-datasets-step2}
    \mathbb{P}\left(\hat{n}_{0}(Z^{*}) \geq \frac{\hat{n}_{x_{1}}(Z^{*})}{2}\right) \leq \exp\left(-\frac{(1-2p)^{2}\cdot \hat{n}_{x_{1}}(Z^{*})}{4p+2}\right) .
\end{equation}
Based on Lemma~\ref{lem:bcc-and-te-requirements} and our construction, we have
\begin{equation}
  \label{eq:thm-erm-over-optimal-datasets-step3}
    p = 1-P(Y=1|X=x_{1}) \leq 1/2-(1/2)C_{\beta}^{-1/\beta}\left[P_{X}(x_{1})\right]^{1/\beta-1} .
\end{equation}
Note that the RHS of \eqref{eq:thm-erm-over-optimal-datasets-step2} is monotonically increasing as a function of $p$, we have from \eqref{eq:thm-erm-over-optimal-datasets-step3} that
\begin{equation}
  \label{eq:thm-erm-over-optimal-datasets-step4}
    \mathbb{P}\left(\hat{n}_{0}(Z^{*}) \geq \frac{\hat{n}_{x_{1}}(Z^{*})}{2}\right) \leq \exp\left\{-\frac{C_{\beta}^{-2/\beta}\left[P_{X}(x_{1})\right]^{2/\beta-2}\cdot \hat{n}_{x_{1}}(Z^{*})}{4-2C_{\beta}^{-1/\beta}\left[P_{X}(x_{1})\right]^{1/\beta-1}}\right\} .
\end{equation}
To quantify $\hat{n}_{x_{1}}(Z^{*})$, we will still use Lemma~\ref{lem:chernoff-bound-binomial} with $m=nt^{*}$, $p=P_{X}(x_{1})$ and $\delta=1/2$,
\begin{equation}
  \label{eq:thm-erm-over-optimal-datasets-step5}
    \mathbb{P}\left(\hat{n}_{x_{1}}(Z^{*}) \leq \frac{nt^{*}P_{X}(x_{1})}{2}\right) \leq \exp\left\{-\frac{nt^{*}P_{X}(x_{1})}{8}\right\} .
\end{equation}
Using the fact that $\mathbb{P}(A)\leq \mathbb{P}(A|B)+\mathbb{P}(\neg B)$, we have
\begin{align}
  \label{eq:thm-erm-over-optimal-datasets-step6}
    &\mathbb{P}\left(\hat{n}_{0}(Z^{*}) \geq \frac{\hat{n}_{x_{1}}(Z^{*})}{2}\right) \nonumber \\
    \stackrel{\text{\eqmakebox[thm-erm-over-optimal-datasets-toy-example-a][c]{}}}{\leq} &\mathbb{P}\left(\hat{n}_{0}(Z^{*}) \geq \frac{\hat{n}_{x_{1}}(Z^{*})}{2}\;\bigg|\;\hat{n}_{x_{1}}(Z^{*})\geq \frac{nt^{*}P_{X}(x_{1})}{2}\right) + \mathbb{P}\left(\hat{n}_{x_{1}}(Z^{*})\leq \frac{nt^{*}P_{X}(x_{1})}{2}\right) \nonumber \\
    \stackrel{\text{\eqmakebox[thm-erm-over-optimal-datasets-toy-example-a][c]{\eqref{eq:thm-erm-over-optimal-datasets-step4},\eqref{eq:thm-erm-over-optimal-datasets-step5}}}}{\leq} &\exp\left\{-nt^{*}\left(\frac{C_{\beta}^{-2/\beta}\left[P_{X}(x_{1})\right]^{2/\beta-1}}{8-4C_{\beta}^{-1/\beta}\left[P_{X}(x_{1})\right]^{1/\beta-1}}\right)\right\} + \exp\left\{-\frac{nt^{*}P_{X}(x_{1})}{8}\right\} \nonumber \\
    \stackrel{\text{\eqmakebox[thm-erm-over-optimal-datasets-toy-example-a][c]{}}}{\leq} &\exp\left\{-\frac{nt^{*}C_{\beta}^{-2/\beta}\left[P_{X}(x_{1})\right]^{2/\beta-1}}{8}\right\} + \exp\left\{-\frac{nt^{*}P_{X}(x_{1})}{8}\right\} .
\end{align}
Note that $P_{X}(x_{1})=C_{\beta}\epsilon^{\beta}=C_{\beta}(n\sqrt{N})^{-\beta/(2-\beta)}$, and thus $C_{\beta}^{-2/\beta}[P_{X}(x_{1})]^{2/\beta-1}=C_{\beta}^{-1}(n\sqrt{N})^{-1}$. Recall that $t^{*}=\sqrt{N\cdot n^{n\beta/(2-\beta)}}$, it follows that
\begin{align*}
    \mathrm{RHS\;of\;}\eqref{eq:thm-erm-over-optimal-datasets-step6} \stackrel{\text{\eqmakebox[thm-erm-over-optimal-datasets-toy-example-b][c]{}}}{=} &\exp\left\{-\frac{t^{*}}{8C_{\beta}\sqrt{N}}\right\} + \exp\left\{-\frac{nt^{*}C_{\beta}(n\sqrt{N})^{-\beta/(2-\beta)}}{8}\right\} \\
    \stackrel{\text{\eqmakebox[thm-erm-over-optimal-datasets-toy-example-b][c]{$\beta\leq1$}}}{\leq} &\exp\left\{-\frac{t^{*}}{8C_{\beta}\sqrt{N}}\right\} + \exp\left\{-\frac{C_{\beta}t^{*}}{8\sqrt{N}}\right\} \leq 2\exp\left\{-\frac{n^{n\beta/2(2-\beta)}}{8C_{\beta}}\right\} ,
\end{align*}
that is, with probability at least $1-2\exp\{-(8C_{\beta})^{-1}n^{n\beta/2(2-\beta)}\}$, $\hat{h}_{Z^{*}}=h^{*}$ and thus $\mathcal{E}_{\mathcal{D}}(\hat{h}_{Z^{*}})=0$. Note that this implies that pooling over all the fair datasets achieves the minimax rates:
\begin{align*}
    \E\left[\mathcal{E}_{\mathcal{D}}(\hat{h}_{Z^{*}})\right] \leq &0\cdot\left(1-2\exp\left\{-\frac{n^{n\beta/2(2-\beta)}}{8C_{\beta}}\right\}\right) + \epsilon\cdot2\exp\left\{-\frac{n^{n\beta/2(2-\beta)}}{8C_{\beta}}\right\} \\
    = &\left(n\sqrt{N}\right)^{-1/(2-\beta)} \cdot 2\exp\left\{-\frac{n^{n\beta/2(2-\beta)}}{8C_{\beta}}\right\} .
\end{align*}
\end{proof}

\begin{proof}[Proof of Theorem \ref{thm:adaptivity-is-impossible}]
For $t\in[N]$ and $\sigma\in\{0,1\}$, we write $\hat{n}_{t}$ for short of $\hat{n}_{x_{1}}(Z_{t})$ and $\hat{n}_{\sigma,t}$ for short of $\hat{n}_{\sigma}(Z_{t})$. We further make the following simplified notations:
\begin{align*}
    &p_{x_{1}}=P(X=x_{1}), \;\; p_{\sigma|x_{1}}=P(Y=\sigma|X=x_{1}), \;\; p_{\sigma,x_{1}}=P(Y=\sigma,X=x_{1})=p_{\sigma|x_{1}}\cdot p_{x_{1}} ; \\
    &q_{x_{1}}=Q(X=x_{1}), \;\; q_{\sigma|x_{1}}=Q(Y=\sigma|X=x_{1}), \;\; q_{\sigma,x_{1}}=Q(Y=\sigma,X=x_{1})=q_{\sigma|x_{1}}\cdot q_{x_{1}} ; \\
    &p_{1,x_{0}}=P(Y=1,X=x_{0}), \;\; q_{1,x_{0}}=Q(Y=1,X=x_{0}) .
\end{align*}
To prove \textcircled{1}, we have from the law of total probability that
\begin{equation*}
    \mathbb{P}\left(\psi(Z)\neq\sigma\right) = \E_{\vI^{*}}\left[\mathbb{P}_{Z}\left(\psi(Z)\neq\sigma|\vI^{*}\right)\right] \leq c+\mathbb{P}_{\vI^{*}}\left(\mathbb{P}_{Z}\left(\psi(Z)\neq\sigma|\vI^{*}\right) \geq c\right) ,
\end{equation*}
which implies that 
\begin{equation*}
    \mathbb{P}_{\vI^{*}}\left(\mathbb{P}_{Z}\left(\psi(Z)\neq\sigma|\vI^{*}\right) \geq c\right) \geq \mathbb{P}\left(\psi(Z)\neq\sigma\right)-c.
\end{equation*}
Hence, it suffices to lower bound $\mathbb{P}(\psi(Z)\neq\sigma)$, where the probability includes all the randomness defined in the problem. We will apply the Fano's inequality (Lemma~\ref{lem:fano-inequality-binary-hypothesis-testing}) to build such a lower bound for any estimate $\psi$, i.e., to lower bound $\inf_{\psi}\mathbb{P}(\psi(Z)\neq\sigma)$. We begin by calculating the likelihood of getting data $Z$ from distribution $P_{\sigma}$:
\begin{align*}
    &P_{\sigma}(Z) = \prod_{t=1}^{N}P_{\sigma}(Z_{t}) = \prod_{t=1}^{N}\left(P_{\sigma}(Z_{t}|t\in\vI^{*})P_{\sigma}(t\in\vI^{*})+P_{\sigma}(Z_{t}|t\notin\vI^{*})P_{\sigma}(t\notin\vI^{*})\right) \\
    = &\prod_{t=1}^{N}\left(\alpha_{F}\cdot(p_{\sigma,x_{1}})^{\hat{n}_{\sigma,t}}(p_{1-\sigma,x_{1}})^{\hat{n}_{t}-\hat{n}_{\sigma,t}}(p_{1,x_{0}})^{n-\hat{n}_{t}} + \alpha_{N}\cdot(q_{\sigma,x_{1}})^{\hat{n}_{\sigma,t}}(q_{1-\sigma,x_{1}})^{\hat{n}_{t}-\hat{n}_{\sigma,t}}(q_{1,x_{0}})^{n-\hat{n}_{t}}\right) ,
\end{align*}
and similarly from $P_{1-\sigma}$:
\begin{equation*}
    P_{1-\sigma}(Z) = \prod_{t=1}^{N}\left(\alpha_{F}\cdot(p_{\sigma,x_{1}})^{\hat{n}_{t}-\hat{n}_{\sigma,t}}(p_{1-\sigma,x_{1}})^{\hat{n}_{\sigma,t}}(p_{1,x_{0}})^{n-\hat{n}_{t}} + \alpha_{N}\cdot(q_{\sigma,x_{1}})^{\hat{n}_{t}-\hat{n}_{\sigma,t}}(q_{1-\sigma,x_{1}})^{\hat{n}_{\sigma,t}}(q_{1,x_{0}})^{n-\hat{n}_{t}}\right) .
\end{equation*}
Then we can write the KL-divergence between their likelihoods as
\begin{align}
  \label{eq:adaptivity-is-impossible-step1}
    &\mathbb{D}_{\mathrm{KL}}(P_{\sigma}||P_{1-\sigma}) = \E_{Z\sim P_{\sigma}}\left[\log\left(\frac{P_{\sigma}(Z)}{P_{1-\sigma}(Z)}\right)\right] = \E\left[\log\left(\prod_{t=1}^{N}\frac{P_{\sigma}(Z_{t})}{P_{1-\sigma}(Z_{t})}\right)\right] \nonumber \\
    = &\sum_{t=1}^{N}\E\left[\log\left(\frac{\alpha_{F}\cdot(p_{\sigma,x_{1}})^{\hat{n}_{\sigma,t}}(p_{1-\sigma,x_{1}})^{\hat{n}_{t}-\hat{n}_{\sigma,t}}(p_{x_{0}})^{n-\hat{n}_{t}} + \alpha_{N}\cdot(q_{\sigma,x_{1}})^{\hat{n}_{\sigma,t}}(q_{1-\sigma,x_{1}})^{\hat{n}_{t}-\hat{n}_{\sigma,t}}(q_{x_{0}})^{n-\hat{n}_{t}}}{\alpha_{F}\cdot(p_{\sigma,x_{1}})^{\hat{n}_{t}-\hat{n}_{\sigma,t}}(p_{1-\sigma,x_{1}})^{\hat{n}_{\sigma,t}}(p_{x_{0}})^{n-\hat{n}_{t}} + \alpha_{N}\cdot(q_{\sigma,x_{1}})^{\hat{n}_{t}-\hat{n}_{\sigma,t}}(q_{1-\sigma,x_{1}})^{\hat{n}_{\sigma,t}}(q_{x_{0}})^{n-\hat{n}_{t}}}\right)\right] \nonumber \\
    =: &\sum_{t=1}^{N}\E\left[\log\left(\frac{V_{F,\sigma,t}+V_{N,\sigma,t}}{V_{F,1-\sigma,t}+V_{N,1-\sigma,t}}\right)\right] = N\cdot\E\left[\log\left(\frac{V_{F,\sigma,t}+V_{N,\sigma,t}}{V_{F,1-\sigma,t}+V_{N,1-\sigma,t}}\right)\right] .
\end{align}
We first make the following decomposition
\begin{equation}
  \label{eq:adaptivity-is-impossible-step2}
    \log\left(\frac{V_{F,\sigma,t}+V_{N,\sigma,t}}{V_{F,1-\sigma,t}+V_{N,1-\sigma,t}}\right) = \log\left(\frac{1+V_{F,\sigma,t}/V_{N,\sigma,t}}{1+V_{F,1-\sigma,t}/V_{N,1-\sigma,t}}\right) + \log\left(\frac{V_{N,\sigma,t}}{V_{N,1-\sigma,t}}\right) .
\end{equation}
Note that 
\begin{equation*}
    \log\left(\frac{V_{N,\sigma,t}}{V_{N,1-\sigma,t}}\right) = \log\left(\frac{\alpha_{N}\cdot(q_{\sigma,x_{1}})^{\hat{n}_{\sigma,t}}(q_{1-\sigma,x_{1}})^{\hat{n}_{t}-\hat{n}_{\sigma,t}}(q_{1,x_{0}})^{n-\hat{n}_{t}}}{\alpha_{N}\cdot(q_{\sigma,x_{1}})^{\hat{n}_{t}-\hat{n}_{\sigma,t}}(q_{1-\sigma,x_{1}})^{\hat{n}_{\sigma,t}}(q_{1,x_{0}})^{n-\hat{n}_{t}}}\right) = \left(2\hat{n}_{\sigma,t}-\hat{n}_{t}\right)\log\left(\frac{q_{\sigma,x_{1}}}{q_{1-\sigma,x_{1}}}\right) .
\end{equation*}
Plugging into \eqref{eq:adaptivity-is-impossible-step2} and then into \eqref{eq:adaptivity-is-impossible-step1}, we have
\begin{equation}
  \label{eq:adaptivity-is-impossible-step3}
    \mathbb{D}_{\mathrm{KL}}(P_{\sigma}||P_{1-\sigma}) = N\cdot\E\left[\log\left(\frac{1+V_{F,\sigma,t}/V_{N,\sigma,t}}{1+V_{F,1-\sigma,t}/V_{N,1-\sigma,t}}\right) + \left(2\hat{n}_{\sigma,t}-\hat{n}_{t}\right)\cdot\log\left(\frac{q_{\sigma,x_{1}}}{q_{1-\sigma,x_{1}}}\right)\right] .
\end{equation}
If $Z_{t}\sim P^{n}$, then $\hat{n}_{\sigma,t}\sim\mathrm{Bin}(\hat{n}_{t},p_{\sigma|x_{1}})$ and $\hat{n}_{t}\sim\mathrm{Bin}(n,p_{x_{1}})$; if $Z_{t}\sim Q^{n}$, then $\hat{n}_{\sigma,t}\sim\mathrm{Bin}(\hat{n}_{t},q_{\sigma|x_{1}})$ and $\hat{n}_{t}\sim\mathrm{Bin}(n,q_{x_{1}})$. We have from the law of total expectation that
\begin{align*}
    \E\left[2\hat{n}_{\sigma,t}-\hat{n}_{t}\right] = &\alpha_{F}\cdot\E\left[2\hat{n}_{\sigma,t}-\hat{n}_{t}|Z_{t}\sim P^{n}\right] + \alpha_{N}\cdot\E\left[2\hat{n}_{\sigma,t}-\hat{n}_{t}|Z_{t}\sim Q^{n}\right] \\
    = &\alpha_{F}\cdot\E_{\hat{n}_{t}}\left[\E_{P|\hat{n}_{t}}\left[2\hat{n}_{\sigma,t}-\hat{n}_{t}|\hat{n}_{t}\right]\right] + \alpha_{N}\cdot\E_{\hat{n}_{t}}\left[\E_{Q|\hat{n}_{t}}\left[2\hat{n}_{\sigma,t}-\hat{n}_{t}|\hat{n}_{t}\right]\right] \\
    = &\alpha_{F}\cdot\E_{P,\hat{n}_{t}}\left[(2p_{\sigma|x_{1}}-1)\hat{n}_{t}\right] + \alpha_{N}\cdot\E_{Q,\hat{n}_{t}}\left[(2q_{\sigma|x_{1}}-1)\hat{n}_{t}\right] \\
    = &\alpha_{F}\cdot(2p_{\sigma|x_{1}}-1)np_{x_{1}} + \alpha_{N}\cdot(2q_{\sigma|x_{1}}-1)nq_{x_{1}} \\
    = &\alpha_{F}\cdot n\epsilon + \alpha_{N}\cdot n\epsilon_{0} .
\end{align*}
The second summand in \eqref{eq:adaptivity-is-impossible-step3} can be bounded by
\begin{align}
  \label{eq:adaptivity-is-impossible-step4}
    N\cdot\E\left[\left(2\hat{n}_{\sigma,t}-\hat{n}_{t}\right)\cdot\log\left(\frac{q_{\sigma,x_{1}}}{q_{1-\sigma,x_{1}}}\right)\right] = &N\cdot\log\left(\frac{q_{\sigma,x_{1}}}{q_{1-\sigma,x_{1}}}\right)\cdot\left(\alpha_{F}\cdot n\epsilon + \alpha_{N}\cdot n\epsilon_{0}\right) \nonumber \\
    = &N\cdot\log\left(\frac{\frac{1}{2}C_{\beta}\epsilon_{0}^{\beta}+\frac{1}{2}\epsilon_{0}}{\frac{1}{2}C_{\beta}\epsilon_{0}^{\beta}-\frac{1}{2}\epsilon_{0}}\right)\cdot\left(\alpha_{F}\cdot n\epsilon + \alpha_{N}\cdot n\epsilon_{0}\right) \nonumber \\
    \leq &N\cdot\frac{2\epsilon_{0}}{C_{\beta}\epsilon_{0}^{\beta}-\epsilon_{0}}\cdot\left(\alpha_{F}\cdot n\epsilon + \alpha_{N}\cdot n\epsilon_{0}\right) \nonumber \\
    \leq &N\epsilon_{0}^{1-\beta}\cdot\left(\alpha_{F}\cdot n\epsilon + \alpha_{N}\cdot n\epsilon_{0}\right) \nonumber \\
    = &\alpha_{F}\cdot Nn\epsilon\epsilon_{0}^{1-\beta} + \alpha_{N}\cdot Nn\epsilon_{0}^{2-\beta} ,
\end{align}
where the first inequality uses the fact that $\log{x}\leq x-1$. This implies the following two bounds:
\begin{equation}
  \label{eq:adaptivity-is-impossible-important-step}
    \E_{P}\left[\log\left(\frac{V_{N,\sigma,t}}{V_{N,1-\sigma,t}}\right)\right] \leq n\epsilon\epsilon_{0}^{1-\beta}, \;\; \E_{Q}\left[\log\left(\frac{V_{N,\sigma,t}}{V_{N,1-\sigma,t}}\right)\right] \leq n\epsilon_{0}^{2-\beta} .
\end{equation}
For the first summand in \eqref{eq:adaptivity-is-impossible-step3}, based on the law of total expectation, we have
\begin{align}
  \label{eq:adaptivity-is-impossible-step5}
    &N\cdot\E\left[\log\left(\frac{1+V_{F,\sigma,t}/V_{N,\sigma,t}}{1+V_{F,1-\sigma,t}/V_{N,1-\sigma,t}}\right)\right] \nonumber \\
    = &N\cdot\alpha_{F}\cdot\E_{P}\left[\log\left(\frac{1+V_{F,\sigma,t}/V_{N,\sigma,t}}{1+V_{F,1-\sigma,t}/V_{N,1-\sigma,t}}\right)\right] + N\cdot\alpha_{N}\cdot\E_{Q}\left[\log\left(\frac{1+V_{F,\sigma,t}/V_{N,\sigma,t}}{1+V_{F,1-\sigma,t}/V_{N,1-\sigma,t}}\right)\right] .
\end{align}
We first bound the second summand in \eqref{eq:adaptivity-is-impossible-step5}.
\begin{align*}
    \E_{Q}\left[\log\left(\frac{1+V_{F,\sigma,t}/V_{N,\sigma,t}}{1+V_{F,1-\sigma,t}/V_{N,1-\sigma,t}}\right)\right] \stackrel{\text{\eqmakebox[thm-adaptivity-is-impossible-a][c]{\footnotesize $\log{x}\leq x-1$}}}{\leq} &\E_{Q}\left[\frac{V_{F,\sigma,t}/V_{N,\sigma,t}-V_{F,1-\sigma,t}/V_{N,1-\sigma,t}}{1+V_{F,1-\sigma,t}/V_{N,1-\sigma,t}}\right] \\
    \stackrel{\text{\eqmakebox[thm-adaptivity-is-impossible-a][c]{}}}{\leq} &\max\left\{0,\E_{Q}\left[\frac{V_{F,\sigma,t}}{V_{N,\sigma,t}}-\frac{V_{F,1-\sigma,t}}{V_{N,1-\sigma,t}}\right]\right\} .
\end{align*}
Note that
\begin{align*}
    &\E_{Q}\left[\frac{V_{F,\sigma,t}}{V_{N,\sigma,t}}\right] = \E_{\hat{n}_{t}}\left[\E_{Q|\hat{n}_{t}}\left[\frac{V_{F,\sigma,t}}{V_{N,\sigma,t}}\bigg|\hat{n}_{t}\right]\right] \\
    = &\frac{\alpha_{F}}{\alpha_{N}}\cdot\E_{Q,\hat{n}_{t}}\left[\sum_{s=0}^{\hat{n}_{t}}\left(\frac{p_{\sigma,x_{1}}}{q_{\sigma,x_{1}}}\right)^{s}\left(\frac{p_{1-\sigma,x_{1}}}{q_{1-\sigma,x_{1}}}\right)^{\hat{n}_{t}-s}\left(\frac{p_{1,x_{0}}}{q_{1,x_{0}}}\right)^{n-\hat{n}_{t}}\binom{\hat{n}_{t}}{s}\left(q_{\sigma|x_{1}}\right)^{s}\left(q_{1-\sigma|x_{1}}\right)^{\hat{n}_{t}-s}\right] \\
    = &\frac{\alpha_{F}}{\alpha_{N}}\cdot\E_{P,\hat{n}_{t}}\left[\left(\frac{p_{1,x_{0}}}{q_{1,x_{0}}}\right)^{n-\hat{n}_{t}}\sum_{s=0}^{\hat{n}_{t}}\binom{\hat{n}_{t}}{s}\left(\frac{p_{\sigma,x_{1}}q_{\sigma|x_{1}}}{q_{\sigma,x_{1}}}\right)^{s}\left(\frac{p_{1-\sigma,x_{1}}q_{1-\sigma|x_{1}}}{q_{1-\sigma,x_{1}}}\right)^{\hat{n}_{t}-s}\right] \\
    = &\frac{\alpha_{F}}{\alpha_{N}}\cdot\E_{P,\hat{n}_{t}}\left[\left(\frac{p_{1,x_{0}}}{q_{1,x_{0}}}\right)^{n-\hat{n}_{t}}\left(\frac{p_{\sigma,x_{1}}q_{\sigma|x_{1}}}{q_{\sigma,x_{1}}}+\frac{p_{1-\sigma,x_{1}}q_{1-\sigma|x_{1}}}{q_{1-\sigma,x_{1}}}\right)^{\hat{n}_{t}}\right] \\
    = &\frac{\alpha_{F}}{\alpha_{N}}\cdot\sum_{s=0}^{n}\left(\frac{p_{1,x_{0}}}{q_{1,x_{0}}}\right)^{n-s}\left(\frac{p_{\sigma,x_{1}}q_{\sigma|x_{1}}}{q_{\sigma,x_{1}}}+\frac{p_{1-\sigma,x_{1}}q_{1-\sigma|x_{1}}}{q_{1-\sigma,x_{1}}}\right)^{s}\binom{n}{s}(q_{x_{1}})^{s}(q_{x_{0}})^{n-s} \\
    = &\frac{\alpha_{F}}{\alpha_{N}}\cdot\left(p_{\sigma,x_{1}} + p_{1-\sigma,x_{1}} + p_{x_{0}}\right)^{n} = \frac{\alpha_{F}}{\alpha_{N}} ,
\end{align*}
and similarly
\begin{align*}
    \E_{Q}\left[\frac{V_{F,1-\sigma,t}}{V_{N,1-\sigma,t}}\right] = &\frac{\alpha_{F}}{\alpha_{N}}\cdot\left(\frac{p_{\sigma,x_{1}}q_{1-\sigma,x_{1}}}{q_{\sigma,x_{1}}} + \frac{p_{1-\sigma,x_{1}}q_{\sigma,x_{1}}}{q_{1-\sigma,x_{1}}} + p_{1,x_{0}}\right)^{n} \\
    = &\frac{\alpha_{F}}{\alpha_{N}}\cdot\left(1-p_{\sigma,x_{1}}\left(1-\frac{q_{1-\sigma,x_{1}}}{q_{\sigma,x_{1}}}\right) - p_{1-\sigma,x_{1}}\left(1-\frac{q_{\sigma,x_{1}}}{q_{1-\sigma,x_{1}}}\right)\right)^{n} \\
    = &\frac{\alpha_{F}}{\alpha_{N}}\cdot\left(1-\epsilon_{0}\left(\frac{C_{\beta}\epsilon^{\beta}+\epsilon}{C_{\beta}\epsilon_{0}^{\beta}+\epsilon_{0}}-\frac{C_{\beta}\epsilon^{\beta}-\epsilon}{C_{\beta}\epsilon_{0}^{\beta}-\epsilon_{0}}\right)\right)^{n} \\
    \geq &\frac{\alpha_{F}}{\alpha_{N}}\cdot\left(1-n\epsilon_{0}\left(\frac{C_{\beta}\epsilon^{\beta}+\epsilon}{C_{\beta}\epsilon_{0}^{\beta}+\epsilon_{0}}-\frac{C_{\beta}\epsilon^{\beta}-\epsilon}{C_{\beta}\epsilon_{0}^{\beta}-\epsilon_{0}}\right)\right) ,
\end{align*}
where we use the Bernoulli's inequality for the last step. Note that 
\begin{equation*}
    \frac{C_{\beta}\epsilon^{\beta}+\epsilon}{C_{\beta}\epsilon_{0}^{\beta}+\epsilon_{0}}-\frac{C_{\beta}\epsilon^{\beta}-\epsilon}{C_{\beta}\epsilon_{0}^{\beta}-\epsilon_{0}} = \frac{2C_{\beta}(\epsilon\epsilon_{0}^{\beta}-\epsilon_{0}\epsilon^{\beta})}{C_{\beta}^{2}\epsilon_{0}^{2\beta}-\epsilon_{0}^{2}} = O\left(\frac{\epsilon}{\epsilon_{0}^{\beta}}\right) .
\end{equation*}
Hence, it follows that
\begin{align}
  \label{eq:adaptivity-is-impossible-step6}
    &N\cdot\alpha_{N}\cdot\E_{Q}\left[\log\left(\frac{1+V_{F,\sigma,t}/V_{N,\sigma,t}}{1+V_{F,1-\sigma,t}/V_{N,1-\sigma,t}}\right)\right] \leq N\cdot\alpha_{N}\cdot\E_{Q}\left[\frac{V_{F,\sigma,t}}{V_{N,\sigma,t}}-\frac{V_{F,1-\sigma,t}}{V_{N,1-\sigma,t}}\right] \nonumber \\
    \leq &N\cdot\alpha_{N}\cdot\frac{\alpha_{F}}{\alpha_{N}}\cdot n\epsilon_{0}\left(\frac{C_{\beta}\epsilon^{\beta}+\epsilon}{C_{\beta}\epsilon_{0}^{\beta}+\epsilon_{0}}-\frac{C_{\beta}\epsilon^{\beta}-\epsilon}{C_{\beta}\epsilon_{0}^{\beta}-\epsilon_{0}}\right) = O\left(\alpha_{F}\cdot Nn\epsilon\epsilon_{0}^{1-\beta}\right) .
\end{align}
From \eqref{eq:adaptivity-is-impossible-important-step} and \eqref{eq:adaptivity-is-impossible-step6}, we can conclude that 
\begin{equation}
  \label{eq:conditional-expectation-Q}
    N\cdot\alpha_{N}\cdot\E_{Q}\left[\log\left(\frac{V_{F,\sigma,t}+V_{N,\sigma,t}}{V_{F,1-\sigma,t}+V_{N,1-\sigma,t}}\right)\right] = O\left(\alpha_{F}\cdot Nn\epsilon\epsilon_{0}^{1-\beta} + \alpha_{N}\cdot Nn\epsilon_{0}^{2-\beta}\right) .
\end{equation}
To bound the KL-divergence conditioned on $P$, we start with the following bound
\begin{align*}
    N\cdot\alpha_{F}\cdot\E_{P}\left[\log\left(\frac{V_{F,\sigma,t}+V_{N,\sigma,t}}{V_{F,1-\sigma,t}+V_{N,1-\sigma,t}}\right)\right] \leq &t^{*}\cdot\E_{P}\left[\log\left(\frac{V_{F,\sigma,t}+V_{N,\sigma,t}}{V_{N,1-\sigma,t}}\right)\right] \\
    = &t^{*}\cdot\E_{P}\left[\log\left(\frac{V_{N,\sigma,t}}{V_{N,1-\sigma,t}}\right)+\log\left(\frac{V_{F,\sigma,t}+V_{N,\sigma,t}}{V_{N,\sigma,t}}\right)\right]
\end{align*}
We have already calculated in \eqref{eq:adaptivity-is-impossible-important-step} that
\begin{equation*}
     t^{*}\cdot\E_{P}\left[\log\left(\frac{V_{N,\sigma,t}}{V_{N,1-\sigma,t}}\right)\right] \leq t^{*}n\epsilon\epsilon_{0}^{1-\beta} .
\end{equation*}
For the remaining term, note that
\begin{equation*}
    \E_{P}\left[\log\left(\frac{V_{F,\sigma,t}+V_{N,\sigma,t}}{V_{N,\sigma,t}}\right)\right] = \E_{P}\left[\log\left(\frac{V_{F,\sigma,t}}{V_{N,\sigma,t}}+1\right)\right] \leq \E_{P}\left[\frac{V_{F,\sigma,t}}{V_{N,\sigma,t}}\right] ,
\end{equation*}
which can be calculated directly
\begin{align*}
    &\E_{P}\left[\frac{V_{F,\sigma,t}}{V_{N,\sigma,t}}\right] = \E_{\hat{n}_{t}}\left[\E_{P|\hat{n}_{t}}\left[\frac{V_{F,\sigma,t}}{V_{N,\sigma,t}}\bigg|\hat{n}_{t}\right]\right] \\
    = &\frac{\alpha_{F}}{\alpha_{N}}\cdot\E_{P,\hat{n}_{t}}\left[\sum_{s=0}^{\hat{n}_{t}}\left(\frac{p_{\sigma,x_{1}}}{q_{\sigma,x_{1}}}\right)^{s}\left(\frac{p_{1-\sigma,x_{1}}}{q_{1-\sigma,x_{1}}}\right)^{\hat{n}_{t}-s}\left(\frac{p_{1,x_{0}}}{q_{1,x_{0}}}\right)^{n-\hat{n}_{t}}\binom{\hat{n}_{t}}{s}\left(p_{\sigma|x_{1}}\right)^{s}\left(p_{1-\sigma|x_{1}}\right)^{\hat{n}_{t}-s}\right] \\
    = &\frac{\alpha_{F}}{\alpha_{N}}\cdot\E_{P,\hat{n}_{t}}\left[\left(\frac{p_{1,x_{0}}}{q_{1,x_{0}}}\right)^{n-\hat{n}_{t}}\sum_{s=0}^{\hat{n}_{t}}\binom{\hat{n}_{t}}{s}\left(\frac{p_{\sigma,x_{1}}p_{\sigma|x_{1}}}{q_{\sigma,x_{1}}}\right)^{s}\left(\frac{p_{1-\sigma,x_{1}}p_{1-\sigma|x_{1}}}{q_{1-\sigma,x_{1}}}\right)^{\hat{n}_{t}-s}\right] \\
    = &\frac{\alpha_{F}}{\alpha_{N}}\cdot\E_{P,\hat{n}_{t}}\left[\left(\frac{p_{1,x_{0}}}{q_{1,x_{0}}}\right)^{n-\hat{n}_{t}}\left(\frac{p_{\sigma,x_{1}}p_{\sigma|x_{1}}}{q_{\sigma,x_{1}}}+\frac{p_{1-\sigma,x_{1}}p_{1-\sigma|x_{1}}}{q_{1-\sigma,x_{1}}}\right)^{\hat{n}_{t}}\right] \\
    = &\frac{\alpha_{F}}{\alpha_{N}}\cdot\sum_{s=0}^{n}\left(\frac{p_{1,x_{0}}}{q_{1,x_{0}}}\right)^{n-s}\left(\frac{p_{\sigma,x_{1}}p_{\sigma|x_{1}}}{q_{\sigma,x_{1}}}+\frac{p_{1-\sigma,x_{1}}p_{1-\sigma|x_{1}}}{q_{1-\sigma,x_{1}}}\right)^{s}\binom{n}{s}(p_{x_{1}})^{s}(p_{x_{0}})^{n-s} \\
    = &\frac{\alpha_{F}}{\alpha_{N}}\cdot\left(\frac{p_{\sigma,x_{1}}^{2}}{q_{\sigma,x_{1}}} + \frac{p_{1-\sigma,x_{1}}^{2}}{q_{1-\sigma,x_{1}}} + \frac{p_{1,x_{0}}^{2}}{q_{1,x_{0}}}\right)^{n} .
\end{align*}
In the following, we analyze each summand separately:
\begin{equation*}
    \frac{p_{1,x_{0}}^{2}}{q_{1,x_{0}}} = \frac{\left(1-C_{\beta}\epsilon^{\beta}\right)^{2}}{1-C_{\beta}\epsilon_{0}^{\beta}} = O(1) ,
\end{equation*}
and
\begin{equation*}
    \frac{p_{\sigma,x_{1}}^{2}}{q_{\sigma,x_{1}}} = \frac{\left(\frac{1}{2}C_{\beta}\epsilon^{\beta}+\frac{1}{2}\epsilon\right)^{2}}{\frac{1}{2}C_{\beta}\epsilon_{0}^{\beta}+\frac{1}{2}\epsilon_{0}} = O\left(\frac{\epsilon^{2\beta}}{\epsilon_{0}^{\beta}}\right), \;\; \frac{p_{1-\sigma,x_{1}}^{2}}{q_{1-\sigma,x_{1}}} = \frac{\left(\frac{1}{2}C_{\beta}\epsilon^{\beta}-\frac{1}{2}\epsilon\right)^{2}}{\frac{1}{2}C_{\beta}\epsilon_{0}^{\beta}-\frac{1}{2}\epsilon_{0}} = O\left(\frac{\epsilon^{2\beta}}{\epsilon_{0}^{\beta}}\right) .
\end{equation*}
When $\epsilon=(n\sqrt{N})^{-1/(2-\beta)}$ and $\epsilon_{0}=(nN)^{-1/(2-\beta)}$, we have
\begin{equation*}
    t^{*}\cdot\E_{P}\left[\frac{V_{F,\sigma,t}}{V_{N,\sigma,t}}\right] = O\left(t^{*}\cdot\frac{\alpha_{F}}{\alpha_{N}}\cdot\left(\frac{\epsilon^{2\beta}}{\epsilon_{0}^{\beta}}\right)^{n}\right) = O\left(\frac{(t^{*})^{2}}{N}\cdot\left(\frac{1}{n}\right)^{n\beta/(2-\beta)}\right) ,
\end{equation*}
and also the other terms are of order
\begin{equation*}
    O\left(\alpha_{F}\cdot Nn\epsilon\epsilon_{0}^{1-\beta} + \alpha_{N}\cdot Nn\epsilon_{0}^{2-\beta}\right) = O\left(\frac{t^{*}}{N^{(3/2-\beta)/(2-\beta)}} + 1\right) .
\end{equation*}
Altogether, we have
\begin{equation*}
    \mathbb{D}_{\mathrm{KL}}(P_{\sigma}||P_{1-\sigma}) = O \left(\frac{t^{*}}{N^{(3/2-\beta)/(2-\beta)}} + 1 + \frac{(t^{*})^{2}}{N}\cdot\left(\frac{1}{n}\right)^{n\beta/(2-\beta)}\right) .
\end{equation*}
When
\begin{equation*}
    t^{*} \leq \sqrt{N\cdot n^{n\beta/(2-\beta)}} \;\;\text{and}\;\; N \geq n^{n\beta/(1-\beta)} ,
\end{equation*}
we have $\mathbb{D}_{\mathrm{KL}}(P_{\sigma}||P_{1-\sigma})=O(1)$, and thus by Lemma~\ref{lem:fano-inequality-binary-hypothesis-testing}, we can guarantee \textcircled{1} as
\begin{equation*}
    \mathbb{P}_{\vI^{*}}\left(\mathbb{P}_{Z}\left(\psi(Z)\neq\sigma|\vI^{*}\right) \geq \frac{2-\sqrt{2}}{8}\right) \geq \frac{2-\sqrt{2}}{8} .
\end{equation*}
Moreover, \textcircled{2} follows from the multiplicative Chernoff bound (Lemma~\ref{lem:chernoff-bound-binomial}). Since $|\vI^{*}|\sim\mathrm{Bin}(N,\alpha_{F})$,
\begin{equation*}
    \mathbb{P}_{\vI^{*}}\left(|\vI^{*}| \geq \frac{t^{*}}{2}\right) = 1-\mathbb{P}_{\vI^{*}}\left(|\vI^{*}|<\frac{t^{*}}{2}\right) \geq 1-\exp\left(-\frac{t^{*}}{8}\right) .
\end{equation*}
The proof is done!
\end{proof}

\section{Additional Results}
  \label{sec:additional-results}
In this section, we make an initial trial towards understanding the optimal adaptivity. Inspired by the example in Subsection~\ref{subsec:pooling-is-optimal}, we focus on exploring under what multitask situations (what assumptions on multitask hyperparameters) pooling achieves optimal adaptive rates, especially when pooling is not minimax optimal. We start with the simple case of $\beta=0$. Specifically, we build a lower bound by constructing a multitask problem given a set of transfer exponents, and show that no adaptive algorithm can achieve a learning rate better than pooling, which however is not minimax optimal. Future works could be to focus on simplifying the assumptions therein (hopefully only having neat assumptions on the set of transfer exponents), which could be hard as it requires a clean and tight bound on the KL divergence between mixture distributions.

\begin{theorem}
  \label{thm:optimal-adaptivity-beta-is-zero}
Suppose that $|\hpc|\geq 3$. Given a set of transfer exponents $\{\rho_{t}, t\in[N+1]\}$ with every $\rho_{t}\geq1$. Let $\rho_{(1)}\leq\cdots\leq\rho_{(N+1)}$ be transfer exponents in order. Assume that each dataset is of size $n$. Let $t^{*}_{p}=\argmin_{t\in[N+1]}((nt^{2})/N)^{-1/(2\bar{\rho}_{t})}$ be the optimal cut-off in the pooling bound and $\epsilon_{p}^{*}=(n(t^{*}_{p})^{2}/N)^{-1/2\rho_{(t^{*}_{p})}}$ be the pooling rates. If there exists a positive integer $N_{G}\leq N$ such that the following hold
\begin{equation*}
    N_{G} = O\left(\frac{1}{n(\epsilon_{p}^{*})^{2}e^{n(\epsilon_{p}^{*})^{2}}}\right) \;\;\mathrm{and}\;\; \rho_{(s)} \geq \frac{2\rho_{(t^{*}_{p})}\ln(nN)}{\ln(n(t^{*}_{p})^{2}/N)}, \;\;\forall s\in\{N_{G}+1,\ldots,N\} ,
\end{equation*}
then for any adaptive algorithm $\mathcal{A}$, there is a multitask model $(\mathcal{M},\Pi)$ satisfying a Bernstein class condition with $\beta=0$ such that, given a multisample $Z\sim\Pi$, we have
\begin{equation*}
    \E\left[\mathcal{E}_{\mathcal{D}}(\mathcal{A}(Z))\right] = \Omega\left(\epsilon_{p}^{*}\right) .
\end{equation*}
\end{theorem}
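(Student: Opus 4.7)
The plan is to mirror the proof of Theorem~\ref{thm:adaptivity-is-impossible}, constructing a randomized multitask instance and applying the Fano binary-testing bound, but now aiming to match the pooling rate $\epsilon_p^*$ rather than the minimax rate. I will work in the same two-point concept class used in Section~\ref{sec:toy-example}, with $h^*(x_1)=y^*=\sigma$ drawn uniformly from $\{0,1\}$, and two families of distributions tailored to $\beta=0$: a fair source $P$ with $P_X(x_1)=1$ and $P_{Y|X}(Y=\sigma|x_1)=\tfrac12+\epsilon$ for $\epsilon\asymp\epsilon_p^*$, which plays both the target role $\mathcal{D}=P$ and the ``good'' source role; and, for each noisy transfer exponent $\rho$ in the prescribed set, a noisy source $Q_\rho$ with $Q_{\rho,X}(x_1)$ and a near-uniform conditional on $x_1$ chosen so that $Q_\rho$ has transfer exponent exactly $\rho$ to $\mathcal{D}$. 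Each of the $N$ source indices is independently labeled fair with probability $\alpha_F=N_G/N$ and noisy with probability $1-\alpha_F$, with the noisy sources realizing the prescribed exponents $\{\rho_{(s)}:s>N_G\}$.

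The argument decomposes into the same two items as Theorem~\ref{thm:adaptivity-is-impossible}. Item~\textcircled{2}, the claim that $|I^*|\geq N_G/2$ with constant probability, follows immediately from the multiplicative Chernoff bound since $|I^*|\sim\mathrm{Bin}(N,\alpha_F)$ has mean $N_G$; this ensures via Lemma~\ref{lem:general-pooling-bound-on-quantiles} that pooling over the fair sources alone attains rate $\epsilon_p^*$, providing the required upper envelope. Item~\textcircled{1}, the claim that no estimator $\psi$ of $\sigma$ from $Z$ succeeds with probability better than a universal constant on a constant-mass set of $I^*$, reduces via the Fano binary-testing bound to proving $\mathbb{D}_{\mathrm{KL}}(P_0\,\|\,P_1)=O(1)$ for the two mixture likelihoods of $Z$ conditional on $\sigma\in\{0,1\}$. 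Combining the two items (with constants chosen so their probabilities exceed $1$ jointly, as in Subsection~\ref{subsec:main-results}) pins down a realization of $I^*$ on which any adaptive learner suffers $\E[\mathcal{E}_\mathcal{D}(\mathcal{A}(Z))]=\Omega(\epsilon_p^*)$.

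The main obstacle, and the only genuinely new technical content relative to Theorem~\ref{thm:adaptivity-is-impossible}, is the KL bound. Following the decomposition of equations~\eqref{eq:adaptivity-is-impossible-step1}--\eqref{eq:conditional-expectation-Q}, the per-task log-likelihood ratio splits into fair- and noisy-component contributions $V_{F,\sigma,t},V_{N,\sigma,t}$. The $\alpha_N N\,\E_Q[\cdot]$ piece is handled by a Bernoulli-type inequality and the same moment identities for $V_{F,\sigma,t}/V_{N,\sigma,t}$ used there; the hypothesis $\rho_{(s)}\geq 2\rho_{(t_p^*)}\ln(nN)/\ln(n(t_p^*)^2/N)$, combined with the closed form $\epsilon_p^*=(n(t_p^*)^2/N)^{-1/(2\rho_{(t_p^*)})}$, is exactly tuned so that the factors $(\epsilon_0/\epsilon)^{1/\rho_{(s)}}$ raised to the task count cancel against the $1/N$ in $\alpha_N$. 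The $\alpha_F N\,\E_P[\cdot]$ piece instead grows like $N_G\cdot(n\epsilon^2)\exp(n\epsilon^2)$ from the binomial moment generating function, and the hypothesis $N_G=O\bigl(1/(n(\epsilon_p^*)^2 e^{n(\epsilon_p^*)^2})\bigr)$ is precisely what forces this to remain $O(1)$. The delicate step is controlling the dependence across the $N-N_G$ noisy tasks, which may carry distinct transfer exponents and hence distinct marginals; since KL between general mixtures lacks a closed form \citep{hershey2007approximating}, I expect to invoke the same task-by-task likelihood-ratio manipulation as in~\eqref{eq:adaptivity-is-impossible-step3}--\eqref{eq:adaptivity-is-impossible-step6}, with the worst case among $\{\rho_{(s)}\}_{s>N_G}$ furnishing a uniform bound.
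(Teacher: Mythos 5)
Your high-level plan — Fano for binary testing, random multitask construction to make the KL computation tractable, one hypothesis controlling the ``informative'' indices and the other controlling the ``uninformative'' ones — matches the paper's at the level of shape. But there are three substantive mismatches, and two of them are gaps rather than mere alternative routes.

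First, the paper's construction is not a fair/noisy dichotomy: it builds a full spectrum $P^\sigma_t$, one distribution per transfer exponent $\rho_t$, all with the same $X$-marginal $P^\sigma_t(x_1)=1$ and conditional $P^\sigma_t(Y=1|X=x_1)=\tfrac12+\tfrac12(2\sigma-1)\epsilon^{\rho_t}$. Your mention of ``distinct marginals'' across noisy tasks suggests you have not internalized that at $\beta=0$ the marginal carries no information; only the binomial count at $x_1$ matters, with success probability parameterized by $\epsilon^{\rho_t}$. Your $Q_\rho$ with ``$Q_{\rho,X}(x_1)$ and a near-uniform conditional'' is underspecified, and if $Q_{\rho,X}(x_1)\neq1$ you would needlessly leak information through the marginal. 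Second, the randomization is not Bernoulli(fair/noisy): the paper draws $i_t\sim\mathrm{Unif}(\{1,\ldots,\lfloor(N+1)/10\rfloor\})$ and sets $Q_t=P_{(i_t)}$, then invokes Lemma~\ref{lem:random-construction} to show that with probability $\geq 0.97$ the resulting sequence admits a permutation realizing the prescribed exponents. Your Bernoulli labeling scheme does not explain how the $N-N_G$ noisy tasks, each carrying potentially distinct $\rho_{(s)}$, are assigned; and without that, the per-task KL is not between clean mixtures and the chain rule does not close. The ``Item~\textcircled{2}'' you import from Theorem~\ref{thm:adaptivity-is-impossible} (a Chernoff lower bound on $|I^*|$) is not what is needed here — what is needed is the permutation-validity guarantee of Lemma~\ref{lem:random-construction}.

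Third, and most importantly, the KL decomposition does not carry over. Equations~\eqref{eq:adaptivity-is-impossible-step1}--\eqref{eq:conditional-expectation-Q} handle a \emph{two}-component mixture ($V_F$ vs.\ $V_N$); here the per-task likelihood is an $m$-component mixture $\tfrac1m\sum_{s=1}^m P^\sigma_{(s)}$ with $m=\lfloor(N+1)/10\rfloor$. The paper's key move is to apply the log-sum inequality (Lemma~\ref{lem:log-sum-inequality}) with data-dependent weights $\hat\alpha_{\mathcal{S}},\hat\alpha_{\mathcal{S}^c}$ to split the single per-task log-likelihood ratio into a contribution from the large-exponent indices $\mathcal{S}$ (controlled by a Chernoff-Hoeffding concentration argument around the deviation threshold $\Delta=(8N\epsilon^{\rho_{(s)}})^{-1}$) and a contribution from the small-exponent indices $\mathcal{S}^c$ (controlled by the binomial MGF, which is where $|\mathcal{S}^c|\leq N_G=O(1/(n\epsilon^2 e^{n\epsilon^2}))$ enters). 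Your proposal of ``the worst case among $\{\rho_{(s)}\}_{s>N_G}$ furnishing a uniform bound'' will not close: taking the worst case uniformly loses the $1/m$ averaging that makes the $\mathcal{S}$-contribution $O(1/N)$ per task, and the log-sum trick is not recoverable from the $V_F/V_N$ bookkeeping. If you want to push this through you need to (i) switch to the spectral construction with all marginals at $x_1$ equal to one, (ii) randomize via $\mathrm{Unif}([m])$ plus Lemma~\ref{lem:random-construction}, and (iii) replace the fair/noisy split by the log-sum split over $\mathcal{S}$ vs.\ $\mathcal{S}^c$.
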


\begin{proof}[Proof of Theorem~\ref{thm:optimal-adaptivity-beta-is-zero}]
Since $|\hpc|\geq 3$, we can find two points $x_{0},x_{1}\in\mathcal{X}$ such that the following holds. There exists $y_{0}\in\{0,1\}$ such that there are two concepts $h_{0},h_{1}\in\hpc$ satisfying $h_{0}(x_{0})=h_{1}(x_{0})=y_{0}$, $h_{0}(x_{1})=0$ and $h_{1}(x_{1})=1$. Given a multiset of transfer exponents $\{\rho_{t}, 1\leq t\leq N+1\}$, for each $t\in[N+1]$, we construct two distributions as follow. Fix any $\epsilon\in(0,1)$. For each $\sigma\in\{0,1\}$, let 
\begin{equation*}
    P^{\sigma}_{t}:\; P^{\sigma}_{t}(x_{1})=1,\;\; P^{\sigma}_{t}(Y=1|X=x_{1})=1/2+(1/2)(2\sigma-1)\epsilon^{\rho_{t}} .
\end{equation*}
It is clear that all the distributions $\{P^{\sigma}_{t},1\leq t\leq N+1\}$ share the same optimal function $h_{\sigma}$. Next, we verify that every $P^{\sigma}_{t}$ satisfy a Bernstein class condition with $\beta=0$ and some $C_{\beta}\geq2$. This is trivial by definition: $P^{\sigma}_{t}(h_{1-\sigma} \neq h_{\sigma}) = 1 \leq C_{\beta} = C_{\beta}\left(\mathcal{E}_{P^{\sigma}_{t}}(h_{1-\sigma})\right)^{\beta}$. Finally, we have to verify that the constructed distributions admit the given set of transfer exponents. Let $P^{\sigma}_{N+1}$ be the target distribution $\mathcal{D}^{\sigma}$ and thus $\rho_{N+1}=1$. Note that for each $t\in[N]$ and any $h\in\hpc$, we have
\begin{equation*}
    \mathcal{E}_{P^{\sigma}_{t}}(h) = \text{er}_{P^{\sigma}_{t}}(h)-\text{er}_{P^{\sigma}_{t}}(h_{\sigma}) = \epsilon^{\rho_{t}}\mathbbm{1}\left\{h(x_{1})\neq\sigma\right\} .
\end{equation*}
Therefore, we have for each $t\in[N]$ and any $h\in\hpc$ that
\begin{equation*}
    \mathcal{E}_{\mathcal{D}^{\sigma}}(h) = \epsilon\mathbbm{1}\left\{h(x_{1})\neq\sigma\right\} = \left[\left(\epsilon\mathbbm{1}\left\{h(x_{1})\neq\sigma\right\}\right)^{\rho_{t}}\right]^{1/\rho_{t}} = \left[\epsilon^{\rho_{t}}\mathbbm{1}\left\{h(x_{1})\neq\sigma\right\}\right]^{1/\rho_{t}} \leq C_{\rho}\left(\mathcal{E}_{P^{\sigma}_{t}}(h)\right)^{1/\rho_{t}} 
\end{equation*}
with some constant $C_{\rho}\geq2$. The following lemma gives a probabilistic method for constructing an i.i.d. sequence of distributions admit the given transfer exponents with some descent probability. 

\begin{lemma} 
  \label{lem:random-construction}
Let $\{\rho_{t}, t=1,\ldots,N+1\}$ be a multiset of transfer exponents (to some target $\mathcal{D}$). Given a set of distributions $P_{1},\ldots,P_{N+1}$ such that $P_{t}$ admits $\rho_{t}$ for every $t$. Let $P_{(1)},\ldots,P_{(N+1)}$ denote the ordered sequence of $P_{1},\ldots,P_{N+1}$ based on a non-decreasing order of their transfer exponents $\rho_{(1)}\leq\ldots\leq\rho_{(N+1)}$. Consider the following random construction of distributions: for every $t\in[N+1]$, draw $i_{t}\sim\mathrm{Unif}(\{1,\ldots,\lfloor(N+1)/10\rfloor\})$ and set $Q_{t}=P_{(i_{t})}$. Then, with probability at least $0.97$, there exists a permutation of $Q_{1},\ldots,Q_{N+1}$ that admits $\rho_{1},\ldots,\rho_{N+1}$.
\end{lemma}

\begin{proof}[Proof of Lemma~\ref{lem:random-construction}]
Note that for any $t\in[N+1]$, if $Q_{t}=P_{(i_{t})}$ satisfies $i_{t}\leq t$ and thus $\rho_{(i_{t})}\leq \rho_{(t)}$, then it holds
\begin{equation*}
    \mathcal{E}_{\mathcal{D}}(h) \leq C_{\rho}(\mathcal{E}_{P_{(i_{t})}}(h))^{1/\rho_{(i_{t})}} \leq C_{\rho}(\mathcal{E}_{P_{(i_{t})}}(h))^{1/\rho_{(t)}} = C_{\rho}(\mathcal{E}_{Q_{(t)}}(h))^{1/\rho_{(t)}} ,
\end{equation*}
namely, $Q_{(t)}$ admits a transfer exponent $\rho_{(t)}$. Accordingly, it suffices to show that with some descent probability, $\{i_{1},\ldots,i_{N+1}\}$ has a permutation $\{i^{'}_{1},\ldots,i^{'}_{N+1}\}$ such that $i^{'}_{t}\leq t$ for every $t\in[N+1]$. 

Since $i_{t}\sim\mathrm{Unif}(\{1,\ldots,\lfloor(N+1)/10\rfloor\})$, this is always true for $t>\lfloor(N+1)/10\rfloor$. For notational simplicity, let $m=\lfloor(N+1)/10\rfloor$ and the problem is to draw $10m$ i.i.d. samples $\{T_{1},\ldots,T_{10m}\}$ from $\mathrm{Unif}([m])$. Let $S_{i}=\sum_{j=1}^{10m}\mathbbm{1}\{T_{j}\leq i\}$ denote the number of random samples with value at most $i$. It is clear that $S_{i}\sim\mathrm{Bin}(10m,i/m)$. Note that if $S_{1}\geq1,\ldots,S_{m}\geq m$, then we are able to greedily find $m$ samples $\{T_{1},\ldots,T_{m}\}$ such that $T_{i}\leq i$ for every $i\in[m]$. Hence, we have
\begin{equation*}
    \mathbb{P}\left(\exists T_{1},\ldots,T_{m}\in\{T_{1},\ldots,T_{10m}\}\;\;\mathrm{s.t.}\;\;T_{i}\leq i,\;\forall i\in[m]\right) \geq \mathbb{P}\left(S_{1}\geq1,\ldots,S_{m}\geq m\right) .
\end{equation*}
To lower bound the probability on the above RHS, we consider the complementary event. By union bound, we have $\mathbb{P}(\exists i\in[m]:S_{i}<i)\leq\sum_{i=1}^{m}\mathbb{P}(S_{i}<i)$. Since $S_{i}\sim\mathrm{Bin}(10m,i/m)$, for each $i\in[m]$, by multiplicative Chernoff bound (Lemma~\ref{lem:chernoff-bound-binomial} with $\delta=9/10$), we have that
\begin{equation*}
    \mathbb{P}\left(S_{i}<i\right) \leq \exp\left(-81i/20\right) .
\end{equation*}
It follows that 
\begin{equation*}
    \mathbb{P}\left(\exists i\in[m]:S_{i}<i\right) \leq \sum_{i=1}^{m}\mathbb{P}\left(S_{i}<i\right) \leq \sum_{i=1}^{m}\exp\left(-81i/20\right) \leq \sum_{i=1}^{\infty}\exp\left(-81i/20\right) = \frac{e^{-81/20}}{1-e^{-81/20}} \leq 0.03.
\end{equation*}
Altogether, we have with probability of at least $1-0.03=0.97$, there exists a permutation of $Q_{1},\ldots,Q_{N+1}$ admits $\rho_{1},\ldots,\rho_{N+1}$.
\end{proof}

We apply Lemma~\ref{lem:random-construction} to the distributions $P^{\sigma}_{1},\ldots,P^{\sigma}_{N+1}$ and get independently random distributions $Q^{\sigma}_{1},\ldots,Q^{\sigma}_{N+1}$ that admit $\rho_{1},\ldots,\rho_{N+1}$ with probability at least $0.97$. Now, for each $\sigma\in\{0,1\}$, assume that a multisample $Z\sim\Pi^{\sigma}$ consists of $n_{t}$ samples drawn from each distribution $Q^{\sigma}_{t}$. We denote by $\Pi^{\sigma}=\Pi_{t=1}^{N+1}Q^{\sigma}_{t}$ the (joint) multitask distribution. For the same purpose as in the proof of Theorem~\ref{thm:adaptivity-is-impossible}, making such a random construction can facilitate the analysis of the KL-divergence between $\Pi^{\sigma}$ and $\Pi^{1-\sigma}$ due to the independence between datasets. For notational simplicity, denote $m=\lfloor(N+1)/10\rfloor$. Based on the chain rule for KL-divergence, i.e., $\mathbb{D}_{\mathrm{KL}}(P(X,Y)||Q(X,Y))=\mathbb{D}_{\mathrm{KL}}(P(X)||Q(X))+\mathbb{D}_{\mathrm{KL}}(P(Y|X)||Q(Y|X))$, we can write
\begin{equation*}
    \mathbb{D}_{\mathrm{KL}}\left(\Pi^{1}||\Pi^{0}\right) = \sum_{t=1}^{N+1}\mathbb{D}_{\mathrm{KL}}\left(Q^{1}_{t}||Q^{0}_{t}\right) = \sum_{t=1}^{N+1}\mathbb{D}_{\mathrm{KL}}\left(\frac{1}{m}\sum_{s=1}^{m}P^{1}_{(s)}\bigg|\bigg|\frac{1}{m}\sum_{s=1}^{m}P^{0}_{(s)}\right) .
\end{equation*}
For simplicity, we assume that $n_{t}=n$ for every $t\in[N+1]$. Now given a dataset $Z_{t}$ with size $n$, we consider a sufficient statistic $\hat{n}_{t}$ representing the number of $(x_{1},1)$ examples in $Z_{t}$. The likelihood function of $\hat{n}_{t}$ on the mixture distribution $\frac{1}{m}\sum_{s=1}^{m}P^{1}_{(s)}$ is
\begin{equation*}
    \left(\frac{1}{m}\sum_{s=1}^{m}P^{1}_{(s)}\right)(\hat{n}_{t}) = \frac{1}{m}\sum_{s=1}^{m}\binom{n}{\hat{n}_{t}}\left(\frac{1}{2}+\frac{1}{2}\epsilon^{\rho_{(s)}}\right)^{\hat{n}_{t}}\left(\frac{1}{2}-\frac{1}{2}\epsilon^{\rho_{(s)}}\right)^{n-\hat{n}_{t}} ,
\end{equation*}
and similarly, we can write out the likelihood of $\hat{n}_{t}$ on $\frac{1}{m}\sum_{s=1}^{m}P^{0}_{(s)}$. It follows that
\begin{align*}
    \mathbb{D}_{\mathrm{KL}}\left(\Pi^{1}||\Pi^{0}\right) = &\sum_{t=1}^{N+1}\mathbb{D}_{\mathrm{KL}}\left(\left(\frac{1}{m}\sum_{s=1}^{m}P^{1}_{(s)}\right)(\hat{n}_{t})\Bigg|\Bigg|\left(\frac{1}{m}\sum_{s=1}^{m}P^{0}_{(s)}\right)(\hat{n}_{t})\right) \\
    = &\sum_{t=1}^{N+1}\frac{1}{m}\sum_{k=1}^{m}\E_{\hat{n}_{t}\sim P^{1}_{(k)}}\left[\log\left(\frac{\sum_{s=1}^{m}P^{1}_{(s)}(\hat{n}_{t})}{\sum_{s=1}^{m}P^{0}_{(s)}(\hat{n}_{t})}\right)\right] \\
    = &\sum_{t=1}^{N+1}\frac{1}{m}\sum_{k=1}^{m}\E_{\hat{n}_{t}\sim\mathrm{Bin}(n,1/2+\epsilon^{\rho_{(k)}}/2)}\left[\log\left(\frac{\sum_{s=1}^{m}\left(1/2+\epsilon^{\rho_{(s)}}/2\right)^{\hat{n}_{t}}\left(1/2-\epsilon^{\rho_{(s)}}/2\right)^{n-\hat{n}_{t}}}{\sum_{s=1}^{m}\left(1/2-\epsilon^{\rho_{(s)}}/2\right)^{\hat{n}_{t}}\left(1/2+\epsilon^{\rho_{(s)}}/2\right)^{n-\hat{n}_{t}}}\right)\right] \\
    = &\sum_{t=1}^{N+1}\frac{1}{m}\sum_{k=1}^{m}\E_{\hat{n}_{t}\sim\mathrm{Bin}(n,1/2+\epsilon^{\rho_{(k)}}/2)}\left[\log\left(\frac{\sum_{s=1}^{m}\left(1+\epsilon^{\rho_{(s)}}\right)^{\hat{n}_{t}}\left(1-\epsilon^{\rho_{(s)}}\right)^{n-\hat{n}_{t}}}{\sum_{s=1}^{m}\left(1-\epsilon^{\rho_{(s)}}\right)^{\hat{n}_{t}}\left(1+\epsilon^{\rho_{(s)}}\right)^{n-\hat{n}_{t}}}\right)\right] .
\end{align*}
For any $s\in[m]$, assume that $\hat{n}_{t}\leq n/2+\Delta$ for some $\Delta>0$, then we have
\begin{align*}
    \frac{\left(1+\epsilon^{\rho_{(s)}}\right)^{\hat{n}_{t}}\left(1-\epsilon^{\rho_{(s)}}\right)^{n-\hat{n}_{t}}}{\left(1-\epsilon^{\rho_{(s)}}\right)^{\hat{n}_{t}}\left(1+\epsilon^{\rho_{(s)}}\right)^{n-\hat{n}_{t}}} \leq &\frac{\left(1+\epsilon^{\rho_{(s)}}\right)^{n/2+\Delta}\left(1-\epsilon^{\rho_{(s)}}\right)^{n/2-\Delta}}{\left(1-\epsilon^{\rho_{(s)}}\right)^{n/2+\Delta}\left(1+\epsilon^{\rho_{(s)}}\right)^{n/2-\Delta}} \\
    = &\left(\frac{1+\epsilon^{\rho_{(s)}}}{1-\epsilon^{\rho_{(s)}}}\right)^{2\Delta} = \left(1+\frac{2\epsilon^{\rho_{(s)}}}{1-\epsilon^{\rho_{(s)}}}\right)^{2\Delta} \leq \exp\left\{\frac{4\Delta\epsilon^{\rho_{(s)}}}{1-\epsilon^{\rho_{(s)}}}\right\} ,
\end{align*}
where the last inequality uses the fact that $(1+x)^{n}\leq e^{nx}$ for any $x>-1$. Choosing $\Delta=(8N\epsilon^{\rho_{(s)}})^{-1}$ will guarantee that
\begin{equation*}
    \frac{\left(1+\epsilon^{\rho_{(s)}}\right)^{\hat{n}_{t}}\left(1-\epsilon^{\rho_{(s)}}\right)^{n-\hat{n}_{t}}}{\left(1-\epsilon^{\rho_{(s)}}\right)^{\hat{n}_{t}}\left(1+\epsilon^{\rho_{(s)}}\right)^{n-\hat{n}_{t}}} \leq \exp\left\{\frac{4\Delta\epsilon^{\rho_{(s)}}}{1-\epsilon^{\rho_{(s)}}}\right\} \leq e^{1/N} .
\end{equation*} 
Next, we quantify the probability of the event that $\hat{n}_{t} \geq \frac{n}{2}+\Delta$. For any fixed $\Delta>0$, a straightforward observation is that for any $k\in[m]$, since $\rho_{(k)}\geq1$, we have
\begin{equation}
  \label{eq:optimal-adaptivity-beta-is-zero-step1}
    \mathbb{P}_{\hat{n}_{t}\sim\mathrm{Bin}(n,1/2+\epsilon^{\rho_{(k)}}/2)}\left(\hat{n}_{t} \geq \frac{n}{2}+\Delta\right) \leq \mathbb{P}_{\hat{n}_{t}\sim\mathrm{Bin}(n,1/2+\epsilon/2)}\left(\hat{n}_{t} \geq \frac{n}{2}+\Delta\right) .
\end{equation}

\begin{lemma}  [\textbf{Chernoff-Hoeffding theorem}]
  \label{lem:chernoff-hoeffding-theorem}
Let $X\sim\mathrm{Bin}(m,p)$. For any $x>0$, we have
\begin{equation*}
    \mathbb{P}\left(\frac{X}{m}\geq p+x\right) \leq \exp\left\{-\frac{x^{2}m}{2p(1-p)}\right\} .
\end{equation*}
\end{lemma}

\noindent Now for any $k\in[m]$, by the above Lemma~\ref{lem:chernoff-hoeffding-theorem}, we have
\begin{equation*}
    \mathbb{P}_{\hat{n}_{t}\sim\mathrm{Bin}(n,1/2+\epsilon/2)}\left(\hat{n}_{t} \geq \frac{n}{2}+\Delta\right) = \mathbb{P}\left(\frac{\hat{n}_{t}}{n} \geq \frac{1}{2}+\frac{\Delta}{n}\right) \leq \exp\left\{-\frac{n\left(\frac{\Delta}{n}-\frac{\epsilon}{2}\right)^{2}}{2\left(\frac{1}{2}+\frac{\epsilon}{2}\right)\left(\frac{1}{2}-\frac{\epsilon}{2}\right)}\right\} .
\end{equation*}
Plugging $\Delta=(8N\epsilon^{\rho_{(s)}})^{-1}$ into the RHS above, we can further derive
\begin{equation}
  \label{eq:optimal-adaptivity-beta-is-zero-step2}
    \mathbb{P}_{\hat{n}_{t}\sim\mathrm{Bin}(n,1/2+\epsilon/2)}\left(\hat{n}_{t} \geq \frac{n}{2}+\Delta\right) \leq \exp\left\{-2n\left(\frac{1}{8nN\epsilon^{\rho_{(s)}}}-\frac{\epsilon}{2}\right)^{2}\right\} .
\end{equation}
Intuitively, for a sufficiently large index $s$ (such that its transfer exponent $\rho_{(s)}$ is sufficiently large), the above upper bound on the probability can be small. Concretely, we choose $\epsilon$ to be the following pooling rate (obtained from Lemma~\ref{lem:general-pooling-bound} with $\beta=0$), i.e.,
\begin{equation*}
    \epsilon = \min_{t\in[N+1]}\left(c_{1}\cdot\frac{N}{nt^{2}}\right)^{1/2\rho_{(t)}} =: \min_{t\in[N+1]}\left(c_{1}\cdot\phi_{(t)}\right)^{1/2\rho_{(t)}}
\end{equation*}
with some numerical constant $c_{1}>0$. Since we always have $\bar{\rho}_{t}\leq\rho_{(t)}$, proving a lower bound of $\epsilon$ would directly implies a lower bound of the pooling rates. Let $t^{*}_{p}\in[N+1]$ be the optimal cut-off in the above minimum expression, which can be different from the optimal cut-off $t^{*}$ in the minimax rates. Then, for any $s\in[N+1]$, we have
\begin{equation*}
    \left(c_{1}\cdot\phi_{(s)}\right)^{1/2\rho_{(s)}} \geq \left(c_{1}\cdot\phi_{(t^{*}_{p})}\right)^{1/2\rho_{(t^{*}_{p})}} \;\Rightarrow\; \epsilon^{2\rho_{(s)}}=\left(c_{1}\cdot\phi_{(t^{*}_{p})}\right)^{\rho_{(s)}/\rho_{(t^{*}_{p})}} \leq c_{1}\cdot\phi_{(s)} = \frac{c_{1}N}{ns^{2}} .
\end{equation*}

\begin{remark}
We underline the following observations:
\begin{itemize}
    \item[(1)] By definition, the minimax rate is $(\frac{1}{nt^{*}})^{1/2\rho_{(t^{*})}}$ and the pooling rate satisfies 
    \begin{equation*}
        \min_{t\in[N+1]}\left(\frac{N}{nt^{2}}\right)^{1/2\rho_{(t)}} \leq \left(\frac{N}{n(t^{*})^{2}}\right)^{1/2\rho_{(t^{*})}} .
    \end{equation*}
    Hence, if $t^{*}=\Omega(N)$, pooling achieves the minimax rates and is thus optimal.
    \item[(2)] If $t^{*}_{p}\leq\sqrt{N/n}$, then the pooling rate $(\frac{N}{n(t^{*}_{p})^{2}})^{1/2\rho_{(t^{*}_{p})}}$ is vacuous.
\end{itemize}
Therefore, we can assume that $t^{*}=o(N)$ and $t^{*}_{p}\geq\sqrt{N/n}$.
\end{remark}

\noindent Note that
\begin{equation*}
    \frac{\epsilon}{2} = O\left(\left(\frac{N}{n(t^{*}_{p})^{2}}\right)^{1/2\rho_{(t^{*}_{p})}}\right) \;\;\text{and}\;\; \frac{1}{8nN\epsilon^{\rho_{(s)}}} = \Omega\left(\frac{1}{nN}\left(\frac{n(t^{*}_{p})^{2}}{N}\right)^{\rho_{(s)}/2\rho_{(t^{*}_{p})}}\right) .
\end{equation*}
For those large enough indices $s$,
\begin{align}
  \label{eq:large-enough-transfer-exponent}
  \rho_{(s)} \geq \frac{2\rho_{(t^{*}_{p})}\ln(nN)}{\ln(n(t^{*}_{p})^{2}/N)} \;\;\Rightarrow\;\;
  \begin{cases}
      \rho_{(s)} \geq \frac{2\rho_{(t^{*}_{p})}\ln(nN)}{\ln(n(t^{*}_{p})^{2}/N)} - 1 \;\;\Rightarrow\;\; \epsilon=O((nN\epsilon^{\rho_{(s)}})^{-1}) \\
      \rho_{(s)} \geq \frac{2\rho_{(t^{*}_{p})}\ln(nN)}{\ln(n(t^{*}_{p})^{2}/N)} \;\;\Rightarrow\;\; (nN\epsilon^{\rho_{(s)}})^{-1}=\Omega(1)
  \end{cases} ,
\end{align}
Putting \eqref{eq:optimal-adaptivity-beta-is-zero-step1}, \eqref{eq:optimal-adaptivity-beta-is-zero-step2} and \eqref{eq:large-enough-transfer-exponent} together, we can bound for any $k\in[m]$ that
\begin{equation*}
    \mathbb{P}_{\hat{n}_{t}\sim\mathrm{Bin}(n,1/2+\epsilon^{\rho_{(k)}}/2)}\left(\hat{n}_{t} \geq \frac{n}{2}+\Delta\right) = O\left(\exp\left\{-n\left(\frac{1}{nN\epsilon^{\rho_{(s)}}}\right)^{2}\right\}\right) = O\left(e^{-n}\right) .
\end{equation*}
When $\hat{n}_{t}\geq\frac{n}{2}+\Delta$, we can upper bound it by the following extreme case
\begin{equation*}
    \frac{\left(1+\epsilon^{\rho_{(s)}}\right)^{\hat{n}_{t}}\left(1-\epsilon^{\rho_{(s)}}\right)^{n-\hat{n}_{t}}}{\left(1-\epsilon^{\rho_{(s)}}\right)^{\hat{n}_{t}}\left(1+\epsilon^{\rho_{(s)}}\right)^{n-\hat{n}_{t}}} \leq \left(\frac{1+\epsilon^{\rho_{(s)}}}{1-\epsilon^{\rho_{(s)}}}\right)^{n} \leq e^{4n\epsilon^{\rho_{(s)}}} .
\end{equation*}
For those small indices $s\in[m]$ such that \eqref{eq:large-enough-transfer-exponent} fails, we have
\begin{equation*}
    \frac{\left(1+\epsilon^{\rho_{(s)}}\right)^{\hat{n}_{t}}\left(1-\epsilon^{\rho_{(s)}}\right)^{n-\hat{n}_{t}}}{\left(1-\epsilon^{\rho_{(s)}}\right)^{\hat{n}_{t}}\left(1+\epsilon^{\rho_{(s)}}\right)^{n-\hat{n}_{t}}} \leq \frac{\left(1+\epsilon\right)^{\hat{n}_{t}}\left(1-\epsilon\right)^{n-\hat{n}_{t}}}{\left(1-\epsilon\right)^{\hat{n}_{t}}\left(1+\epsilon\right)^{n-\hat{n}_{t}}} = \left(\frac{1+\epsilon}{1-\epsilon}\right)^{2\hat{n}_{t}-n} \leq e^{8\epsilon(\hat{n}_{t}-n/2)} .
\end{equation*}
For notational simplicity, we denote by $\mathcal{S}$ the set of those indices $s\in[m]$ satisfying \eqref{eq:large-enough-transfer-exponent} and denote $\mathcal{S}^{c}:=[m]\setminus\mathcal{S}$. We further define
\begin{equation*}
    \hat{\alpha}_{\mathcal{S}} := \frac{\sum_{s\in\mathcal{S}}\left(1+\epsilon^{\rho_{(s)}}\right)^{\hat{n}_{t}}\left(1-\epsilon^{\rho_{(s)}}\right)^{n-\hat{n}_{t}}}{\sum_{s\in[m]}\left(1+\epsilon^{\rho_{(s)}}\right)^{\hat{n}_{t}}\left(1-\epsilon^{\rho_{(s)}}\right)^{n-\hat{n}_{t}}} \;\;\mathrm{and}\;\; \hat{\alpha}_{\mathcal{S}^{c}} := \frac{\sum_{s\in\mathcal{S}^{c}}\left(1+\epsilon^{\rho_{(s)}}\right)^{\hat{n}_{t}}\left(1-\epsilon^{\rho_{(s)}}\right)^{n-\hat{n}_{t}}}{\sum_{s\in[m]}\left(1+\epsilon^{\rho_{(s)}}\right)^{\hat{n}_{t}}\left(1-\epsilon^{\rho_{(s)}}\right)^{n-\hat{n}_{t}}} .
\end{equation*}
Now for any $k\in[m]$, by applying Lemma~\ref{lem:log-sum-inequality}, we can bound
\begin{align}
  \label{eq:optimal-adaptivity-beta-is-zero-step3}
    &\E_{\hat{n}_{t}\sim\mathrm{Bin}(n,1/2+\epsilon^{\rho_{(k)}}/2)}\left[\log\left(\frac{\sum_{s=1}^{m}\left(1+\epsilon^{\rho_{(s)}}\right)^{\hat{n}_{t}}\left(1-\epsilon^{\rho_{(s)}}\right)^{n-\hat{n}_{t}}}{\sum_{s=1}^{m}\left(1-\epsilon^{\rho_{(s)}}\right)^{\hat{n}_{t}}\left(1+\epsilon^{\rho_{(s)}}\right)^{n-\hat{n}_{t}}}\right)\right] \nonumber \\
    = &\E_{\hat{n}_{t}\sim\mathrm{Bin}(n,1/2+\epsilon^{\rho_{(k)}}/2)}\left[\log\left(\frac{\sum_{s\in\mathcal{S}}\left(1+\epsilon^{\rho_{(s)}}\right)^{\hat{n}_{t}}\left(1-\epsilon^{\rho_{(s)}}\right)^{n-\hat{n}_{t}}+\sum_{s\in\mathcal{S}^{c}}\left(1+\epsilon^{\rho_{(s)}}\right)^{\hat{n}_{t}}\left(1-\epsilon^{\rho_{(s)}}\right)^{n-\hat{n}_{t}}}{\sum_{s\in\mathcal{S}}\left(1-\epsilon^{\rho_{(s)}}\right)^{\hat{n}_{t}}\left(1+\epsilon^{\rho_{(s)}}\right)^{n-\hat{n}_{t}}+\sum_{s\in\mathcal{S}^{c}}\left(1-\epsilon^{\rho_{(s)}}\right)^{\hat{n}_{t}}\left(1+\epsilon^{\rho_{(s)}}\right)^{n-\hat{n}_{t}}}\right)\right] \nonumber \\
    \leq &\E_{\hat{n}_{t}\sim\mathrm{Bin}(n,1/2+\epsilon^{\rho_{(k)}}/2)}\left[\hat{\alpha}_{\mathcal{S}}\log\left(\frac{\sum_{s\in\mathcal{S}}\left(1+\epsilon^{\rho_{(s)}}\right)^{\hat{n}_{t}}\left(1-\epsilon^{\rho_{(s)}}\right)^{n-\hat{n}_{t}}}{\sum_{s\in\mathcal{S}}\left(1-\epsilon^{\rho_{(s)}}\right)^{\hat{n}_{t}}\left(1+\epsilon^{\rho_{(s)}}\right)^{n-\hat{n}_{t}}}\right)\right] \nonumber \\
    & +\E_{\hat{n}_{t}\sim\mathrm{Bin}(n,1/2+\epsilon^{\rho_{(k)}}/2)}\left[\hat{\alpha}_{\mathcal{S}^{c}}\log\left(\frac{\sum_{s\in\mathcal{S}^{c}}\left(1+\epsilon^{\rho_{(s)}}\right)^{\hat{n}_{t}}\left(1-\epsilon^{\rho_{(s)}}\right)^{n-\hat{n}_{t}}}{\sum_{s\in\mathcal{S}^{c}}\left(1-\epsilon^{\rho_{(s)}}\right)^{\hat{n}_{t}}\left(1+\epsilon^{\rho_{(s)}}\right)^{n-\hat{n}_{t}}}\right)\right] \nonumber \\
    \leq &\frac{1}{N} + \max_{s\in\mathcal{S}}\left\{4n\epsilon^{\rho_{(s)}}\right\}\cdot O\left(e^{-n}\right) + 8\epsilon\cdot\E_{\hat{n}_{t}\sim\mathrm{Bin}(n,1/2+\epsilon^{\rho_{(k)}}/2)}\left[\hat{\alpha}_{\mathcal{S}^{c}}\left(\hat{n}_{t}-\frac{n}{2}\right)\right] .
\end{align}
When $s\in\mathcal{S}$, i.e., when \eqref{eq:large-enough-transfer-exponent} holds, it is clear that $\max_{s\in\mathcal{S}}\{n\epsilon^{\rho_{(s)}}\}\cdot O(e^{-n})=O(e^{-n}/N)=O(1/N)$. It remains to bound the last summand in \eqref{eq:optimal-adaptivity-beta-is-zero-step3}. When $\hat{n}_{t}\leq n/2$, the last summand is non-positive and thus upper bounded by zero. Hence, we consider when $\hat{n}_{t}\geq n/2$. Note that
\begin{equation*}
    \hat{\alpha}_{\mathcal{S}^{c}} = \frac{\sum_{s\in\mathcal{S}^{c}}\left(1+\epsilon^{\rho_{(s)}}\right)^{\hat{n}_{t}}\left(1-\epsilon^{\rho_{(s)}}\right)^{n-\hat{n}_{t}}}{\sum_{s\in[m]}\left(1+\epsilon^{\rho_{(s)}}\right)^{\hat{n}_{t}}\left(1-\epsilon^{\rho_{(s)}}\right)^{n-\hat{n}_{t}}} \leq \frac{\sum_{s\in\mathcal{S}^{c}}\left(1+\epsilon^{\rho_{(s)}}\right)^{\hat{n}_{t}}\left(1-\epsilon^{\rho_{(s)}}\right)^{n-\hat{n}_{t}}}{\sum_{s\in\mathcal{S}}\left(1+\epsilon^{\rho_{(s)}}\right)^{\hat{n}_{t}}\left(1-\epsilon^{\rho_{(s)}}\right)^{n-\hat{n}_{t}}} .
\end{equation*}
When $\hat{n}_{t}\geq n/2$, for any $s\in\mathcal{S}^{c}$, 
\begin{equation*}
    \left(1+\epsilon^{\rho_{(s)}}\right)^{\hat{n}_{t}}\left(1-\epsilon^{\rho_{(s)}}\right)^{n-\hat{n}_{t}} \leq \left(1+\epsilon\right)^{\hat{n}_{t}}\left(1-\epsilon\right)^{n-\hat{n}_{t}} ,
\end{equation*}
and for any $s\in\mathcal{S}$, $(1+\epsilon^{\rho_{(s)}})^{\hat{n}_{t}}(1-\epsilon^{\rho_{(s)}})^{n-\hat{n}_{t}} \geq 1$. It follows that
\begin{equation*}
    \hat{\alpha}_{\mathcal{S}^{c}} \leq \frac{|\mathcal{S}^{c}|}{|\mathcal{S}|}\left(1+\epsilon\right)^{\hat{n}_{t}}\left(1-\epsilon\right)^{n-\hat{n}_{t}} = \frac{|\mathcal{S}^{c}|}{|\mathcal{S}|}\left(1-\epsilon^{2}\right)^{n/2}\left(\frac{1+\epsilon}{1-\epsilon}\right)^{\hat{n}_{t}-n/2} \leq \frac{|\mathcal{S}^{c}|e^{n\epsilon^{2}/2+4\epsilon(\hat{n}_{t}-n/2)}}{|\mathcal{S}|} ,
\end{equation*}
and thus for any $k\in[m]$,
\begin{align*}
    \E_{\hat{n}_{t}\sim\mathrm{Bin}(n,1/2+\epsilon^{\rho_{(k)}}/2)}\left[\hat{\alpha}_{\mathcal{S}^{c}}\left(\hat{n}_{t}-\frac{n}{2}\right)\right] \leq &\frac{|\mathcal{S}^{c}|e^{n\epsilon^{2}/2}}{|\mathcal{S}|}\cdot\E_{\hat{n}_{t}\sim\mathrm{Bin}(n,1/2+\epsilon^{\rho_{(k)}}/2)}\left[\left(\hat{n}_{t}-\frac{n}{2}\right)e^{4\epsilon(\hat{n}_{t}-n/2)}\right] \\
    = &O\left(\frac{|\mathcal{S}^{c}|n\epsilon e^{n\epsilon^{2}}}{N}\right) .
\end{align*}
Recall that the Moment Generating Function (MGF) of a binomial distribution $Z\sim\mathrm{Bin}(n,p)$ is $M_{Z}(x)=(1-p+pe^{x})^{n}$. Since $\hat{n}_{t}\sim\mathrm{Bin}(n,1/2+\epsilon^{\rho_{(k)}}/2)$, the MGF of $\hat{n}_{t}$ is 
\begin{equation*}
    M_{\hat{n}_{t}}(x) = \left(\frac{1}{2}-\frac{\epsilon^{\rho_{(k)}}}{2}+\left(\frac{1}{2}+\frac{\epsilon^{\rho_{(k)}}}{2}\right)e^{x}\right)^{n} .
\end{equation*}
Let $\hat{\mu}_{t}:=\hat{n}_{t}-n/2$. By definition, we have
\begin{equation*}
    M_{\hat{\mu}_{t}}(x) = \E\left[e^{x\hat{\mu}_{t}}\right] = \E\left[e^{x\left(\hat{n}_{t}-n/2\right)}\right] = e^{-nx/2}\cdot\E\left[e^{x\hat{n}_{t}}\right] = e^{-nx/2}\cdot M_{\hat{n}_{t}}(x) .
\end{equation*}
It follows that 
\begin{equation*}
    M_{\hat{\mu}_{t}}(x) = e^{-nx/2}\cdot M_{\hat{n}_{t}}(x) = e^{-nx/2}\cdot\left(\frac{1}{2}-\frac{\epsilon^{\rho_{(k)}}}{2}+\left(\frac{1}{2}+\frac{\epsilon^{\rho_{(k)}}}{2}\right)e^{x}\right)^{n} .
\end{equation*}
The purpose of calculating this MGF is to utilize the fact that $M^{\prime}_{\hat{\mu}_{t}}(x)=\E[\hat{\mu}_{t}e^{x\hat{\mu}_{t}}]$. In particular, choosing $x=4\epsilon$ yields
\begin{align*}
    &\E_{\hat{n}_{t}\sim\mathrm{Bin}(n,1/2+\epsilon^{\rho_{(k)}}/2)}\left[\left(\hat{n}_{t}-\frac{n}{2}\right)e^{4\epsilon(\hat{n}_{t}-n/2)}\right] = \E\left[\hat{\mu}_{t}e^{4\epsilon\hat{\mu}_{t}}\right] = M^{\prime}_{\hat{\mu}_{t}}(4\epsilon) \\
    = &ne^{-2n\epsilon}\left(\frac{e^{4\epsilon}+1}{2}+\frac{(e^{4\epsilon}-1)\epsilon^{\rho_{(k)}}}{2}\right)^{n-1}\left(\frac{e^{4\epsilon}-1}{4}+\frac{(e^{4\epsilon}+1)\epsilon^{\rho_{(k)}}}{4}\right) \\
    \leq &ne^{-2n\epsilon}\left(\frac{e^{4\epsilon}+1}{2}+\frac{(e^{4\epsilon}-1)\epsilon}{2}\right)^{n-1}\left(\frac{e^{4\epsilon}-1}{4}+\frac{(e^{4\epsilon}+1)\epsilon}{4}\right) \\
    \leq &\frac{ne^{4\epsilon-2n\epsilon}}{2}\left(\frac{e^{4\epsilon}+1}{2}+\frac{(e^{4\epsilon}-1)\epsilon}{2}\right)^{n-1}
\end{align*} 
Therefore, the following assumption would suffice to guarantee that $\epsilon\cdot\E[\hat{\alpha}_{\mathcal{S}^{c}}(\hat{n}_{t}-n/2)]=O(1/N)$
\begin{equation}
  \label{eq:a-few-informative-datasets}
    |\mathcal{S}^{c}| = O\left(\frac{1}{n\epsilon^{2}e^{n\epsilon^{2}}}\right) .
\end{equation}
Altogether, we have guaranteed that $\mathbb{D}_{\mathrm{KL}}\left(\Pi^{1}||\Pi^{0}\right)=O(1)$ following from \eqref{eq:optimal-adaptivity-beta-is-zero-step3}. This implies that no adaptive algorithm can outperform pooling.
\end{proof}

\newpage
\section{Technical lemmas}
  \label{sec:technical-lemmas}

\begin{lemma}  [\textbf{Slud's Inequality}] 
  \label{lem:slud-inequality}
Let $X\sim\text{Binomial}(m,p)$ with parameter $0<p\leq1/2$. For any $0\leq m_{0}\leq m(1-2p)$,
\begin{equation*} 
    \mathbb{P}\left(X \geq mp+m_{0}\right) \geq \frac{1}{4}\exp\left(\frac{-m_{0}^{2}}{mp(1-p)}\right) . 
\end{equation*}
\end{lemma}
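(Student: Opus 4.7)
The plan is to reduce Slud's inequality to a Gaussian tail estimate via the classical comparison theorem, and then verify the one-variable Gaussian inequality by elementary calculus.

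First, I would invoke Slud's original comparison (Slud 1977, Theorem 2.1): for $X \sim \mathrm{Bin}(m,p)$ with $p \le 1/2$ and any integer $k$ satisfying $mp \le k \le m(1-p)$,
\begin{equation*}
    \mathbb{P}(X \ge k) \;\ge\; 1 - \Phi\!\left(\frac{k - mp}{\sqrt{mp(1-p)}}\right) ,
\end{equation*}
where $\Phi$ denotes the standard normal CDF. Writing $t := m_{0} / \sqrt{mp(1-p)}$, the hypothesis $0 \le m_{0} \le m(1-2p)$ guarantees $mp \le mp + m_{0} \le m(1-p)$, so applying the comparison with $k = \lceil mp + m_{0} \rceil$ yields $\mathbb{P}(X \ge mp + m_{0}) \ge \Phi(-t)$.

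Next, I would establish the Gaussian tail bound $\Phi(-t) \ge \tfrac{1}{4} e^{-t^{2}}$ for all $t \ge 0$, equivalently $h(t) := 4 e^{t^{2}}\Phi(-t) \ge 1$. The function $h$ is continuous on $[0,\infty)$ with $h(0) = 2$ and $h(t) \to \infty$ as $t \to \infty$ (by the Mills-ratio asymptotic $\Phi(-t) \sim \phi(t)/t$), so its infimum is attained at $t=0$ or at an interior critical point. At any interior critical point $t_{\star}$, the identity $h'(t_{\star}) = 4 e^{t_{\star}^{2}}\bigl(2 t_{\star}\Phi(-t_{\star}) - \phi(t_{\star})\bigr) = 0$ forces $\Phi(-t_{\star}) = \phi(t_{\star})/(2 t_{\star})$, whence
\begin{equation*}
    h(t_{\star}) \;=\; \frac{2\,e^{t_{\star}^{2}/2}}{t_{\star} \sqrt{2\pi}} .
\end{equation*}
Viewing the right-hand side as a function $G(s) = 2 e^{s^{2}/2}/(s\sqrt{2\pi})$ of $s \in (0,\infty)$, one checks $G'(s) = 0$ only at $s=1$ and that $G$ is decreasing then increasing, so $G$ is minimized at $s=1$ with value $\sqrt{2e/\pi} > 1$ (since $2e > \pi$). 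Hence $h(t_{\star}) > 1$ at any interior critical point, and $h(0) = 2 > 1$, so $h(t) \ge 1$ on $[0,\infty)$. Combining with the first step yields the claimed inequality.

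The main obstacle is the Gaussian estimate in the second step: the constant $\tfrac{1}{4}$ is close to tight (the true minimum of $h$ is $\sqrt{2e/\pi} \approx 1.32$), so crude bounds on $\Phi(-t)$ over a bounded interval do not suffice; the argument must exploit the identity for $h(t_{\star})$ at critical points rather than rely on the Gaussian asymptotic regime where $e^{-t^{2}/2}/t$ easily dominates $e^{-t^{2}}$. Once that inequality is in hand, Slud's comparison makes the overall reduction routine.
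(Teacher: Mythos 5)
The paper gives no proof of this lemma; it is listed in the technical-lemmas appendix as a classical fact (the form of Slud's inequality with the explicit constant $1/4$, as in Anthony and Bartlett). Your overall strategy --- Slud's normal-approximation comparison followed by the Gaussian calculus inequality $\Phi(-t)\geq\tfrac{1}{4}e^{-t^{2}}$ --- is the standard one, and the calculus step is correct and well-argued: the identity $\Phi(-t_{\star})=\phi(t_{\star})/(2t_{\star})$ at interior critical points, the reduction to $G(s)=2e^{s^{2}/2}/(s\sqrt{2\pi})$ minimized at $s=1$ with value $\sqrt{2e/\pi}>1$, and the boundary checks at $t=0$ and $t\to\infty$ are all clean.

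There is, however, a genuine gap in Step 1: Slud's comparison theorem holds for \emph{integer} thresholds $k$, while $mp+m_{0}$ need not be an integer. Applying the comparison with $k=\lceil mp+m_{0}\rceil$ as you propose yields $\mathbb{P}(X\geq mp+m_{0})\geq 1-\Phi\bigl((\lceil mp+m_{0}\rceil-mp)/\sqrt{mp(1-p)}\bigr)$; since $\lceil mp+m_{0}\rceil\geq mp+m_{0}$, this lower bound is \emph{at most} $\Phi(-t)$ (not at least), for $t=m_{0}/\sqrt{mp(1-p)}$, and taking the ceiling can also push $k$ outside the admissible window $[mp,m(1-p)]$. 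The discretization is not innocuous: for $X\sim\mathrm{Bin}(1,0.1)$ and $m_{0}=0$ (so $mp+m_{0}=0.1\leq m(1-2p)=0.8$), the claimed bound would give $\mathbb{P}(X\geq0.1)\geq1/4$, yet $\mathbb{P}(X\geq0.1)=\mathbb{P}(X=1)=0.1$. In other words, the lemma as literally written fails for non-integer thresholds, and your Step 1 inherits exactly this failure. The argument does close cleanly when $mp+m_{0}$ is an integer, which is the setting in which Slud's bound is classically quoted and presumably the intended reading; you should either add that hypothesis explicitly or carry out the ceiling bookkeeping and accept the degraded constant.
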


\begin{lemma}  [\textbf{Chernoff Bound for Binomials}]
  \label{lem:chernoff-bound-binomial}
Let $X\sim\text{Binomial}(m,p)$, then for any $\delta>0$,
\begin{equation*}
    \mathbb{P}\left(X\geq(1+\delta)mp\right) \leq \exp\left(-\frac{mp\delta^{2}}{2+\delta}\right) . 
\end{equation*}
Moreover, for any $\delta\in(0,1)$,
\begin{equation*}
    \mathbb{P}\left(X\leq(1-\delta)mp\right) \leq \exp\left(-\frac{mp\delta^{2}}{2}\right) . 
\end{equation*}
\end{lemma}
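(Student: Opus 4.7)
The plan is to prove both bounds by the standard Chernoff method (Markov's inequality applied to an exponentially tilted version of $X$), and then reduce the exact Chernoff tail to the cleaner-looking fractions $-mp\delta^2/(2+\delta)$ and $-mp\delta^2/2$ via two elementary one-variable inequalities.

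First I would write $X=\sum_{i=1}^{m}X_i$ with $X_i\stackrel{\mathrm{iid}}{\sim}\mathrm{Bernoulli}(p)$, compute $\mathbb{E}[e^{tX_i}]=1+p(e^{t}-1)$, and use $1+x\le e^{x}$ to get $\mathbb{E}[e^{tX}]\le\exp\bigl(mp(e^{t}-1)\bigr)$ for every real $t$. For the upper tail, for any $t>0$, Markov gives $\mathbb{P}(X\ge(1+\delta)mp)\le e^{-t(1+\delta)mp}\,\mathbb{E}[e^{tX}]$; the optimal tilt $t=\ln(1+\delta)>0$ yields the classical bound
\begin{equation*}
  \mathbb{P}\bigl(X\ge(1+\delta)mp\bigr)\le\left(\frac{e^{\delta}}{(1+\delta)^{1+\delta}}\right)^{mp}.
\end{equation*}
For the lower tail, I take $t>0$ and apply Markov to $e^{-tX}$, obtaining analogously, with $t=-\ln(1-\delta)>0$ (valid because $\delta\in(0,1)$),
\begin{equation*}
  \mathbb{P}\bigl(X\le(1-\delta)mp\bigr)\le\left(\frac{e^{-\delta}}{(1-\delta)^{1-\delta}}\right)^{mp}.
\end{equation*}

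The remaining work is to check the two scalar inequalities
\begin{equation*}
  (1+\delta)\ln(1+\delta)-\delta\ \ge\ \frac{\delta^{2}}{2+\delta}\quad(\delta>0),\qquad (1-\delta)\ln(1-\delta)+\delta\ \ge\ \frac{\delta^{2}}{2}\quad(\delta\in(0,1)).
\end{equation*}
For the first, I would let $f(\delta)=(1+\delta)\ln(1+\delta)-\delta-\frac{\delta^{2}}{2+\delta}$, note $f(0)=0$, and compute $f'(\delta)=\ln(1+\delta)-\frac{2\delta}{2+\delta}$; differentiating once more gives $\frac{\delta^{2}}{(1+\delta)(2+\delta)^{2}}\ge0$, so $f'$ is non-decreasing with $f'(0)=0$, hence $f\ge0$. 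For the second, I would expand $\ln(1-\delta)=-\sum_{k\ge1}\delta^{k}/k$ and compute
\begin{equation*}
  (1-\delta)\ln(1-\delta)+\delta=\frac{\delta^{2}}{2}+\sum_{k\ge3}\frac{\delta^{k}}{k(k-1)}\ \ge\ \frac{\delta^{2}}{2},
\end{equation*}
since every remaining term is non-negative on $(0,1)$. Taking logarithms of the Chernoff bounds and invoking these two inequalities finishes both claims.

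There is no real obstacle; the only point that requires a moment of care is verifying the two calculus inequalities cleanly (as opposed to quoting them). The series-expansion argument for the lower tail is slightly more pleasant than trying to do it purely by monotonicity, and the derivative-twice trick for the upper tail sidesteps any delicate algebra with $\ln(1+\delta)$.
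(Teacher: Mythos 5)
The paper states this Chernoff bound in the appendix as a standard tool and does not prove it, so there is no in-paper argument to compare against; your plan (MGF bound via $1+x\le e^{x}$, optimal tilt, then reduce to two scalar inequalities) is the standard one and the right strategy. Your lower-tail series calculation is correct: $(1-\delta)\ln(1-\delta)+\delta=\sum_{k\ge 2}\frac{\delta^k}{k(k-1)}=\frac{\delta^2}{2}+\sum_{k\ge 3}\frac{\delta^k}{k(k-1)}$.

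There is, however, a computational slip in the upper-tail part. With $f(\delta)=(1+\delta)\ln(1+\delta)-\delta-\frac{\delta^2}{2+\delta}$, the derivative is
\begin{equation*}
  f'(\delta)=\ln(1+\delta)-\frac{\delta(\delta+4)}{(2+\delta)^2}=\ln(1+\delta)-1+\frac{4}{(2+\delta)^2},
\end{equation*}
not $\ln(1+\delta)-\frac{2\delta}{2+\delta}$; the latter corresponds to $\frac{4\delta+2\delta^2}{(2+\delta)^2}$, which over-counts by $\frac{\delta^2}{(2+\delta)^2}$. Your reported second derivative $\frac{\delta^2}{(1+\delta)(2+\delta)^2}$ is consistent with the incorrect $f'$, not with the true one. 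The conclusion still holds, in two ways you could fix it. (i) Use the correct $f'$ above: then $f''(\delta)=\frac{1}{1+\delta}-\frac{8}{(2+\delta)^3}\ge 0$ since $(2+\delta)^3-8(1+\delta)=4\delta+6\delta^2+\delta^3\ge0$, and with $f'(0)=f(0)=0$ you conclude $f\ge 0$ as you intended. (ii) Arguably cleaner: prove the auxiliary inequality $g(\delta):=\ln(1+\delta)-\frac{2\delta}{2+\delta}\ge 0$ (which is what your displayed "$f'$" and "$f''$" actually establish, since $g'(\delta)=\frac{1}{1+\delta}-\frac{4}{(2+\delta)^2}=\frac{\delta^2}{(1+\delta)(2+\delta)^2}$ and $g(0)=0$), and then note that $(1+\delta)\ln(1+\delta)-\delta\ge(1+\delta)\frac{2\delta}{2+\delta}-\delta=\frac{\delta^2}{2+\delta}$. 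Either repair makes your proof complete and correct.
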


\begin{lemma}  [\textbf{Hoeffding's Inequality}]
  \label{lem:hoeffding-ineq}
Let $Z_{1},\ldots,Z_{n}$ be independent random variables in $[a,b]$ with some constants $a<b$, let $\bar{Z}:=\frac{1}{n}\sum_{i=1}^{n}Z_{i}$. Then for all $t>0$,
\begin{equation*}
    \mathbb{P}\left(\big|\bar{Z} - \E[\bar{Z}]\big| \geq t\right) \leq 2\exp\left\{-\frac{2nt^{2}}{(b-a)^{2}}\right\} .
\end{equation*}
\end{lemma}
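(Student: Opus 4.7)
The plan is to prove Hoeffding's inequality via the standard Chernoff--Cram\'er exponential moment method, which reduces the tail bound to controlling the moment generating function of each summand. First, I would center the variables by setting $Y_i := Z_i - \mathbb{E}[Z_i]$, so that $Y_i$ is a mean-zero random variable supported in an interval of length $b-a$, and the event of interest becomes $\sum_{i=1}^n Y_i \geq nt$ (for the upper tail). Applying Markov's inequality to $e^{s\sum_i Y_i}$ for an arbitrary $s > 0$ yields
\begin{equation*}
\mathbb{P}\!\left(\sum_{i=1}^n Y_i \geq nt\right) \leq e^{-snt}\,\mathbb{E}\!\left[e^{s\sum_i Y_i}\right] = e^{-snt}\prod_{i=1}^n \mathbb{E}\!\left[e^{sY_i}\right],
\end{equation*}
where the last equality uses independence of the $Y_i$'s.

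The key intermediate step is Hoeffding's lemma: for any mean-zero random variable $Y$ supported in an interval $[\alpha,\beta]$, one has $\mathbb{E}[e^{sY}] \leq \exp\!\bigl(s^2(\beta-\alpha)^2/8\bigr)$. I would prove this by convexity: since $e^{sx}$ is convex, for $x \in [\alpha,\beta]$ we have $e^{sx} \leq \frac{\beta-x}{\beta-\alpha}e^{s\alpha} + \frac{x-\alpha}{\beta-\alpha}e^{s\beta}$. Taking expectations and using $\mathbb{E}[Y]=0$ gives $\mathbb{E}[e^{sY}] \leq \frac{\beta}{\beta-\alpha}e^{s\alpha} - \frac{\alpha}{\beta-\alpha}e^{s\beta}$. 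Writing this as $e^{\varphi(s)}$ with $\varphi(s) = \log\bigl(pe^{s\beta} + (1-p)e^{s\alpha}\bigr) + \text{const}$ for an appropriate $p$, a short calculation shows $\varphi(0)=\varphi'(0)=0$ and $\varphi''(s) \leq (\beta-\alpha)^2/4$ for all $s$ (this is the main technical point, where one recognizes $\varphi''(s)$ as the variance of a two-point distribution on $\{\alpha,\beta\}$, which is bounded by $(\beta-\alpha)^2/4$). Taylor's theorem then yields $\varphi(s) \leq s^2(\beta-\alpha)^2/8$.

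Applying Hoeffding's lemma to each $Y_i$ (with $[\alpha,\beta] \subseteq [a-\mathbb{E}Z_i, b-\mathbb{E}Z_i]$, so $\beta-\alpha \leq b-a$), I would obtain
\begin{equation*}
\mathbb{P}\!\left(\bar Z - \mathbb{E}[\bar Z] \geq t\right) \leq \exp\!\left(-snt + \tfrac{ns^2(b-a)^2}{8}\right).
\end{equation*}
Minimizing the right-hand side over $s > 0$ by setting $s = 4t/(b-a)^2$ gives the one-sided bound $\exp\bigl(-2nt^2/(b-a)^2\bigr)$. The matching lower-tail bound follows by applying the same argument to $-Y_i$ (which is also mean-zero and bounded in an interval of length $b-a$). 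A union bound over the two tails yields the stated factor of $2$, completing the proof.

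The main obstacle is the proof of Hoeffding's lemma itself, specifically the uniform bound $\varphi''(s) \leq (b-a)^2/4$; everything else (Chernoff's method, independence to factor the MGF, optimization in $s$, union bound over the two tails) is mechanical. I would not anticipate difficulties beyond carefully verifying the second-derivative bound via the variance interpretation.
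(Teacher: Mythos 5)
Your proposal is a correct and complete proof via the standard Chernoff--Cram\'er method together with Hoeffding's lemma, and all the calculations (the variance bound $\varphi''(s)\leq(\beta-\alpha)^2/4$, the optimization $s=4t/(b-a)^2$, and the two-sided union bound) check out. The paper, however, states Lemma~\ref{lem:hoeffding-ineq} without proof as a standard background fact in the technical appendix, so there is no in-paper argument to compare against; your write-up is simply the canonical textbook proof of that classical result.
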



\begin{lemma}  [\textbf{Stirling's Approximation}]
  \label{lem:stirling-approximation}
For any positive integer $n$,
\begin{equation*}
    \sqrt{2\pi}n^{n+1/2}e^{-n} \leq n! \leq en^{n+1/2}e^{-n} .
\end{equation*}
\end{lemma}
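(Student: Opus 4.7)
The plan is to analyze the monotonic behavior of the sequence $a_n := n!/(n^{n+1/2} e^{-n})$ and identify its limit. Specifically, I would show that $\{a_n\}_{n\geq 1}$ is strictly decreasing with $a_1 = e$ and $\lim_{n\to\infty} a_n = \sqrt{2\pi}$, so that the chain $\sqrt{2\pi} = \lim_{n\to\infty} a_n \leq a_n \leq a_1 = e$ yields both of the requested bounds simultaneously.

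For monotonicity, first I would compute
\[
\frac{a_n}{a_{n+1}} = \frac{n! \cdot (n+1)^{n+3/2} e^{-(n+1)}}{(n+1)! \cdot n^{n+1/2} e^{-n}} = \frac{1}{e}\left(1+\frac{1}{n}\right)^{n+1/2},
\]
so it suffices to verify $(n+1/2)\ln(1+1/n) > 1$ for all $n \geq 1$. Making the symmetric substitution $u = 1/(2n+1)$ gives $\ln(1+1/n) = \ln\frac{1+u}{1-u} = 2(u + u^3/3 + u^5/5 + \cdots)$ and $n + 1/2 = 1/(2u)$, and these combine to yield the exact identity
\[
(n+1/2)\ln(1+1/n) - 1 = \sum_{k\geq 1}\frac{1}{(2k+1)(2n+1)^{2k}} > 0,
\]
establishing $a_n > a_{n+1}$ for every $n \geq 1$.

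Since $\{a_n\}$ is decreasing and positive, it converges to some $L > 0$. To pin down the value of $L$, I would invoke Wallis' product in the form $\lim_{n\to\infty} (n!)^2 \cdot 4^n / ((2n)! \sqrt{n}) = \sqrt{\pi}$. Substituting the asymptotic equivalence $n! \sim L \cdot n^{n+1/2} e^{-n}$ into this limit, the powers of $n$, $e$, and $4$ cancel cleanly and the whole expression collapses to $L/\sqrt{2} = \sqrt{\pi}$, giving $L = \sqrt{2\pi}$. Together with the direct computation $a_1 = 1!/(1^{3/2} e^{-1}) = e$, monotonicity then delivers $\sqrt{2\pi} \leq a_n \leq e$, which is exactly the stated bounds after multiplying through by $n^{n+1/2} e^{-n}$.

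The main obstacle is identifying the limit constant: the monotone-convergence argument alone only shows $L \in [\sqrt{2\pi}, e]$, and pinning $L = \sqrt{2\pi}$ requires a genuinely new input such as Wallis' product (which itself rests on the reduction $\int_0^{\pi/2}\sin^n x\, dx$). A self-contained alternative that avoids importing Wallis is to work directly from the Gamma representation $n! = \int_0^\infty x^n e^{-x}\, dx$, reparametrize around the interior maximum at $x = n$ by writing $x = n + t\sqrt{n}$, and apply Laplace's method; the resulting local Gaussian integral produces the factor $\sqrt{2\pi n}$ automatically, with an explicit remainder one can control uniformly to recover the claimed bounds.
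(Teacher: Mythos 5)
The paper states Lemma~\ref{lem:stirling-approximation} as a standard background fact in the technical-lemmas appendix and gives no proof of its own, so there is no ``paper's approach'' to compare against; your proposal is evaluated on its own merits.

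Your argument is essentially correct and is the classical route to these explicit two-sided Stirling bounds: set $a_n = n!/(n^{n+1/2}e^{-n})$, prove strict monotone decrease via the identity $(n+\tfrac12)\ln(1+\tfrac1n)-1=\sum_{k\ge1}(2k+1)^{-1}(2n+1)^{-2k}>0$, note $a_1=e$, and identify the limit as $\sqrt{2\pi}$ by feeding the asymptotic into Wallis' product. The ratio computation $a_n/a_{1+n}=e^{-1}(1+1/n)^{n+1/2}$ and the symmetric substitution $u=1/(2n+1)$ are both right, and the Wallis cancellation does collapse cleanly to $L/\sqrt2=\sqrt\pi$ once one knows $L>0$.

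The one place you should tighten the argument: ``decreasing and positive $\Rightarrow$ converges to some $L>0$'' is not automatic --- monotone decreasing positive sequences can converge to $0$, and if $L=0$ the substitution $n!\sim L\,n^{n+1/2}e^{-n}$ into Wallis is vacuous (e.g.\ $a_n=e^{-n}$ gives $a_n^2/a_{2n}\equiv1$, so that ratio having a positive limit does not by itself exclude $L=0$). Fortunately your own series identity gives the fix for free: bounding each term by $\tfrac13(2n+1)^{-2k}$ and summing the geometric series yields
\begin{equation*}
    0<\ln a_n-\ln a_{n+1}\le\frac{1}{3\bigl[(2n+1)^2-1\bigr]}=\frac{1}{12}\Bigl(\frac1n-\frac1{n+1}\Bigr),
\end{equation*}
so $\ln a_1-\ln a_n<\tfrac1{12}$ and hence $a_n>e^{11/12}>0$ for all $n$, establishing $L>0$ (and incidentally the sharper upper constant $e^{11/12}$ in place of $e$). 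With that one sentence added, the proof is complete. Your remark about Laplace's method as a self-contained alternative that avoids Wallis is also sound, though it is a different (analytic) route and would require uniform control of the remainder to recover two-sided \emph{nonasymptotic} bounds rather than just the leading-order asymptotic.
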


\begin{lemma}  [\textbf{Log Sum Inequality}]
  \label{lem:log-sum-inequality}
Let $a_{1},\ldots,a_{n}$ and $b_{1},\ldots,b_{n}$ be positive numbers, we have
\begin{equation*}
    \sum_{i=1}^{n}a_{i}\log\left(\frac{a_{i}}{b_{i}}\right) \geq \left(\sum_{i=1}^{n}a_{i}\right)\log\left(\frac{\sum_{i=1}^{n}a_{i}}{\sum_{i=1}^{n}b_{i}}\right) .
\end{equation*}
\end{lemma}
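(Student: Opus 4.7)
The plan is to reduce the Log Sum Inequality to Jensen's inequality applied to the strictly convex function $f(x) = x\log x$ on $(0,\infty)$. I would first introduce the totals $A := \sum_{i=1}^{n} a_i$ and $B := \sum_{i=1}^{n} b_i$, both positive by hypothesis, and rewrite each left-hand summand as
\begin{equation*}
a_i \log\!\left(\frac{a_i}{b_i}\right) \;=\; b_i \cdot \frac{a_i}{b_i}\log\!\frac{a_i}{b_i} \;=\; b_i\, f\!\left(\frac{a_i}{b_i}\right),
\end{equation*}
so that the whole sum equals $B \sum_{i=1}^n w_i f(x_i)$, where $w_i := b_i/B$ and $x_i := a_i/b_i$. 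Here the $w_i$ are non-negative and sum to one, and the $x_i$ lie in $(0,\infty)$.

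Next, I would verify convexity of $f$ by computing $f''(x) = 1/x > 0$ on $(0,\infty)$ and invoke Jensen's inequality:
\begin{equation*}
\sum_{i=1}^n w_i\, f(x_i) \;\geq\; f\!\left(\sum_{i=1}^n w_i x_i\right).
\end{equation*}
Substituting back yields $\sum_i w_i x_i = \sum_i (b_i/B)(a_i/b_i) = A/B$, so the right-hand side equals $f(A/B) = (A/B)\log(A/B)$. Multiplying through by $B$ recovers exactly the target bound $\sum_{i=1}^n a_i \log(a_i/b_i) \geq A \log(A/B)$, which is what the lemma claims.

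I do not foresee any genuine obstacle: this is a textbook consequence of Jensen's inequality, and the strict positivity of every $a_i$ and $b_i$ keeps each ratio $a_i/b_i$ inside the open domain where $f$ is smooth and convex, so no boundary conventions (such as $0\log 0 = 0$) are needed. As a sanity check and alternative route that avoids explicitly naming Jensen, one can instead apply the elementary bound $\log t \geq 1 - 1/t$ (valid for $t>0$) with $t = (a_i/A)/(b_i/B)$, multiply by $a_i$, and sum over $i$; the identity $\sum_i a_i = A = (A/B)\sum_i b_i$ cancels the linear terms and produces the same inequality. Either path is routine and can be completed in a few lines.
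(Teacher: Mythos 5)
Your proposal is correct. The paper states the Log Sum Inequality as a background technical lemma without providing a proof (it is a classical fact, standard in information theory), so there is no paper argument to compare against; your Jensen route via the convexity of $x \mapsto x\log x$, and your alternative via $\log t \ge 1 - 1/t$, are both the usual textbook derivations and are complete as written.
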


\begin{lemma}  [\textbf{Fano's Inequality for Binary Hypothesis Testing}]
  \label{lem:fano-inequality-binary-hypothesis-testing}
Let $P_{0}$ and $P_{1}$ be two possible distributions. Let $\sigma\sim\mathrm{Unif}(\{0,1\})$, i.e., $\mathbb{P}(\sigma=0)=\mathbb{P}(\sigma=1)=1/2$. Let $Z_{\sigma}\sim P_{\sigma}$ be a random variable and $\psi:Z_{\sigma}\mapsto\{0,1\}$ be a function that returns an estimate of the index $\sigma$. Then we have
\begin{equation*}
    \inf_{\psi}\mathbb{P}\left(\psi(Z_{\sigma}) \neq \sigma\right) = \frac{1}{2}\left(1-\lVert P_{0}-P_{1}\rVert_{TV}\right) ,
\end{equation*}
where $\lVert P-Q\rVert_{TV}$ is called the \underline{total variation distance} between distributions $P$ and $Q$. Furthermore, by \textbf{Pinsker’s inequality}, we have 
\begin{equation*}
    \lVert P-Q\rVert_{TV} \leq \sqrt{\frac{1}{2}\min\left\{\mathbb{D}_{\mathrm{KL}}(P\;||\;Q),\mathbb{D}_{\mathrm{KL}}(Q\;||\;P)\right\}},
\end{equation*}
and thus
\begin{equation*}
    \inf_{\psi}\mathbb{P}\left(\psi(Z_{\sigma}) \neq \sigma\right) \geq \frac{1}{2}\left(1-\sqrt{\frac{1}{2}\min\left\{\mathbb{D}_{\mathrm{KL}}(P_{0}\;||\;P_{1}),\mathbb{D}_{\mathrm{KL}}(P_{1}\;||\;P_{0})\right\}}\right) .
\end{equation*}
\end{lemma}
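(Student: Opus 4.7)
The plan is to split the result into two independent parts and handle each by a standard reduction. The first assertion—an exact identity relating the minimax test error to the total variation distance—follows from a direct algebraic manipulation together with the supremum characterization $\lVert P-Q\rVert_{TV}=\sup_{A}|P(A)-Q(A)|$. The second assertion is exactly Pinsker's inequality, which I would prove by reducing to the Bernoulli case via the data-processing inequality and then settling the scalar case by a short convexity argument.

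For the first part, I would identify any test $\psi:\mathcal{Z}\mapsto\{0,1\}$ with the measurable set $A=\psi^{-1}(1)$. Since $\sigma\sim\mathrm{Unif}(\{0,1\})$ and $Z_\sigma\sim P_\sigma$, the error probability decomposes as
\begin{equation*}
\mathbb{P}(\psi(Z_\sigma)\neq\sigma)=\tfrac{1}{2}P_0(A)+\tfrac{1}{2}P_1(A^c)=\tfrac{1}{2}\bigl(1-(P_1(A)-P_0(A))\bigr).
\end{equation*}
Taking the infimum over $\psi$ is then equivalent to taking the supremum of $P_1(A)-P_0(A)$ over measurable $A$, which by the definition of total variation distance equals $\lVert P_0-P_1\rVert_{TV}$; the supremum is attained by the likelihood-ratio test $A^{*}=\{z:dP_1(z)\geq dP_0(z)\}$. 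This yields the claimed identity $\inf_{\psi}\mathbb{P}(\psi(Z_\sigma)\neq\sigma)=\tfrac{1}{2}(1-\lVert P_0-P_1\rVert_{TV})$.

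For the Pinsker bound, I would invoke the data-processing inequality. Applying the binary statistic $T(Z)=\mathbbm{1}_{A}(Z)$ to both $P_0$ and $P_1$ induces Bernoulli distributions $\mathrm{Ber}(P_0(A))$ and $\mathrm{Ber}(P_1(A))$, so $\mathbb{D}_{\mathrm{KL}}(P_0\|P_1)\geq\mathbb{D}_{\mathrm{KL}}(\mathrm{Ber}(P_0(A))\|\mathrm{Ber}(P_1(A)))$. The scalar inequality $\mathbb{D}_{\mathrm{KL}}(\mathrm{Ber}(p)\|\mathrm{Ber}(q))\geq 2(p-q)^2$ is then obtained by defining $f(p)=\mathbb{D}_{\mathrm{KL}}(\mathrm{Ber}(p)\|\mathrm{Ber}(q))-2(p-q)^2$ and checking $f(q)=f'(q)=0$ together with $f''(p)=1/[p(1-p)]-4\geq 0$ on $(0,1)$. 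Choosing $A$ to attain the TV supremum, and then repeating the argument with $P_0,P_1$ swapped, yields $\lVert P_0-P_1\rVert_{TV}\leq\sqrt{\tfrac{1}{2}\min\{\mathbb{D}_{\mathrm{KL}}(P_0\|P_1),\mathbb{D}_{\mathrm{KL}}(P_1\|P_0)\}}$. Combining with the first part completes the proof.

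There is no real technical obstacle here, since both pieces are classical information-theoretic facts; the only spot that deserves care is verifying the second-derivative bound $1/[p(1-p)]\geq 4$ on $(0,1)$ (equality at $p=1/2$) and handling the edge cases $p,q\in\{0,1\}$ via the usual continuity conventions for $0\log 0$ and $x\log(x/0)$. Alternatively, one could simply cite these two standard facts from any reference on statistical lower bounds and omit the scalar calculation entirely.
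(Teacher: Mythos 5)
Your proof is correct. Both steps are argued in the canonical way: the exact identity $\inf_{\psi}\mathbb{P}(\psi(Z_\sigma)\neq\sigma)=\tfrac{1}{2}(1-\lVert P_0-P_1\rVert_{TV})$ follows from identifying a test with its acceptance region and invoking the supremum characterization of total variation (this is really Le~Cam's two-point lower bound), and the second step is Pinsker's inequality proved by data-processing down to the Bernoulli case followed by the second-derivative check $1/[p(1-p)]\geq 4$. The paper itself does not prove this lemma---it is stated in the technical-lemmas appendix as a standard information-theoretic fact and used directly---so there is no paper argument to compare against; your write-up is a correct and complete supply of the missing standard proof. One very minor polish: in the line computing $f''$, you might explicitly note that the second derivative of $D_{\mathrm{KL}}(\mathrm{Ber}(p)\,\|\,\mathrm{Ber}(q))$ in $p$ is $1/p+1/(1-p)=1/[p(1-p)]$, and that convexity plus the double zero at $p=q$ (rather than just $f(q)=f'(q)=0$ alone) is what forces $f\geq 0$; as you say, the endpoint cases are handled by the usual $0\log 0=0$ convention and by noting that if $q\in\{0,1\}$ and $p\neq q$ the KL divergence is $+\infty$ so the inequality is vacuous.
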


\begin{remark}
  \label{rem:remark-to-Pinsker-ineq}
When $\min\left\{\mathbb{D}_{\mathrm{KL}}(P\;||\;Q),\mathbb{D}_{\mathrm{KL}}(Q\;||\;P)\right\}$ is large (larger than 2 for instance), Pinsker's inequality is vacuous, we have the following \textbf{Bretagnolle–Huber's inequality} remain non-vacuous
\begin{equation*}
    \lVert P-Q\rVert_{TV} \leq \sqrt{1-\exp\left(-\min\left\{\mathbb{D}_{\mathrm{KL}}(P\;||\;Q),\mathbb{D}_{\mathrm{KL}}(Q\;||\;P)\right\}\right)} ,
\end{equation*}
and a further refined Fano's inequality for binary hypothesis testing
\begin{equation*}
    \inf_{\psi}\mathbb{P}\left(\psi(Z_{\sigma}) \neq \sigma\right) \geq \frac{1}{2}\left(1-\sqrt{1-\exp\left(-\min\left\{\mathbb{D}_{\mathrm{KL}}(P\;||\;Q),\mathbb{D}_{\mathrm{KL}}(Q\;||\;P)\right\}\right)}\right) .
\end{equation*}
\end{remark}

\begin{lemma}  [\textbf{Berry-Esseen Inequality}]
  \label{lem:berry-esseen-ineq}
Let $X_{1},\ldots,X_{n}$ be a sequence of i.i.d. random variables with mean 0, variance $\sigma^{2}$ and $\E[|X|^{3}]=\rho<\infty$. Let $F_{n}$ stands for the distribution of the normalized sum $\sigma^{-1}\sqrt{n}\bar{X}_{n}$ and $\Phi$ stands for the standard Gaussian distribution, we have for any $n\in\naturalnumber$,
\begin{equation*}
    \sup_{x\in\R}\big|F_{n}(x)-\Phi(x)\big| \leq \frac{3\rho}{\sigma^{3}\sqrt{n}} .
\end{equation*}
\end{lemma}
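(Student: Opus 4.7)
The plan is the classical characteristic-function argument via Esseen's smoothing inequality. After rescaling $X_i \mapsto X_i/\sigma$ one may assume $\sigma = 1$, in which case $\E|X|^3$ becomes $\rho/\sigma^3$, so the target estimate reduces to $\sup_x |F_n(x) - \Phi(x)| \leq 3\tilde{\rho}/\sqrt{n}$ with $\tilde{\rho} := \rho/\sigma^3$. Let $\varphi(t) = \E[e^{itX_1}]$ denote the characteristic function of a single summand; then the characteristic function of the normalized sum is $\varphi_n(t) = \varphi(t/\sqrt{n})^n$, while the standard Gaussian has characteristic function $\psi(t) = e^{-t^2/2}$.

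First I would invoke Esseen's smoothing inequality: for any $T > 0$,
\[
\sup_{x\in\R} |F_n(x) - \Phi(x)| \;\leq\; \frac{1}{\pi}\int_{-T}^{T}\left|\frac{\varphi_n(t) - \psi(t)}{t}\right|\,dt \;+\; \frac{24\,M}{\pi T},
\]
where $M = \sup_x \phi(x) = 1/\sqrt{2\pi}$ bounds the Gaussian density. The strategy is to choose $T \asymp \sqrt{n}/\tilde{\rho}$ so that the boundary term $24M/(\pi T)$ is already of order $\tilde{\rho}/\sqrt{n}$, leaving me to show the integral term is of the same order.

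The heart of the argument is a pointwise estimate $|\varphi_n(t) - \psi(t)| \leq C\,\tilde{\rho}\,|t|^3 n^{-1/2}\,e^{-t^2/3}$ valid for $|t| \leq T$. This I would obtain in three substeps: (i) a third-order Taylor remainder $|\varphi(s) - 1 + s^2/2| \leq \tilde{\rho}|s|^3/6$, which follows from $\E[X] = 0$, $\E[X^2] = 1$, and the elementary bound $|e^{iu} - 1 - iu + u^2/2| \leq |u|^3/6$; (ii) substituting $s = t/\sqrt{n}$ and taking logarithms via $|\log(1+z) - z| \leq |z|^2$ for $|z| \leq 1/2$ to control $|n\log\varphi(t/\sqrt{n}) + t^2/2|$ by $C\tilde{\rho}|t|^3/\sqrt{n}$, which holds precisely when $T$ is chosen as above so that $|t/\sqrt{n}|$ stays small; and (iii) exponentiating through $|e^a - e^b| \leq (|e^a| \vee |e^b|)|a - b|$ while noting that the real part of $n\log\varphi(t/\sqrt{n})$ is at most $-t^2/3$ on the chosen range. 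Dividing by $|t|$ and integrating yields $\int_{-T}^T \tilde{\rho}|t|^2 n^{-1/2} e^{-t^2/3}\,dt = O(\tilde{\rho}/\sqrt{n})$.

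The main obstacle is tracking constants sharply enough to produce the explicit constant $3$ in the final inequality: every inequality (the Taylor remainder, the logarithmic bound, the exponential comparison, the damping factor $e^{-t^2/3}$, and the optimization over $T$) contributes numerical loss, and getting the clean bound $3\rho/(\sigma^3\sqrt{n})$ requires coordinated tight estimates. Obtaining \emph{some} explicit constant is standard textbook material; the value $3$ is a refinement due to careful bookkeeping of these pieces, and for the purposes of the present paper any universal constant would suffice in the applications (e.g.\ the Berry–Esseen call in the proof of Theorem~\ref{thm:pooling-lower-bound}).
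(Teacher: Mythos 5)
The paper provides no proof of this lemma: it is stated in the Technical Lemmas appendix (alongside Slud's inequality, Hoeffding's inequality, Stirling's approximation, Fano's inequality, and the rest) as a classical textbook fact, invoked without proof. So there is nothing in the paper to compare your argument against. That said, your outline is the canonical characteristic-function proof via Esseen's smoothing inequality, and it is sound in structure: reduce to $\sigma = 1$, smooth to replace the sup-norm of $F_n - \Phi$ by an integral of $|\varphi_n(t) - \psi(t)|/|t|$ over $|t| \leq T$ plus a boundary term $O(1/T)$, use the third-order Taylor remainder (driven by $\E X = 0$, $\E X^2 = 1$, $\E|X|^3 = \tilde\rho$) together with a logarithm expansion and a damping bound $\mathrm{Re}\,\bigl(n \log \varphi(t/\sqrt{n})\bigr) \leq -t^2/3$ on the restricted range to get a pointwise estimate $|\varphi_n(t) - \psi(t)| \lesssim \tilde\rho |t|^3 n^{-1/2} e^{-t^2/3}$, and finally optimize $T \asymp \sqrt{n}/\tilde\rho$. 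You are right that this delivers a universal constant but that nailing the specific value $3$ requires delicate bookkeeping (historically: Esseen's $\approx 7.6$, refined over decades down to Shevtsova's $\approx 0.47$); for the paper's single use of this lemma in the proof of Theorem~\ref{thm:pooling-lower-bound}, any universal constant suffices, so that gap is inessential. In short: correct approach, honestly flagged as a sketch, and no paper proof to compare to.
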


\begin{lemma}  [{\textbf{\citealp[][Lemma 1]{hanneke2022no}}}]
  \label{lem:uniform-bernstein-non-identical-distributions}
Let $\hpc$ be a concept class. For any $n\in\naturalnumber$ and $\delta\in(0,1)$, define
\begin{equation*}
    \epsilon(n,\delta) := \frac{d_{\hpc}}{n}\log{\left(\frac{n}{d_{\hpc}}\right)}+\frac{1}{n}\log{\left(\frac{1}{\delta}\right)} .
\end{equation*}
Let $\datasetindependent$ be independent (but non-identically distributed) samples. Then with probability at least $1-\delta$, we have for any $h,h^{\prime}\in\hpc$,
\begin{align*}
    &\E\left[\hat{\mathcal{E}}_{S_{n}}(h,h^{\prime})\right] \leq \hat{\mathcal{E}}_{S_{n}}(h,h^{\prime}) + C_{0}\sqrt{\min\left\{\E\left[\empiricalpseudodistance\right],\empiricalpseudodistance\right\}\epsilon(n,\delta)} + C_{0}\cdot\epsilon(n,\delta) , \\
    &\frac{1}{2}\E\left[\empiricalpseudodistance\right]-C_{0}\cdot\epsilon(n,\delta) \leq \empiricalpseudodistance \leq 2\E\left[\empiricalpseudodistance\right]+C_{0}\cdot\epsilon(n,\delta) ,
\end{align*}
where $C_{0}>0$ is some universal numerical constant.
\end{lemma}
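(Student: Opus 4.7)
\medskip

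\noindent\textbf{Proof proposal.} The plan is to prove the two displays by a standard empirical-process argument: Bernstein's inequality for each fixed pair $(h,h')$, lifted to a uniform statement over $\hpc\times\hpc$ via symmetrization and the VC growth function. The only substantive adaptation is that the $(x_i,y_i)$ are independent but not identically distributed, so all ``expectations'' mean $\E[\hat{P}_{S_n}(h\neq h')]=\frac{1}{n}\sum_{i=1}^n P_i(h\neq h')$, and similarly for $\E[\hat{\mathcal{E}}_{S_n}(h,h')]$. Throughout, I will track the variance proxy carefully so that the final bound has the Bernstein form (square-root of variance times complexity plus complexity).

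First I would prove the second display (the pseudo-distance two-sided bound). For fixed $h,h'\in\hpc$, write $\hat{P}_{S_n}(h\neq h')=\frac{1}{n}\sum_i Z_i$ with $Z_i=\mathbbm{1}\{h(x_i)\neq h'(x_i)\}\in\{0,1\}$ independent and $\mathrm{Var}(Z_i)\le \E Z_i$, so $\frac{1}{n}\sum_i\mathrm{Var}(Z_i)\le\E[\hat{P}_{S_n}(h\neq h')]$. Bernstein's inequality then gives, with probability $\ge 1-\delta'$, a deviation of order $\sqrt{\E[\hat P_{S_n}]\cdot\log(1/\delta')/n}+\log(1/\delta')/n$. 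To make this uniform over $\{h\triangle h':h,h'\in\hpc\}$, I would use standard symmetrization against an independent ghost sample $S_n'$ with $(x_i',y_i')\sim P_i$ and Rademacher signs; symmetrization requires only independence, not identical distribution. The symmetric-difference class has VC dimension $O(d_\hpc)$, so by Sauer--Shelah its projection onto any $2n$ points has size $\le (en/d_\hpc)^{O(d_\hpc)}$. A union bound over this projection replaces $\log(1/\delta')$ by $O(d_\hpc\log(n/d_\hpc)+\log(1/\delta))=O(n\,\epsilon(n,\delta))$, which yields the uniform Bernstein inequality for $|\hat P_{S_n}(h\neq h')-\E[\hat P_{S_n}(h\neq h')]|$. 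The two-sided ratio bound $\frac{1}{2}\E[\hat P]-C_0\epsilon\le \hat P\le 2\E[\hat P]+C_0\epsilon$ then follows by the standard AM--GM trick: $\sqrt{ab}\le\frac{1}{2}a+\frac{1}{2}b$ applied to $a=\E[\hat P]$ and $b=\epsilon(n,\delta)$ (rescaled to absorb $C_0$).

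Next I would prove the first display (the excess-risk Bernstein bound). For fixed $h,h'\in\hpc$, write $\hat{\mathcal{E}}_{S_n}(h,h')=\frac{1}{n}\sum_i W_i$ with $W_i=\mathbbm{1}\{h(x_i)\neq y_i\}-\mathbbm{1}\{h'(x_i)\neq y_i\}\in\{-1,0,1\}$. The crucial observation is $W_i=0$ whenever $h(x_i)=h'(x_i)$, so $W_i^2\le\mathbbm{1}\{h(x_i)\neq h'(x_i)\}$ and hence $\frac{1}{n}\sum_i\mathrm{Var}(W_i)\le\E[\hat P_{S_n}(h\neq h')]$. Bernstein's inequality therefore yields the pointwise bound with the correct variance proxy $\E[\hat P_{S_n}(h\neq h')]$. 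I would uniformize exactly as in the first stage, noting that the class $\{(x,y)\mapsto\mathbbm{1}\{h(x)\neq y\}-\mathbbm{1}\{h'(x)\neq y\}:h,h'\in\hpc\}$ has growth function on any $2n$ points controlled by the symmetric-difference class, so the same $(en/d_\hpc)^{O(d_\hpc)}$ union-bound cost applies. This gives the first display with $\E[\hat P_{S_n}(h\neq h')]$ inside the square root. To obtain the stated $\min\{\E[\hat P],\hat P\}$ version, I would substitute the upper bound $\E[\hat P_{S_n}]\le 2\hat P_{S_n}+C_0\epsilon$ from the second display into the Bernstein term and absorb the resulting $\sqrt{\epsilon\cdot\epsilon}=\epsilon$ contribution into the additive $C_0\epsilon$ term; a union bound across the two events ensures both hold simultaneously with probability $\ge 1-\delta$.

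The main obstacle is purely a bookkeeping one: ensuring that symmetrization, the VC growth-function bound, and Bernstein's inequality all remain valid when the $P_i$'s differ across $i$. Symmetrization works verbatim for independent (not necessarily identically distributed) samples; Sauer--Shelah bounds the combinatorial complexity on any fixed $2n$ points regardless of the underlying distribution; and Bernstein's inequality for independent summands does not require identical distribution, only a bound on the sum of variances (which I control by $\E[\hat P_{S_n}]$) and on the per-summand range (here, bounded by $1$). Beyond that, the only care needed is the two-step derivation of the $\min$ form, which is handled by applying the pseudo-distance bound inside the excess-risk bound after a union bound over the two failure events.
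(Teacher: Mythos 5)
The paper cites this as Lemma~1 of \citet{hanneke2022no} and does not reprove it, so there is no in-paper argument to compare against. Your proposal follows the canonical route to such uniform relative-deviation (Bernstein-type) VC bounds: pointwise Bernstein with a localized variance proxy, lifted to a uniform statement by ghost-sample symmetrization and the Sauer--Shelah growth bound, then deriving the $\min$ form by substituting the pseudo-distance two-sided bound into the excess-risk bound. Your two substantive observations are both correct and are exactly what is needed in the non-i.i.d.\ setting: $W_i^2\le\mathbbm{1}\{h(x_i)\neq h'(x_i)\}$ localizes the variance to $\E\bigl[\hat{P}_{S_n}(h\neq h')\bigr]=\frac{1}{n}\sum_i P_i(h\neq h')$, and symmetrization only requires each ghost coordinate $(x_i',y_i')$ to be an independent copy from the \emph{same} $P_i$, not a common distribution. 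Two caveats on the bookkeeping: the step ``uniformize exactly as in the first stage'' conceals that symmetrizing a \emph{relative} deviation statistic such as $(\E\hat{P}-\hat{P})/\sqrt{\E\hat{P}}$ needs the Vapnik two-sample trick or a peeling argument rather than the plain symmetrization inequality (standard, e.g.\ Anthony--Bartlett, but not ``verbatim''); and the class $\{(x,y)\mapsto\mathbbm{1}\{h(x)\neq y\}-\mathbbm{1}\{h'(x)\neq y\}\}$ is $\{-1,0,1\}$-valued, so its restriction count on $2n$ points should be controlled by the \emph{pair} growth $\le(\text{growth of }\hpc)^2$ rather than a single symmetric-difference class, which only affects constants inside $\epsilon(n,\delta)$. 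With those two points made explicit, the plan is sound and matches the underlying proof in the cited source.
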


\begin{lemma}  [{\textbf{\citealp[][Lemma 2]{hanneke2022no}}}]
  \label{lem:excess-risk-of-any-concept-in-the-confidence-set}
Consider the multitask learning setting defined in Section~\ref{sec:preliminaries}, for any $\vI\subseteq[N]$, let $n_{\vI}:=\sum_{t\in\vI}n_{t}$ and $\bar{P_{\vI}}:=n_{\vI}^{-1}\sum_{t\in\vI}n_{t}P_{t}$. For any $\delta\in(0,1)$, on the event (from Lemma \ref{lem:uniform-bernstein-non-identical-distributions} with $S_{n}=Z_{\vI}:=\bigcup_{t\in\vI}Z_{t}$) with probability at least $1-\delta$, we have for any $h\in\hpc$ satisfying
\begin{equation*}
    \hat{\mathcal{E}}_{Z_{\vI}}(h,\hat{h}_{Z_{\vI}}) \leq C_{0}\sqrt{\hat{P}_{Z_{\vI}}(h\neq \hat{h}_{Z_{\vI}})\epsilon(n_{\vI},\delta)} + C_{0}\cdot\epsilon(n_{\vI},\delta) ,
\end{equation*}
it holds that
\begin{equation*}
    \mathcal{E}_{\bar{P}_{\vI}}(h) \leq 32C_{0}^{2}\left(C_{\beta}\cdot\epsilon(n_{\vI},\delta)\right)^{1/(2-\beta)} .
\end{equation*}
\end{lemma}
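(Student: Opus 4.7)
The plan is to exploit the shared optimal classifier to lift the Bernstein class condition to the averaged distribution $\bar{P}_{\vI}$, and then combine this with the uniform Bernstein concentration in Lemma~\ref{lem:uniform-bernstein-non-identical-distributions} via a self-bounding argument. First I would verify that $\bar{P}_{\vI}$ itself satisfies a Bernstein class condition with the same parameters $(C_{\beta},\beta)$. Since all $P_{t}$ with $t\in\vI$ share the optimal $h^{*}$ (Assumption (1)), Jensen's inequality applied to the concave map $x\mapsto x^{\beta}$ yields
\[
    \bar{P}_{\vI}(h\neq h^{*}) = \frac{1}{n_{\vI}}\sum_{t\in\vI}n_{t}P_{t}(h\neq h^{*}) \leq \frac{C_{\beta}}{n_{\vI}}\sum_{t\in\vI}n_{t}\bigl(\mathcal{E}_{P_{t}}(h)\bigr)^{\beta} \leq C_{\beta}\bigl(\mathcal{E}_{\bar{P}_{\vI}}(h)\bigr)^{\beta}.
\]

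Next I would decompose $\mathcal{E}_{\bar{P}_{\vI}}(h) = \mathcal{E}_{\bar{P}_{\vI}}(h,\hat{h}_{Z_{\vI}}) + \mathcal{E}_{\bar{P}_{\vI}}(\hat{h}_{Z_{\vI}})$ and control the two pieces separately. For the ERM term, using $\hat{\mathcal{E}}_{Z_{\vI}}(\hat{h}_{Z_{\vI}},h^{*})\leq 0$ together with Lemma~\ref{lem:uniform-bernstein-non-identical-distributions} on the pair $(\hat{h}_{Z_{\vI}},h^{*})$, the empirical-to-expected pseudo-distance bound, and the averaged Bernstein condition yields the self-bounding inequality
\[
    \mathcal{E}_{\bar{P}_{\vI}}(\hat{h}_{Z_{\vI}}) \leq C_{0}\sqrt{\bigl(2C_{\beta}(\mathcal{E}_{\bar{P}_{\vI}}(\hat{h}_{Z_{\vI}}))^{\beta}+C_{0}\epsilon\bigr)\epsilon}+C_{0}\epsilon,
\]
whose solution is $\mathcal{E}_{\bar{P}_{\vI}}(\hat{h}_{Z_{\vI}})=O\bigl((C_{\beta}\epsilon)^{1/(2-\beta)}\bigr)$. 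For the first piece, Lemma~\ref{lem:uniform-bernstein-non-identical-distributions} applied to $(h,\hat{h}_{Z_{\vI}})$ combined with the hypothesized Bernstein-ball condition immediately gives
\[
    \mathcal{E}_{\bar{P}_{\vI}}(h,\hat{h}_{Z_{\vI}}) \leq 2C_{0}\sqrt{\hat{P}_{Z_{\vI}}(h\neq \hat{h}_{Z_{\vI}})\,\epsilon}+2C_{0}\epsilon.
\]
A triangle inequality $\hat{P}_{Z_{\vI}}(h\neq \hat{h}_{Z_{\vI}})\leq \hat{P}_{Z_{\vI}}(h\neq h^{*})+\hat{P}_{Z_{\vI}}(h^{*}\neq \hat{h}_{Z_{\vI}})$ together with the empirical-to-expected bound $\hat{P}_{Z_{\vI}}(g\neq g')\leq 2\bar{P}_{\vI}(g\neq g')+C_{0}\epsilon$ and the averaged Bernstein condition converts the RHS into an expression in $\mathcal{E}_{\bar{P}_{\vI}}(h)$ and $\mathcal{E}_{\bar{P}_{\vI}}(\hat{h}_{Z_{\vI}})$ only.

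Substituting the already-proved bound on $\mathcal{E}_{\bar{P}_{\vI}}(\hat{h}_{Z_{\vI}})$ reduces everything to a self-bounding inequality of the form $x\lesssim \sqrt{C_{\beta}\,x^{\beta}\epsilon}+(\text{lower-order})$, which by the elementary fact that $x\leq a\,x^{\beta/2}+b$ implies $x\leq \max\{2b,(2a)^{2/(2-\beta)}\}$ delivers the claimed $\mathcal{E}_{\bar{P}_{\vI}}(h)\leq 32C_{0}^{2}(C_{\beta}\epsilon)^{1/(2-\beta)}$ after collecting constants. The qualitative fast-rate exponent $1/(2-\beta)$ is the standard output of any Bernstein-class analysis; the main technical nuisance is tracking the constants carefully across the two self-bounding chains so that the prefactor lands at exactly $32C_{0}^{2}$, and a secondary subtlety is that the entire lift of the Bernstein condition to $\bar{P}_{\vI}$ hinges on Assumption (1)—without a common optimal classifier the Jensen step fails and this strategy breaks down.
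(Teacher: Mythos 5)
Your proposal correctly reconstructs the standard localization argument that underlies this lemma, which the present paper imports by citation from \citet{hanneke2022no} rather than reproving. The key steps you identify are all sound: the mixture $\bar{P}_{\vI}$ inherits the Bernstein class condition with the same $(C_{\beta},\beta)$ precisely because Assumption (1) gives a common risk minimizer (so $\mathcal{E}_{\bar{P}_{\vI}}(h)=n_{\vI}^{-1}\sum_{t}n_{t}\mathcal{E}_{P_{t}}(h)$, which is what makes the Jensen step land on the right quantity); the decomposition $\mathcal{E}_{\bar{P}_{\vI}}(h)=\mathcal{E}_{\bar{P}_{\vI}}(h,\hat{h}_{Z_{\vI}})+\mathcal{E}_{\bar{P}_{\vI}}(\hat{h}_{Z_{\vI}})$ together with $\hat{\mathcal{E}}_{Z_{\vI}}(\hat{h}_{Z_{\vI}},h^{*})\leq 0$ is exactly the right way to invoke Lemma~\ref{lem:uniform-bernstein-non-identical-distributions} twice; and the final step is the familiar solution of $x\leq a x^{\beta/2}+b$. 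The only caveats are bookkeeping ones you already flag: the precise factor $32C_{0}^{2}$ requires tracking every $2\times$ loss from the empirical-to-population pseudo-distance conversion and the triangle inequality, and one must absorb the additive $C_{0}\epsilon$ terms into the $\epsilon^{1/(2-\beta)}$ scale (which is fine since $\epsilon\leq\epsilon^{1/(2-\beta)}$ for $\epsilon\leq1$). This matches, in structure, the proof in the cited source.
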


\begin{lemma}
  \label{lem:target-is-always-envolved}
Consider the multitask learning setting defined in Section~\ref{sec:preliminaries}, for any $\vI\subseteq[N]$, let $n_{\vI}:=\sum_{t\in\vI}n_{t}$ and $\bar{P_{\vI}}:=n_{\vI}^{-1}\sum_{t\in\vI}n_{t}P_{t}$. For any $\delta\in(0,1)$, on the event (from Lemma \ref{lem:uniform-bernstein-non-identical-distributions} with $S_{n}=Z_{\vI}:=\bigcup_{t\in\vI}Z_{t}$) with probability at least $1-\delta$, we have for any $h^{*}_{\bar{P}_{\vI}}\in\argmin_{h\in\hpc}\mathcal{E}_{\bar{P}_{\vI}}(h)$, 
\begin{equation*}
    \hat{\mathcal{E}}_{Z_{\vI}}(h^{*}_{\bar{P}_{\vI}},\hat{h}_{Z_{\vI}}) \leq C_{0}\sqrt{\hat{P}_{Z_{\vI}}(h^{*}_{\bar{P}_{\vI}}\neq \hat{h}_{Z_{\vI}})\epsilon(n_{\vI},\delta)} + C_{0}\cdot\epsilon(n_{\vI},\delta) .
\end{equation*}
\end{lemma}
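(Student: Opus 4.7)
The plan is to derive this as a one-step corollary of the uniform concentration in Lemma~\ref{lem:uniform-bernstein-non-identical-distributions}, using only the fact that $h^{*}_{\bar{P}_\vI}$ minimizes the error on the sample-weighted mixture $\bar{P}_\vI = n_\vI^{-1}\sum_{t\in\vI}n_t P_t$ (so that $\mathcal{E}_{\bar{P}_\vI}(\hat{h}_{Z_\vI})\ge 0$).

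First I would record the elementary identity that, because $Z_\vI$ is a concatenation of independent samples drawn from each $P_t$ for $t\in\vI$, $\E[\hat{\text{er}}_{Z_\vI}(h)] = n_\vI^{-1}\sum_{t\in\vI} n_t\,\text{er}_{P_t}(h) = \text{er}_{\bar{P}_\vI}(h)$ for every fixed $h\in\hpc$, and hence $\E[\hat{\mathcal{E}}_{Z_\vI}(h,h')] = \mathcal{E}_{\bar{P}_\vI}(h,h')$ for every fixed pair $h,h'\in\hpc$. Then I would invoke Lemma~\ref{lem:uniform-bernstein-non-identical-distributions} with $S_n = Z_\vI$ to fix an event of probability at least $1-\delta$ on which, simultaneously for all $h, h'\in\hpc$,
\[
\E[\hat{\mathcal{E}}_{Z_\vI}(h, h')] \;\leq\; \hat{\mathcal{E}}_{Z_\vI}(h, h') + C_0\sqrt{\hat{P}_{Z_\vI}(h\neq h')\,\epsilon(n_\vI, \delta)} + C_0\,\epsilon(n_\vI, \delta).
\]

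Because this inequality is uniform over $\hpc\times\hpc$, on the good event I can specialize it to the (data-dependent) choice $h \leftarrow \hat{h}_{Z_\vI}$ together with the fixed choice $h' \leftarrow h^{*}_{\bar{P}_\vI}$. By the identity above, the left-hand side then equals $\mathcal{E}_{\bar{P}_\vI}(\hat{h}_{Z_\vI})$, which is nonnegative by the optimality of $h^{*}_{\bar{P}_\vI}$. Rearranging the resulting inequality, and using the antisymmetry $\hat{\mathcal{E}}_{Z_\vI}(h^{*}_{\bar{P}_\vI},\hat{h}_{Z_\vI}) = -\hat{\mathcal{E}}_{Z_\vI}(\hat{h}_{Z_\vI},h^{*}_{\bar{P}_\vI})$ together with the symmetry of the empirical pseudo-distance, yields exactly the stated bound.

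I do not anticipate a real obstacle: the argument mirrors the proof of Lemma~\ref{lem:excess-risk-of-any-concept-in-the-confidence-set}, but applies the uniform bound in the reverse direction, certifying that $h^{*}_{\bar{P}_\vI}$ always lies in the empirical confidence set rather than extracting a risk upper bound from membership in it. The only subtlety worth flagging is that substituting the random $\hat{h}_{Z_\vI}$ into the bound is legitimate precisely because Lemma~\ref{lem:uniform-bernstein-non-identical-distributions} is uniform over all pairs in $\hpc\times\hpc$; once this is noted, even the ERM property of $\hat{h}_{Z_\vI}$ is unnecessary — the desired inequality is produced entirely by the concentration inequality and the population-side optimality of $h^{*}_{\bar{P}_\vI}$.
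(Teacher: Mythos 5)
Your proposal is correct and follows essentially the same route as the paper's proof: invoke the uniform concentration from Lemma~\ref{lem:uniform-bernstein-non-identical-distributions} with $S_n = Z_\vI$ (using $\E[\hat{\mathcal{E}}_{Z_\vI}(h,h')]=\mathcal{E}_{\bar{P}_\vI}(h,h')$), specialize to the pair $(\hat{h}_{Z_\vI}, h^*_{\bar{P}_\vI})$, drop the nonnegative term $\mathcal{E}_{\bar{P}_\vI}(\hat{h}_{Z_\vI})\ge 0$, and rearrange via the antisymmetry of the empirical excess risk. Your side remark that the ERM property of $\hat{h}_{Z_\vI}$ is not actually needed is also accurate and matches what the paper's argument implicitly relies on.
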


\begin{proof}[Proof of Lemma \ref{lem:target-is-always-envolved}]
For any $\vI\subseteq[N]$, let $n_{\vI}:=\sum_{t\in\vI}n_{t}$ and $\bar{P_{\vI}}:=n_{\vI}^{-1}\sum_{t\in\vI}n_{t}P_{t}$. On the event from Lemma \ref{lem:uniform-bernstein-non-identical-distributions} with $S_{n}=Z_{\vI}$, with probability of at least $1-\delta$ we have for any $h,h^{\prime}\in\hpc$,
\begin{equation}
  \label{eq:lem-algorithm-foundation-eq1}
    \mathcal{E}_{\bar{P}_{\vI}}(h,h^{\prime}) \leq \hat{\mathcal{E}}_{Z_{\vI}}(h,h^{\prime}) + C_{0}\sqrt{\min\left\{\bar{P}_{\vI}(h\neq h^{\prime}),\hat{P}_{Z_{\vI}}(h\neq h^{\prime})\right\}\epsilon(n_{\vI},\delta)} + C_{0}\cdot\epsilon(n_{\vI},\delta) ,
\end{equation}
which follows from the fact that $\E[\hat{\mathcal{E}}_{Z_{\vI}}(h,h^{\prime})]=\mathcal{E}_{\bar{P}_{\vI}}(h,h^{\prime})$ and $\E[\hat{P}_{Z_{\vI}}(h\neq h^{\prime})]=\bar{P}_{\vI}(h\neq h^{\prime})$. Now we plug into \eqref{eq:lem-algorithm-foundation-eq1} with $h=\hat{h}_{Z_{\vI}}$, $h^{\prime}=h^{*}_{\bar{P}_{\vI}}$ and get
\begin{equation*}
    \mathcal{E}_{\bar{P}_{\vI}}(\hat{h}_{Z_{\vI}}) \leq \hat{\mathcal{E}}_{Z_{\vI}}(\hat{h}_{Z_{\vI}},h^{*}_{\bar{P}_{\vI}}) + C_{0}\sqrt{\min\left\{\bar{P}_{\vI}(\hat{h}_{Z_{\vI}}\neq h^{*}_{\bar{P}_{\vI}}),\hat{P}_{Z_{\vI}}(\hat{h}_{Z_{\vI}}\neq h^{*}_{\bar{P}_{\vI}})\right\}\epsilon(n_{\vI},\delta)} + C_{0}\cdot\epsilon(n_{\vI},\delta) .
\end{equation*}
Note that $\hat{\mathcal{E}}_{Z_{\vI}}(\hat{h}_{Z_{\vI}},h^{*}_{\bar{P}_{\vI}})=-\hat{\mathcal{E}}_{Z_{\vI}}(h^{*}_{\bar{P}_{\vI}},\hat{h}_{Z_{\vI}})$ and thus
\begin{align*}
    \hat{\mathcal{E}}_{Z_{\vI}}(h^{*}_{\bar{P}_{\vI}},\hat{h}_{Z_{\vI}}) \stackrel{\text{\eqmakebox[lem-algorithm-foundation-a][c]{}}}{\leq} &C_{0}\sqrt{\min\left\{\bar{P}_{\vI}(\hat{h}_{Z_{\vI}}\neq h^{*}_{\bar{P}_{\vI}}),\hat{P}_{Z_{\vI}}(\hat{h}_{Z_{\vI}}\neq h^{*}_{\bar{P}_{\vI}})\right\}\epsilon(n_{\vI},\delta)} + C_{0}\cdot\epsilon(n_{\vI},\delta) - \mathcal{E}_{\bar{P}_{\vI}}(\hat{h}_{Z_{\vI}}) \\
    \stackrel{\text{\eqmakebox[lem-algorithm-foundation-a][c]{}}}{\leq} &C_{0}\sqrt{\min\left\{\bar{P}_{\vI}(\hat{h}_{Z_{\vI}}\neq h^{*}_{\bar{P}_{\vI}}),\hat{P}_{Z_{\vI}}(\hat{h}_{Z_{\vI}}\neq h^{*}_{\bar{P}_{\vI}})\right\}\epsilon(n_{\vI},\delta)} + C_{0}\cdot\epsilon(n_{\vI},\delta) \\
    \stackrel{\text{\eqmakebox[lem-algorithm-foundation-a][c]{}}}{\leq} &C_{0}\sqrt{\hat{P}_{Z_{\vI}}(\hat{h}_{Z_{\vI}}\neq h^{*}_{\bar{P}_{\vI}})\epsilon(n_{\vI},\delta)} + C_{0}\cdot\epsilon(n_{\vI},\delta) ,
\end{align*}
where the second inequality follows from that the excess risk is always non-negative.
\end{proof}


\end{document}